\pdfoutput=1
\documentclass[letterpaper,11pt]{article}
\usepackage[margin=1in]{geometry}
\usepackage[utf8]{inputenc}
\usepackage[T1]{fontenc}
\usepackage{lmodern}
\usepackage[english]{babel}
\usepackage{graphicx} 
\usepackage{booktabs} 
\usepackage{natbib} 
\usepackage{amsmath} 
\usepackage{amssymb} 
\usepackage{setspace} 
\usepackage{caption} 
\usepackage[CJKbookmarks=true,
            bookmarksnumbered=true,
            bookmarksopen=true,
            colorlinks=true,
            citecolor=red,
            linkcolor=blue,
            anchorcolor=red,
            urlcolor=blue
            ]{hyperref}
\usepackage{graphicx}
\usepackage[table]{xcolor}
\usepackage{tikz}
\usetikzlibrary{arrows.meta,positioning,calc}
\usepackage{adjustbox}
\usepackage{booktabs}
\usepackage{tabularx}
\usepackage{tcolorbox}
\tcbuselibrary{breakable}
\usepackage{fvextra}
\DefineVerbatimEnvironment{PromptVerb}{Verbatim}{
  breaklines=true,
  breakanywhere=true,
  fontsize=\scriptsize,
  baselinestretch=0.95,
  commandchars=\\\{\}
}
\newtcolorbox{promptbox}[2][]{
  breakable,
  colback=black!1,
  colframe=black!15,
  boxrule=0.45pt,
  arc=1mm,
  left=1mm,
  right=1mm,
  top=0.8mm,
  bottom=0.8mm,
  title={#2},
  fonttitle=\bfseries\small,
  #1
}
\usepackage{amsthm}
\newtheorem{theorem}{Theorem}
\newtheorem{lemma}[theorem]{Lemma}

\newtheorem{proposition}[theorem]{Proposition}

\newtheorem{remark}{Remark}

\newcommand{\maA}{\mathcal{A}}

\newcommand{\maE}{\mathcal{E}}

\newcommand{\maG}{\mathcal{G}}

\newcommand{\maM}{\mathcal{M}}

\newcommand{\maS}{\mathcal{S}}

\newcommand{\maX}{\mathcal{X}}

\newcommand{\prob}[1]{ \mathbb{P}\left[ #1 \right] }
\newcommand{\expect}[1]{ \mathbb{E}\left[ #1 \right] }

\newcommand{\pth}[1]{\left( #1 \right)}
\newcommand{\qth}[1]{\left[ #1 \right]}
\newcommand{\sth}[1]{\left\{ #1 \right\}}

\newcommand{\st}{\mathrm{st}}
\renewcommand{\P}{{\mathbb{P}}}
\newcommand{\ti}{\tilde}

\newcommand{\Bin}{\mathrm{Bin}}

\newcommand{\inv}{\mathrm{inv}}

\let\oldfootnote\footnote
\renewcommand{\footnote}{\fontsize{9}{11}\selectfont\oldfootnote}

\title{Prompt Perturbation for Reliable LLM Evaluation over Comparison Graphs}
\author{Dong Huang, Jianbo Sun, and Pengkun Yang\thanks{
D.\ Huang, J.\ Sun, and P.\ Yang are with the Department of Statistics and Data Science, Tsinghua
University. P. Yang is supported in part by National Key R\&D Program
of China 2024YFA1015800, Tsinghua
University Dushi Program, and High Performance Computing Center, Tsinghua University.
}}
\date{} 

\begin{document}

\maketitle

\begin{abstract}
Evaluating large language models (LLMs) is important for understanding their capabilities, comparing competing systems, and supporting the deployment of reliable models in practice. For open-ended tasks, pairwise evaluation has become a popular paradigm, in which two responses to the same prompt are compared and the resulting judgments are aggregated into an overall ranking. A central challenge of this paradigm is intransitivity: the induced comparison outcomes may fail to support any coherent global ranking. For example, one may observe cyclic preferences such as $A \succ B \succ C \succ A$, or inconsistencies involving ties such as $A \equiv B\equiv C\not\equiv A$. Such contradictions make the resulting leaderboard unstable and challenging to interpret. In this paper, we propose a prompt perturbation framework for improving the consistency of pairwise LLM evaluation. Our approach generates perturbed variants of each prompt, uses the resulting comparison graphs to identify and filter out structurally inconsistent comparison patterns, and then applies standard ranking methods to the filtered comparisons. A key feature of the proposed framework is that graph-level structural consistency is incorporated explicitly into the evaluation pipeline before ranking aggregation. This provides a simple and principled way to reduce cyclic inconsistencies and improve the reliability of LLM rankings.
\end{abstract}


\begin{keywords}
    {LLM-as-a-judge, prompt perturbation, ranking, intransitivity}
\end{keywords}
\tableofcontents



\section{Introduction}


The rapid progress of large language models (LLMs) as general-purpose systems for complex tasks~\citep{achiam2023gpt,touvron2023llama,guo2025deepseek} raises an important challenge: how to establish a gold standard for LLM evaluation.
The standard pipeline for evaluating LLMs relies on benchmarks~\citep{hendrycks2020measuring,cobbe2021training, srivastava2023beyond} that assess model performance across a collection of representative tasks and aggregate the results into summary scores. 
Although many existing benchmarks have been proposed for LLM evaluation, they often fail to capture the full range of model capabilities, especially in open-ended and domain-specific settings. In many practical scenarios, the goal of LLM evaluation is not merely to produce a universal ranking over all models, but rather to identify the most suitable model for a specific domain or application.

To overcome the limitations on LLM evaluation, an increasingly common strategy is to evaluate LLMs through pairwise comparisons~\citep{zheng2023judging,chiang2024chatbot,dubois2024length}, where two responses to the same prompt are judged relative to each other. Such judgments may be provided either by human annotators or by a strong judge model. Although human evaluation remains the most direct way to assess open-ended generation, it is costly and challenging to scale. This has led to the widespread adoption of LLM-as-a-judge methods, in which a powerful model automatically compares candidate outputs. Empirical evidence suggests that such judge models can align well with human preferences; for example, \cite{zheng2023judging} reported that GPT-4 achieves agreement with human judgments at a level comparable to human annotators themselves. In many existing pairwise evaluation pipelines, each target model is compared only against a fixed reference model~\citep{li2023alpacaeval,dubois2024length}, which yields an efficient but potentially unstable ranking procedure. In particular, rankings may depend heavily on the choice of the reference. As a result, more recent frameworks increasingly rely on broader pairwise comparisons among many models and then aggregate these comparison outcomes into an overall ranking.
A convenient way to view modern pairwise LLM evaluation is as a four-stage pipeline, in which pairwise judgments are first collected and then aggregated through an induced comparison graph.
Traditional pairwise LLM evaluation typically consists of the following four steps:
\begin{enumerate}
    \item \emph{Input.} A collection of evaluation inputs such as prompts.
    \item \emph{LLM-as-a-judge.} For each input, two candidate responses are compared by a judge model, which returns a pairwise preference or a tie.
    \item \emph{Comparison graph.} These pairwise outcomes are then aggregated into a comparison graph over models.
    \item \emph{Ranking method.} A ranking procedure is finally applied to this graph to produce an overall ordering of the models.
\end{enumerate}
Among these four components, the comparison graph plays a central role, since it serves as the intermediate object through which local pairwise judgments are translated into a global ranking.

Recent work has shown that pairwise LLM evaluation can exhibit substantial intransitivity~\citep{xu2025investigating,wang2025trustjudge}, so the induced comparison graph may fail to admit any coherent global ranking. In particular, the comparison outcomes may contain both cycles $(A \succ B \succ C \succ A)$ and equivalence contradictions $(A \equiv B \equiv C \not\equiv A)$, making the resulting leaderboard unstable. Indeed, the prevalence of cycles in the comparison graph is a key indicator of ranking reliability~\citep{kenyon2007rank}, as a larger fraction of cycles suggests greater inconsistency with any global ordering. This raises a natural question: 

\emph{How can we design a pairwise evaluation pipeline that improves the structural consistency of the induced comparison graph?}

In this paper, we propose a prompt perturbation framework for reducing cyclic inconsistencies in pairwise LLM evaluation. Specifically, for each input prompt, we construct a collection of perturbed prompts and use them to obtain a richer set of pairwise judgments among candidate models. These judgments induce a family of comparison graphs, one for each perturbed prompt. We then evaluate the structural consistency of each graph through its cycle ratio, and retain only those graphs whose cycle ratio is below a prescribed threshold. By filtering out highly inconsistent comparison graphs prior to aggregation, the proposed procedure improves the global consistency of the resulting ranking. Finally, based on the retained comparison outcomes, we perform ranking aggregation using standard models for pairwise comparison data, such as the Bradley--Terry model~\citep{bradley1952rank} and the Davidson model~\citep{davidson1970extending}. 
Specifically, for each prompt $x_i$, we generate $m$ perturbed variants $x_{i,1},\cdots,x_{i,m}$. Each of the original and perturbed prompts then induces a comparison graph, denoted by $G_i,G_{i,1},\cdots,G_{i,m}$.
A key feature of our method is the explicit use of structural consistency at the comparison graph level as a criterion for filtering before ranking aggregation. This filtering step is intended to discard prompt instances that induce highly unstable local preference structures before downstream ranking aggregation.

Indeed, prompt perturbation is a widely used methodology for probing and improving the robustness of LLM systems. The basic idea is to replace a given input prompt by a collection of semantically similar variants, such as paraphrases~\citep{sun2023evaluating}, reordered instructions~\citep{pezeshkpour2024large}, or other meaning-preserving perturbations~\citep{qiang2024prompt}, and then examine the stability of the resulting outputs. This paradigm has been extensively adopted in recent studies on prompt robustness and evaluation, which show that LLM behavior can vary substantially under seemingly minor prompt changes, and that perturbation-based procedures provide an effective way to expose and control such instability~\citep{zhu2023promptrobust,qiang2024prompt}. 

\subsection{Related Work}

\paragraph{LLM-as-a-judge} The rapid progress of large language models (LLMs)~\citep{achiam2023gpt} has driven remarkable advances across a wide range of applications, which in turn has motivated the development of the ``LLM-as-a-judge'' paradigm~\citep{wang2023chatgpt,liu2023g}. In this paradigm, LLMs are used to evaluate model outputs by assigning scores, producing rankings, or selecting preferred responses. More recently, dedicated judge models have also been developed to support this paradigm, including fine-tuned or specialized evaluators such as JudgeLM and Prometheus~\citep{zhu2023judgelm,kim2024prometheus}. Beyond evaluation, LLM-as-a-judge has also been adopted as a scalable source of supervision in other stages of LLM development, including alignment~\citep{lee2023rlaif}, retrieval~\citep{upadhyay2024large}, and reasoning~\citep{chen2025judgelrm}. These developments highlight the growing role of LLM judges not only as evaluation tools, but also as general-purpose components in the broader LLM pipeline.

\paragraph{Intransitivity in LLM Ranking.}
Recent work has shown that pairwise LLM evaluation may exhibit substantial intransitivity, so there may be no coherent global ranking consistent with the induced comparison graph. In particular, \cite{xu2025investigating} showed that intransitive preferences in AlpacaEval~\citep{dubois2024length} make system rankings sensitive to the choice of the reference baseline, and advocated broader pairwise comparison designs with Bradley--Terry aggregation~\citep{bradley1952rank}.
Furthermore, \cite{wang2025trustjudge} identified systematic inconsistencies in LLM-as-a-judge pipelines, including cyclic preferences and contradictions involving ties. More broadly, recent work on LLM ranking has emphasized the importance of robust comparison design and aggregation for obtaining reliable leaderboards~\citep{daynauth2025ranking,gao2025re}. These findings suggest that intransitivity is a central challenge in pairwise LLM ranking and motivate methods that explicitly improve the consistency of the induced comparison graph.

\paragraph{Ranking Methods}
Classic methods for ranking from pairwise comparison graphs can be divided into three categories. The simplest approach is rank-by-wins (Copeland-style scoring)~\citep{coleman1966possibility}, which orders items according to the number of pairwise wins. 
A more principled line of work assigns latent scores to items based on pairwise outcomes. A widely used example is the Elo rating system~\citep{elo1978rating}, which updates scores sequentially after each comparison, while the Bradley--Terry model~\citep{bradley1952rank} provides the standard probabilistic formulation for paired comparisons and the Davidson model~\citep{davidson1970extending} further extends Bradley--Terry to allow ties. 
A complementary graph-based approach is Rank Centrality~\citep{negahban2017rank}, which constructs a Markov chain from the comparison graph and ranks items via its stationary distribution; conceptually, it is closely related to PageRank~\citep{page1999pagerank}, and its statistical guarantees depend explicitly on graph properties such as connectivity and spectral structure.

In LLM evaluation, these classical ranking methods are primarily used as aggregation tools for pairwise judgments. Recent work compared ranking algorithms such as Elo, Bradley--Terry, Glicko, and Markov-chain methods in head-to-head LLM evaluation, emphasizing their differences in transitivity, stability, and sensitivity~\citep{daynauth2025ranking}. In addition, ~\cite{gao2025re} decomposed automatic LLM system ranking into several pipeline components and showed that the final leaderboard depends not only on the judge model, but also on the aggregation method itself. This line of work mainly studies how to aggregate a given set of pairwise outcomes, whereas our focus is on improving the comparison graph itself before aggregation by reducing cyclic inconsistency.

\paragraph{Prompt Perturbation.}
Prompt perturbation has been widely used to study and improve the robustness of LLM systems. Recent work has shown that LLM behavior can vary substantially even under minor prompt changes, making prompt perturbation a useful tool for exposing sensitivity and improving robustness~\citep{qiang2024prompt,agrawal2025enhancing}. Relatedly, perturbation-based analysis has also been applied to LLM-based evaluators, showing that evaluation pipelines themselves can be sensitive to prompt variations and other input perturbations~\citep{chaudhary2024towards}. In contrast to this line of work, which mainly studies robustness at the response or evaluator level, our focus is on how prompt perturbation can be used to improve the structural consistency of the induced comparison graph in pairwise LLM evaluation.

\subsection{Our Contribution}

We study the problem of ranking large language models (LLMs) from pairwise evaluation data. A central challenge in this setting is that the induced comparison graph may exhibit substantial intransitivity, including both cyclic preferences and inconsistencies involving ties, which makes the resulting leaderboard unstable. To address this issue, we propose a prompt perturbation framework for improving the global consistency of pairwise LLM evaluation.

Our framework enriches the evaluation process by constructing multiple perturbed variants of each input prompt and collecting pairwise comparison outcomes under these perturbed prompts. Each perturbed prompt induces a comparison graph over the candidate models. We quantify the structural consistency of these graphs through their cycle ratios, and retain only those graphs whose cycle ratios fall below a prescribed threshold. Based on the retained comparison outcomes, we then perform ranking aggregation using standard models for pairwise comparison data, such as the Bradley--Terry model and the Davidson model.

The main contribution of this work is to incorporate graph-level structural consistency explicitly into the LLM evaluation pipeline before ranking aggregation; see Figure~\ref{fig:pipeline-compare} for an illustration. This yields a simple and principled procedure for filtering out highly inconsistent local comparison structures and reducing cyclic inconsistencies in the induced comparison graph. More broadly, our work highlights a graph-based perspective on LLM evaluation, in which the quality of a ranking depends not only on how pairwise judgments are aggregated, but also on how the underlying comparison graph is constructed.

\begin{figure}[htbp]
\centering
\begin{adjustbox}{width=0.77\linewidth}
\begin{tikzpicture}[
  >=stealth,
  line cap=round,
  line join=round,
  font=\large,
  title/.style={font=\large\bfseries},
  flow/.style={->, very thick},
  boxA/.style={
    rectangle, rounded corners=2mm,
    draw=blue!70, fill=blue!6,
    very thick, align=center, inner sep=4pt
  },
  boxB/.style={
    rectangle, rounded corners=2mm,
    draw=green!60!black, fill=green!8,
    very thick, align=center, inner sep=4pt
  },
  gnode/.style={circle, draw=black, fill=black, inner sep=1.4pt},
  garrow/.style={->, line width=0.85pt}
]

\node at (-3.8,-5.2) {Traditional Method};

\node[boxA, minimum width=3.2cm, minimum height=1.1cm] (LP) at (-3.8,4.7) {Prompt};

\node[boxA, minimum width=4.2cm, minimum height=2.2cm] (LG) at (-3.8,0.55) {};
\node at ($(LG.north)+(0,-0.38)$) {\large Comparison Graph};

\begin{scope}[shift={(-3.8,0.25)}]
  \node[gnode] (l1) at ( 90:0.52) {};
  \node[gnode] (l2) at ( 18:0.52) {};
  \node[gnode] (l3) at (-54:0.52) {};
  \node[gnode] (l4) at (-126:0.52) {};
  \node[gnode] (l5) at (162:0.52) {};

  \draw[garrow] (l1) -- (l2);
  \draw[garrow] (l1) -- (l3);
  \draw[garrow] (l1) -- (l4);
  \draw[garrow] (l1) -- (l5);
  \draw[garrow] (l2) -- (l3);
  \draw[garrow] (l2) -- (l4);
  \draw[garrow] (l2) -- (l5);
  \draw[garrow] (l3) -- (l4);
  \draw[garrow] (l3) -- (l5);
  \draw[garrow] (l4) -- (l5);
\end{scope}

\node[boxA, minimum width=3.2cm, minimum height=1.1cm] (LR) at (-3.8,-3.6) {Ranking};

\draw[flow] (LP.south) -- (LG.north);
\draw[flow] (LG.south) -- (LR.north);

\draw[->, ultra thick] (-1,0.55) -- (0.2,0.55);

\node at (4.9,-5.2) {Our Method};

\node[boxA, minimum width=3.2cm, minimum height=1.1cm] (RP) at (4.9,4.7) {Prompt};

\node[boxB, minimum width=3.2cm, minimum height=1.1cm] (RM) at (4.9,2.9) {Prompt Perturbation};

\node[boxA, minimum width=8.4cm, minimum height=2.2cm] (RG) at (4.9,0.55) {};
\node at ($(RG.north)+(0,-0.38)$) {\large Comparison Graphs};

\draw[flow] (RM.south) -- (RG.north);
\draw[flow] (RP.south) -- (RM.north);

\begin{scope}[shift={(2.35,0.25)}]
  \node[gnode] (A1) at ( 90:0.52) {};
  \node[gnode] (A2) at ( 18:0.52) {};
  \node[gnode] (A3) at (-54:0.52) {};
  \node[gnode] (A4) at (-126:0.52) {};
  \node[gnode] (A5) at (162:0.52) {};

  \draw[garrow] (A1) -- (A2);
  \draw[garrow] (A1) -- (A3);
  \draw[garrow] (A1) -- (A4);
  \draw[garrow] (A1) -- (A5);
  \draw[garrow] (A2) -- (A3);
  \draw[garrow] (A2) -- (A4);
  \draw[garrow] (A2) -- (A5);
  \draw[garrow] (A3) -- (A4);
  \draw[garrow] (A3) -- (A5);
  \draw[garrow] (A4) -- (A5);
\end{scope}

\begin{scope}[shift={(3.65,0.25)}]
  \node[gnode] (B1) at ( 90:0.52) {};
  \node[gnode] (B2) at ( 18:0.52) {};
  \node[gnode] (B3) at (-54:0.52) {};
  \node[gnode] (B4) at (-126:0.52) {};
  \node[gnode] (B5) at (162:0.52) {};

  \draw[garrow] (B1) -- (B2);
  \draw[garrow] (B3) -- (B1); 
  \draw[garrow] (B1) -- (B4);
  \draw[garrow] (B1) -- (B5);
  \draw[garrow] (B2) -- (B3);
  \draw[garrow] (B2) -- (B4);
  \draw[garrow] (B5) -- (B2); 
  \draw[garrow] (B3) -- (B4);
  \draw[garrow] (B3) -- (B5);
  \draw[garrow] (B4) -- (B5);
\end{scope}

\begin{scope}[shift={(4.95,0.25)}]
  \node[gnode] (C1) at ( 90:0.52) {};
  \node[gnode] (C2) at ( 18:0.52) {};
  \node[gnode] (C3) at (-54:0.52) {};
  \node[gnode] (C4) at (-126:0.52) {};
  \node[gnode] (C5) at (162:0.52) {};

  \draw[garrow] (C1) -- (C2);
  \draw[garrow] (C1) -- (C3);
  \draw[garrow] (C4) -- (C1); 
  \draw[garrow] (C1) -- (C5);
  \draw[garrow] (C3) -- (C2); 
  \draw[garrow] (C2) -- (C4);
  \draw[garrow] (C2) -- (C5);
  \draw[garrow] (C3) -- (C4);
  \draw[garrow] (C5) -- (C3); 
  \draw[garrow] (C4) -- (C5);
\end{scope}

\begin{scope}[shift={(6.25,0.25)}]
  \node[gnode] (D1) at ( 90:0.52) {};
  \node[gnode] (D2) at ( 18:0.52) {};
  \node[gnode] (D3) at (-54:0.52) {};
  \node[gnode] (D4) at (-126:0.52) {};
  \node[gnode] (D5) at (162:0.52) {};

  \draw[garrow] (D2) -- (D1); 
  \draw[garrow] (D1) -- (D3);
  \draw[garrow] (D1) -- (D4);
  \draw[garrow] (D1) -- (D5);
  \draw[garrow] (D2) -- (D3);
  \draw[garrow] (D2) -- (D4);
  \draw[garrow] (D2) -- (D5);
  \draw[garrow] (D3) -- (D4);
  \draw[garrow] (D3) -- (D5);
  \draw[garrow] (D5) -- (D4); 
\end{scope}

\begin{scope}[shift={(7.55,0.25)}]
  \node[gnode] (E1) at ( 90:0.52) {};
  \node[gnode] (E2) at ( 18:0.52) {};
  \node[gnode] (E3) at (-54:0.52) {};
  \node[gnode] (E4) at (-126:0.52) {};
  \node[gnode] (E5) at (162:0.52) {};

  \draw[garrow] (E1) -- (E2);
  \draw[garrow] (E1) -- (E3);
  \draw[garrow] (E1) -- (E4);
  \draw[garrow] (E5) -- (E1); 
  \draw[garrow] (E2) -- (E3);
  \draw[garrow] (E4) -- (E2); 
  \draw[garrow] (E2) -- (E5);
  \draw[garrow] (E4) -- (E3); 
  \draw[garrow] (E3) -- (E5);
  \draw[garrow] (E4) -- (E5);
\end{scope}


\node[boxB, minimum width=3.6cm, minimum height=1.1cm] (INT) at (4.9,-1.8) {Integration};
\draw[flow] (RG.south) -- (INT.north);

\node[boxA, minimum width=3.2cm, minimum height=1.1cm] (RR) at (4.9,-3.6) {Ranking};
\draw[flow] (INT.south) -- (RR.north);

\end{tikzpicture}
\end{adjustbox}
\caption{Comparison between the traditional pipeline and our prompt-perturbation-based aggregation framework.}
\label{fig:pipeline-compare}
\end{figure}

\section{Statistical Models}

\subsection{Problem Formulation}
We consider the problem of ranking a collection of large language models (LLMs) based on pairwise comparisons. Let $\maM = \sth{M_1,M_2,\cdots,M_n}$ denote the set of candidate models and $\maX = \sth{x_1,x_2,\cdots,x_t}$ denote a collection of evaluation prompts. For each \(x_i\in\maX\), we compare the responses produced by any two models \(M_j,M_{j'}\in\maM\) across all pairs $1\le j<j'\le n$ through a judge model and record the comparison outcome. We then obtain a directed comparison graph $G_i$ with vertex set $V(G_i)$ and edge set $E(G_i)$ defined as
$$V(G_i) = \sth{1,2,\cdots,n},\quad E(G_i) = \sth{(j,j'):j\succ j'\text{ under prompt }x_i}.$$
Here, we write $j\succ j'$ if the judge prefers the response of $M_j$ to that of $M_{j'}$. We also allow both $(j,j')$ and $(j',j)$ to be present in $E(G_i)$, corresponding to the case where the comparison between $M_j$ and $M_{j'}$ results in a tie. In this case, we write $j\sim j'$. {In the subsequent cycle-counting step, such ties are represented by reciprocal directed edges, so they are included in the enumeration of directed triangles and quadrilaterals; cycles formed purely by mutual ties are later subtracted from the bad-cycle count. See Section~\ref{subsec:truncation} for further details.}
In Section~\ref{subsec:rank}, we first consider the Bradley--Terry model, where ties are not allowed and each comparison yields a strict preference between the two models. We then introduce the Davidson model, which extends this framework by explicitly allowing ties in pairwise comparisons. In practice, ties are sometimes handled heuristically by assigning each model 0.5 win and 0.5 loss; see Section~\ref{sec:methods} for further details.

Let $\mathbf{W}\in\mathbb{R}^{n\times n}$, where $\mathbf{W}_{ij}$ denotes the number of times that model $ M_i$ is preferred to model $ M_j$ over all pairwise comparisons, for each $1\le i\neq j\le n$. Similarly, let $\mathbf{T}\in\mathbb{R}^{n\times n}$, where $\mathbf{T}_{ij}$ denotes the number of ties between models $M_i$ and $M_j$. These quantities are aggregated from the collection of comparison graphs $\maG = (G_1,G_2,\cdots,G_t)$, where each $G_i$ corresponds to the comparison graph induced by prompt $x_i$.
We assume that there exists a latent score vector $\theta=(\theta_1,\theta_2,\cdots,\theta_n)$, which determines the pairwise comparison probabilities through $\P_{i\succ j}(\theta_i,\theta_j)$, $\P_{i\prec j}(\theta_i,\theta_j)$, and $\P_{i\sim j}(\theta_i,\theta_j)$. 
For each pair of models $(M_i,M_j)$, the observed comparison outcomes across prompts fall into three categories: $M_i\succ M_j$, $M_i\prec M_j$, and $M_i\sim M_j$. Accordingly, the likelihood function is defined as
\begin{align}\label{eq:likelihood}
    \mathcal{L}(\theta; \mathcal{G},\mathbf{W},\mathbf{T})
    =
    \prod_{1\le i<j\le n}
    \frac{N_{ij}!}{\mathbf{W}_{ij}!\,\mathbf{W}_{ji}!\,\mathbf{T}_{ij}!}
    \P_{i\succ j}(\theta_i,\theta_j)^{\mathbf{W}_{ij}}
    \P_{i\prec j}(\theta_i,\theta_j)^{\mathbf{W}_{ji}}
    \P_{i\sim j}(\theta_i,\theta_j)^{\mathbf{T}_{ij}},
\end{align}
where
\(
N_{ij}=\mathbf{W}_{ij}+\mathbf{W}_{ji}+\mathbf{T}_{ij}
\)
is the total number of comparisons between models $M_i$ and $M_j$. In the no-tie setting, we have $\mathbf{T}_{ij}=0$ for all $1\le i,j\le n$. The estimator $\hat{\theta}$ is then obtained by maximizing the log-likelihood function $\ell(\theta)=\log \mathcal{L}(\theta; \mathcal{G},\mathbf{W},\mathbf{T})$.

\subsection{Ranking Models}\label{subsec:rank}

The most widely used probabilistic model for pairwise comparison was first proposed by~\cite{zermelo1929berechnung} and later rediscovered by~\cite{bradley1952rank}. This model assumes the existence of a latent score vector \(\theta=(\theta_1,\theta_2,\cdots,\theta_n)\), and specifies the win and loss probabilities as
\begin{align}\label{eq:bradley}
    \P_{i\succ j}(\theta_i,\theta_j)
    = \frac{e^{\theta_i}}{e^{\theta_i}+e^{\theta_j}},
    \quad
    \P_{i\prec j}(\theta_i,\theta_j)
    = \frac{e^{\theta_j}}{e^{\theta_i}+e^{\theta_j}}.
\end{align}
Consequently, the log-likelihood function \(\ell(\theta)\) is given, up to an additive constant, by
\begin{align*}
    \ell(\theta)
    \propto
    \sum_{1\le i<j\le n}
    \left[
    -\mathbf W_{ij}\log\bigl(1+e^{\theta_j-\theta_i}\bigr)
    -\mathbf W_{ji}\log\bigl(1+e^{\theta_i-\theta_j}\bigr)
    \right].
\end{align*}
The maximum likelihood estimator is obtained by maximizing \(\ell(\theta)\) subject to an identifiability constraint, for example \(\sum_{i=1}^n \theta_i=0\) or $\theta_1 = 0$, since only pairwise score differences are identifiable in the Bradley--Terry model. The objective function is concave in \(\theta\), so the estimation problem can be solved efficiently by standard convex optimization methods. In practice, the estimator can be computed by standard iterative methods such as Newton’s method~\citep{turner2012bradley}, fixed-point iteration~\citep{yan2015asymptotic}, or minorization–maximization algorithms~\citep{hunter2004mm}. After obtaining the estimator \(\widehat{\theta}\), a global ranking of the \(n\) items is induced by ordering the estimated scores.

Note that the Bradley--Terry model does not allow ties, where \(\P_{i\sim j}(\theta_i,\theta_j)\equiv 0\) for all \(1\le i<j\le n\). Therefore, it is not well suited to applications where tied outcomes may occur with non-negligible probability such as LLM evaluation. 
Although one common heuristic is to treat each tie as half a win and half a loss for each model (see, e.g.,~\cite{chiang2024chatbot}), it is more natural to consider probabilistic models that explicitly incorporate ties.

We now introduce the Davidson model~\citep{davidson1970extending}, which extends the Bradley--Terry model by incorporating an additional parameter $\nu$ to account for ties:
\begin{align*}
    \P_{i\succ j}(\theta_i,\theta_j) &= \frac{e^{\theta_i}}{e^{\theta_i}+e^{\theta_j}+\nu\sqrt{e^{\theta_i}e^{\theta_j}}},\\
    \P_{i\prec j}(\theta_i,\theta_j) &= \frac{e^{\theta_j}}{e^{\theta_i}+e^{\theta_j}+\nu\sqrt{e^{\theta_i}e^{\theta_j}}},\\
    \P_{i\sim j}(\theta_i,\theta_j) &= \frac{\nu\sqrt{e^{\theta_i}e^{\theta_j}}}{e^{\theta_i}+e^{\theta_j}+\nu\sqrt{e^{\theta_i}e^{\theta_j}}},
\end{align*}
where $\nu\ge 0$.
In particular, when $\nu=0$, the Davidson model reduces to the Bradley--Terry model.
The parameter $\nu$ controls the relative likelihood of tied outcomes, with larger $\nu$ corresponding to a higher probability of tied outcomes. The log-likelihood function is obtained by substituting the above probabilities into~\eqref{eq:likelihood}. The maximum likelihood estimator is then defined as the solution to the resulting optimization problem under the constraint  $\nu\ge 0$, and can be computed by standard optimization methods.

However, the Bradley--Terry and Davidson models are fundamentally score-based and hence transitivity-oriented aggregation models. When a comparison graph contains many short directed cycles, the data exhibit a large cyclic component that cannot be well explained by any single global score vector. In such cases, directly fitting a Bradley--Terry or Davidson model forces incompatible local preferences into a transitive parametric form, which may lead to unstable or distorted rankings~\citep{spearing2023modeling}. This motivates us to adopt a truncation method on comparison graphs with large cycle counts; see Section~\ref{subsec:truncation} for further details.

\section{Methods}\label{sec:methods}



\subsection{Prompt Perturbation and Graph Truncation}\label{subsec:truncation}

Prompt perturbation is a commonly used strategy for improving the robustness of LLM evaluation, since judgments obtained from a single prompt template can be sensitive to superficial wording differences even when the underlying evaluation objective remains unchanged. Recent work has shown that single prompt evaluation can be brittle, and that more reliable conclusions can often be obtained by comparing model behavior across multiple semantically equivalent prompt formulations~\citep{mizrahi2024state,maia2024efficient,chaudhary2024towards}. The underlying idea is to replace one prompt template with several semantic-preserving rephrasings, thereby reducing prompt-specific noise in the final judgment. 

For each original comparison prompt, we generate multiple candidate perturbations, retain those that preserve the semantic meaning of the original task, and use them to obtain more stable pairwise judgments for the subsequent construction of the comparison graph. Specifically, for any original prompt $x_i\in\maX$ with $1\le i\le t$, let $x_{i,1},\ldots,x_{i,m}$ denote the perturbed prompts generated from $x_i$.
In our implementation, the perturbations are generated by GPT-5.2~\citep{openai2025gpt52}; see Section~\ref{subsec:experiment-setup} for details. Ideally, the generation prompt should request semantically equivalent rephrasings that preserve the original intent, constraints, entities, and requirements while changing only the surface form.
For each $1\le j\le m$, let $G_{i,j}$ be the comparison graph induced by prompt $x_{i,j}$. Without loss of generality, we further define $x_{i,0}=x_i$ and $G_{i,0}=G_i$. The resulting collection of comparison graphs contains richer information than a single graph, and we will leverage this additional information to perform a more robust ranking.

We then introduce our integration methods based on the comparison graphs $\{G_{i,j}:1\le i\le t,\ 0\le j\le m\}$. Recent studies suggest that inconsistency is closely related to the reliability of LLM evaluation~\citep{xu2025investigating,wang2025trustjudge}. A natural graph-based measure of such inconsistency is the abundance of cycles in the comparison graph. Since longer cycles are more likely to arise from accumulated noise and combinatorial effects due to graph size, we focus on small cycles, whose presence provides more direct evidence of instability in evaluation outcomes. Accordingly, our methods are based on truncation on 3-cycle counts and 4-cycle counts.

For any $1\le i\le t$ and $0\le j\le m$, let $C^3_{i,j}$ and $C^4_{i,j}$ denote the numbers of 3-cycles and 4-cycles in $G_{i,j}$, respectively. We then truncate the collection of comparison graphs $\{G_{i,j}\}$ based on these cycle counts, retaining only graphs with relatively few cycles. Specifically, fix $1\le K\le t(m+1)$ and $\lambda=(\lambda_3,\lambda_4)\in\mathbb{R}^2$, and assign each graph $G_{i,j}$ the score $\lambda_3 C^3_{i,j}+\lambda_4 C^4_{i,j}$. We sort all comparison graphs in increasing order of this score, and retain the first $K$ graphs. We denote the selected graphs by $G_1^\lambda,\dots,G_K^\lambda$ and the corresponding prompts by $x_1^\lambda,\cdots,x_K^\lambda$. We then use these selected graphs for the subsequent ranking step. Since they contain fewer short cycles, they are expected to exhibit less local inconsistency and hence provide more stable evidence for ranking.

{  
Ties are handled directly in this cycle-counting step. A tie between two models can be represented by two reciprocal directed edges. This allows us to enumerate directed triangles and quadrilaterals using a unified directed-graph representation. However, cycles formed entirely by mutual ties should not be treated as harmful preference cycles. We therefore also count the pure-tie cycles, denoted by \(C^{3,\mathrm{tie}}_{i,j}\) and \(C^{4,\mathrm{tie}}_{i,j}\), using only reciprocal edges, and define the bad-cycle counts as
\[
C^{3,\mathrm{bad}}_{i,j}
=
C^3_{i,j}-C^{3,\mathrm{tie}}_{i,j},
\qquad
C^{4,\mathrm{bad}}_{i,j}
=
C^4_{i,j}-C^{4,\mathrm{tie}}_{i,j}.
\]
The truncation score is then computed from these bad-cycle counts, so that cycles caused only by mutual ties are not penalized as preference inconsistency.
}

\begin{remark}[Connection to rejection sampling]\label{rmk:reject-sampling}
Our prompt perturbation and cycle truncation procedure may be viewed as an accept--reject mechanism, conceptually related to rejection sampling. Let $S_K$ denote the $K$-th order statistic of
\(
\{\lambda_3 C_{i,j}^3+\lambda_4 C_{i,j}^4: 1\le i\le t,\ 0\le j\le m\}.
\)
Then the truncation step can be written as
\[
\delta_{i,j}=\mathbf{1}\{\lambda_3 C_{i,j}^3+\lambda_4 C_{i,j}^4\le S_K\},
\]
so that the final ranking is constructed from the retained sample $\{G_{i,j}:\delta_{i,j}=1\}$. This provides an accept--reject perspective on our method: prompt perturbation generates multiple graph-valued proposals, while cycle truncation retains those that are structurally more consistent and hence potentially closer to the latent preference structure. 
\end{remark}

\subsection{Ranking}
In this section, we introduce the details of the ranking procedure. Given $G_1^\lambda,\dots,G_K^\lambda$, we aggregate the pairwise comparison outcomes over the selected prompts and derive the matrices $\mathbf W$ and $\mathbf T$. Specifically, for any $1\le i\neq j\le n$, let
\begin{align}
    \label{eq:def-of-W}\mathbf W_{ij}&=\left|\{1\le k\le K:\text{ model }i\succ\text{model }j\text{ under prompt }x_k^\lambda\}\right|\\\label{eq:def-of-T}\mathbf T_{ij}&=\left|\{1\le k\le K:\text{ model }i\sim\text{model }j\text{ under prompt }x_k^\lambda\}\right|
\end{align}
 By construction, we have $\mathbf W_{ij}+\mathbf W_{ji}+\mathbf T_{ij}=K$ for all $1\le i\neq j\le n$. The matrix $\mathbf W$ records the pairwise wins, while $\mathbf T$ records the ties, and together they summarize the comparison evidence extracted from the retained graphs.

\paragraph{Bradley--Terry model.} Since the Bradley--Terry model in~\eqref{eq:bradley} does not explicitly accommodate ties, we use the modified matrix $\tilde{\mathbf W}$ defined by $\tilde{\mathbf W}_{ij}=\mathbf W_{ij}+\tfrac{1}{2}\mathbf T_{ij}$ for all $1\le i\neq j\le n$, where each tie is split equally between the two competing models. We then maximize the corresponding log-likelihood $$\ell(\theta)\propto \sum_{1\le i<j\le n}\left[-\tilde{\mathbf W}_{ij}\log\bigl(1+e^{\theta_j-\theta_i}\bigr)-\tilde{\mathbf W}_{ji}\log\bigl(1+e^{\theta_i-\theta_j}\bigr)\right]$$ under the identifiability constraint $\theta_1 = 0$, since shifting all components of $\theta$ by the same constant does not change the optimal solution. The resulting estimator $\hat\theta=(\hat\theta_1,\dots,\hat\theta_n)$ provides the latent skill scores of the models, and we rank the models according to these estimated scores.

\paragraph{Davidson model.} 

Recall that, in Section~\ref{subsec:rank}, we introduced the Davidson model, which explicitly incorporates ties. As a result, we can directly maximize the corresponding log-likelihood
\begin{align*}
    \ell(\theta)\propto& \sum_{1\le i<j\le n}\Bigg[-\mathbf W_{ij}\log \Big(\frac{e^{\theta_i}}{e^{\theta_i}+e^{\theta_j}+\nu \sqrt{e^{\theta_i}e^{\theta_j}}}\Big)-\mathbf W_{ji}\log \Big(\frac{e^{\theta_j}}{e^{\theta_i}+e^{\theta_j}+\nu \sqrt{e^{\theta_i}e^{\theta_j}}}\Big)\\&~~~~~~~~~~~~-\mathbf T_{ij}\log \Big(\frac{\nu \sqrt{e^{\theta_i}e^{\theta_j}}}{e^{\theta_i}+e^{\theta_j}+\nu \sqrt{e^{\theta_i}e^{\theta_j}}}\Big)\Bigg]
\end{align*}
under the constraint $\nu\ge 0$. The resulting estimator then yields the latent scores $\hat\theta = (\hat \theta_1,\cdots,\hat \theta_n)$ of the models, and we rank the models accordingly.

\section{Experiments and Analysis}
\label{sec:experiments}
\subsection{Experimental Setup}\label{subsec:experiment-setup}
Unless otherwise stated, all main experimental results reported in this section are obtained with the half-tie Bradley--Terry ranker described in Section~\ref{sec:methods}. We implement the Davidson model as an alternative tie-aware ranker, but use it only as a supplementary robustness check; its optimization details are given in Appendix~\ref{app:Imple_detail}.

We evaluate model responses on MT-Bench~\citep{zheng2023judging}, a widely used multi-turn benchmark for open-ended conversational evaluation. MT-Bench contains 80 manually curated questions, with 10 questions in each of eight categories: writing, roleplay, extraction, reasoning, math, coding, STEM, and humanities/social science. It is designed to assess both multi-turn conversational ability and instruction following~\citep{zheng2023judging}, and has become a standard testbed in the literature for evaluating chat-oriented LLMs~\citep{bai2024mt,chiang2024chatbot,laban2025llms,zhang2025reviseval}. We conduct our experiments on a set of $n=20$ target LLMs; see Table~\ref{tab:target-models} in Appendix~\ref{app:exper_setup} for the full list.
For each question, we keep the original prompt and generate four additional semantically equivalent perturbations using GPT-5.2~\citep{openai2025gpt52}, which gives five prompt instances per question (see the prompt template in Table~\ref{tab:prompt-perturbation-template}). 
{The generated perturbations are further filtered by a semantic-equivalence checker, whose prompt and filtering details are given in Appendix~\ref{app:prompt}.}
For each prompt instance, we collect pairwise judgments for all $\binom{n}{2} = 190$ model pairs and construct a directed comparison graph. Since MT-Bench contains 8 categories, 10 questions per category, and 5 comparison graphs per question, each judge yields \(8 \times 10 \times 5 = 400\) comparison graphs.

\begin{table}[htbp]
\centering
\caption{Prompt template for semantic question perturbation.}
\label{tab:prompt-perturbation-template}
\small
\renewcommand{\arraystretch}{1.15}
\begin{tabular}{p{0.18\textwidth} p{0.75\textwidth}}
\hline
\textbf{Objective} & \textbf{Template} \\
\hline
Semantic question perturbation
&
For the question [QUERIED QUESTION], provide $m$ semantically equivalent paraphrases. 
\\
\hline
\end{tabular}
\end{table}

Following the LLM-as-a-judge paradigm, we use two judge models in our experiments: GPT-5~\citep{openai2025gpt5} and a locally deployed Prometheus-family evaluator, M-Prometheus-14B (abbreviated as Prometheus below). GPT-5 is a strong proprietary frontier model with improved instruction following and performance on complex, open-ended tasks, while M-Prometheus-14B serves as an open judge model in the same evaluation pipeline.

GPT-5 is queried through an API with a single pairwise-comparison prompt. Prometheus is run locally in a position-debiased hybrid mode. For each answer pair, we first run the pairwise prompt twice, once under the original answer order and once under the swapped order. If the two verdicts agree after mapping the reversed-order output back to the original answer identities, we keep that pairwise decision. Otherwise, we fall back to direct scoring under the same rubric and use the higher score as the final verdict, with equal scores recorded as ties. Appendix~\ref{app:prompt} provides the exact prompts and decoding settings.

For a graph $G$, truncation is based on the bad-cycle statistics introduced above:
\[
S_{\mu}(G)=C_{3}^{\mathrm{bad}}(G)+\mu C_{4}^{\mathrm{bad}}(G).
\]
We set $\mu=1$ by default. The keep-$K$ and $\mu$-sensitivity analyses below vary the retained-graph budget in the usual way. For the main baseline comparison in Table~\ref{tab:main_baselines}, however, we report resampled summaries rather than single point estimates. For our method, we first retain the 25 graphs with the smallest values of $S_\mu(G)$ within each category and then repeatedly draw 20 of these graphs without replacement. The sampling-only control uses the same $25\!\rightarrow\!20$ protocol on the no-perturb pool. The no-truncation baseline applies ordinary bootstrap to the full graph pool, the score-of-win baseline ranks 40-graph subsamples, and the random baseline averages 10 random 20-graph subsets from the 50-graph category pool. Unless otherwise stated, all reported means and confidence intervals are based on 100 outer resamples. After each resample, the selected graphs are aggregated into the matrices $\mathbf W$ and $\mathbf T$ defined in~\eqref{eq:def-of-W} and~\eqref{eq:def-of-T}, respectively.

To evaluate ranking quality, we compare each task-wise ranking with an external reference ranking derived from a fixed snapshot of the public Arena leaderboard data~\citep{chiang2024chatbot}. 
Our primary metric is the normalized Spearman $\rho$ distance,
\[
d_\rho(r,\tilde r)=\frac{1-\rho_S(r,\tilde r)}{2},
\quad
\rho_S(r,\tilde r)=1-\frac{6\sum_{i=1}^n (r_i-\tilde r_i)^2}{n(n^2-1)},
\]
which maps identical rankings to $0$ and reversed rankings to $1$~\citep{spearman1904proof}. Appendix~\ref{app:results} also reports Kendall tau, Spearman footrule, and Chebyshev distances. We treat these distances as ranking losses in the analysis.
We compare the proposed method with several baselines and ablations. The baselines include the single-prompt baseline (reported as \emph{Single$\rightarrow$20} in Table~\ref{tab:main_baselines}), the no-truncation baseline (\emph{NoTrunc boot}), the score-of-win baseline (\emph{ScoreWin40}), random subset averaging (\emph{Random50$\rightarrow$20$\times$10}), and the sampling-only baseline (\emph{Sample25$\rightarrow$20}). {We also report two block-wise variants, \emph{BlockTop2} and \emph{BlockTop1}. Here, a block refers to the set of comparison graphs generated from the same original prompt, including the original prompt graph and its perturbed variants. BlockTop2 and BlockTop1 retain at most two or one low-cycle graphs from each such block, respectively, and are included to test whether global top-\(K\) selection over-concentrates on a small number of original prompts.}
Appendix~\ref{app:baseline} gives the exact source-pool and resampling rule for each baseline.

\subsection{Ranking Performance}
\label{sec:results_analysis}

{  
\paragraph{Relation between cycle counts and ranking accuracy.}
We first examine whether cycle counts are empirically related to ranking accuracy before applying truncation. For each untruncated comparison graph, we compute the bad-cycle score \(S_1(G)=C_3^{\mathrm{bad}}(G)+C_4^{\mathrm{bad}}(G)\) and compare it with the normalized Spearman distance between the ranking induced by that graph and the Arena-derived reference ranking. As shown in Figure~\ref{fig:cycle-distance}, graphs with larger bad-cycle scores tend to have larger ranking distances under both judges, indicating that short-cycle inconsistency is positively associated with ranking error.

\begin{figure}[htbp]
\centering
\captionsetup{labelfont={color=black}, textfont={color=black}}
\arrayrulecolor{black}
\begin{minipage}[t]{0.48\textwidth}
    \centering
    \includegraphics[width=0.9\linewidth]{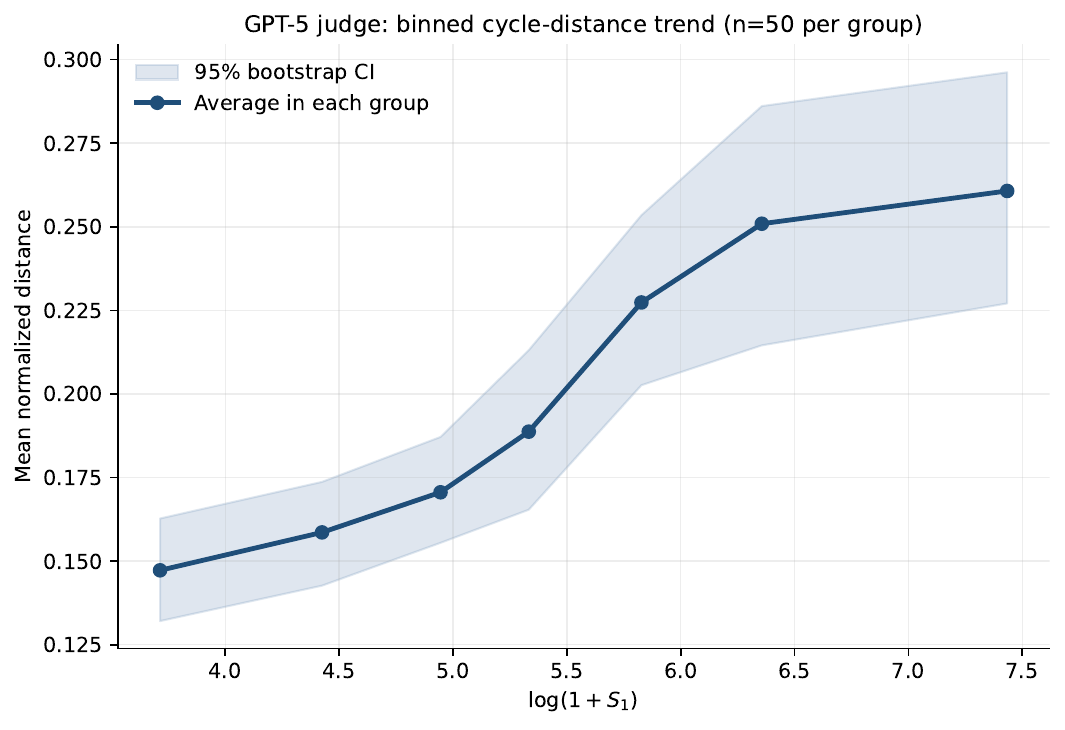}
\end{minipage}
\begin{minipage}[t]{0.48\textwidth}
    \centering
    \includegraphics[width=0.9\linewidth]{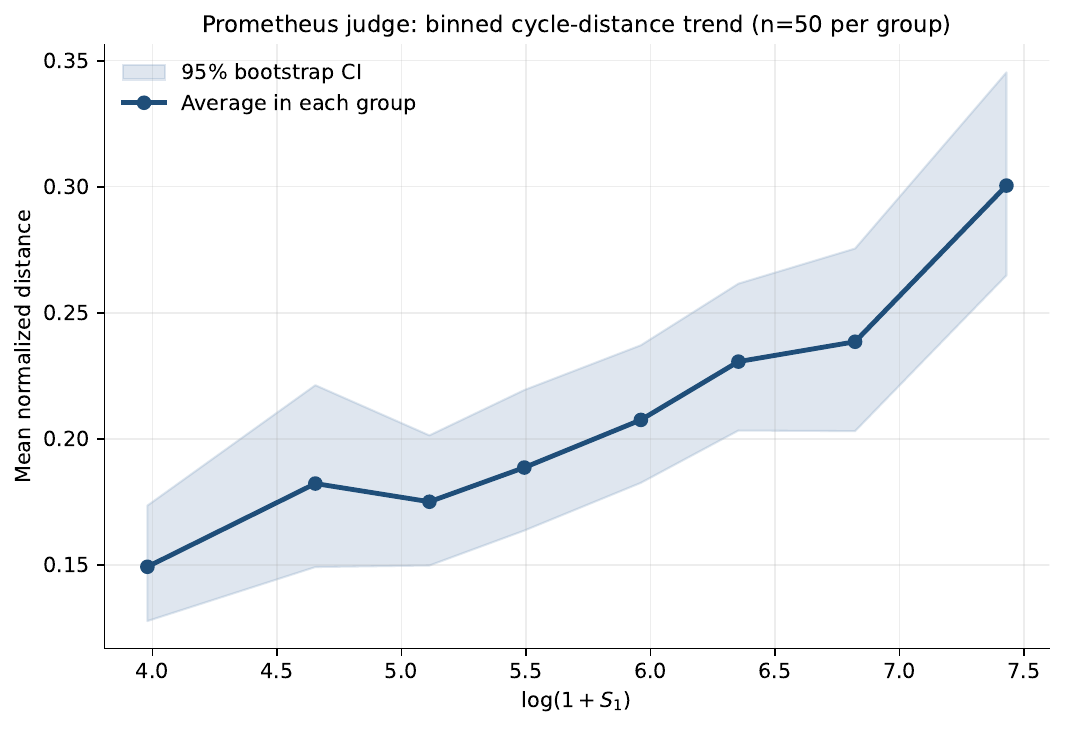}
\end{minipage}
\caption{Cycle-based score versus normalized Spearman ranking distance before truncation. The binned trends show a positive association between bad-cycle counts and ranking error for both GPT-5 and Prometheus judges.}
\label{fig:cycle-distance}
\captionsetup{labelfont=default, textfont=default}
\arrayrulecolor{black}
\end{figure}
}

\paragraph{Our method vs. baselines.}
Table~\ref{tab:main_baselines} reports the mean normalized Spearman $\rho$ distance together with its 95\% confidence interval for each judge--task--method cell.
{This task-wise reporting reflects our task-dependent evaluation setting: each task category is treated as a separate evaluation problem, and the macro-average summarizes performance across categories.}
Lower values indicate better agreement with the Arena-derived reference ranking. Boldface marks the lowest value within each task column and the lowest macro-average within each judge panel.

\begin{table*}[htbp]
\centering
\scriptsize
\setlength{\tabcolsep}{3pt}
\renewcommand{\arraystretch}{1.12}
\caption{Mean normalized Spearman $\rho$ distance with 95\% confidence intervals across 100 resamples. All values are reported in units of $10^{-3}$; for example, 70 means 0.070. Lower values indicate better agreement with the Arena-derived reference ranking.}
\label{tab:main_baselines}
\begin{tabular}{@{}lccccccccc@{}}
\hline
& \multicolumn{8}{c}{Normalized Spearman $\rho$ distance ($\times 10^{-3}$)} \\
\hline
Method & Coding & Extraction & Humanities & Math & Reasoning & Roleplay & STEM & Writing & Avg. \\
\hline
\multicolumn{10}{@{}l}{\textbf{GPT-5}} \\
\hline
Trunc25$\rightarrow$20
& 70 {\tiny$\pm$ 1}
& \textbf{49 {\tiny$\pm$ 2}}
& 61 {\tiny$\pm$ 1}
& \textbf{24 {\tiny$\pm$ 1}}
& 54 {\tiny$\pm$ 1}
& 44 {\tiny$\pm$ 1}
& 47 {\tiny$\pm$ 1}
& \textbf{102 {\tiny$\pm$ 1}}
& \textbf{56} \\

BlockTop2
& 73 {\tiny$\pm$ 0}
& 105 {\tiny$\pm$ 0}
& 53 {\tiny$\pm$ 0}
& 32 {\tiny$\pm$ 0}
& \textbf{37 {\tiny$\pm$ 0}}
& 65 {\tiny$\pm$ 0}
& 51 {\tiny$\pm$ 0}
& 142 {\tiny$\pm$ 0}
& 70 \\

BlockTop1
& \textbf{65 {\tiny$\pm$ 0}}
& 114 {\tiny$\pm$ 0}
& 59 {\tiny$\pm$ 0}
& 29 {\tiny$\pm$ 0}
& 44 {\tiny$\pm$ 0}
& \textbf{43 {\tiny$\pm$ 0}}
& 83 {\tiny$\pm$ 0}
& 139 {\tiny$\pm$ 0}
& 72 \\

Sample25$\rightarrow$20
& 121 {\tiny$\pm$ 1}
& 128 {\tiny$\pm$ 2}
& 121 {\tiny$\pm$ 1}
& 122 {\tiny$\pm$ 1}
& 132 {\tiny$\pm$ 2}
& 80 {\tiny$\pm$ 1}
& 95 {\tiny$\pm$ 1}
& 168 {\tiny$\pm$ 1}
& 121 \\

NoTrunc boot
& 75 {\tiny$\pm$ 2}
& 99 {\tiny$\pm$ 4}
& 44 {\tiny$\pm$ 2}
& 28 {\tiny$\pm$ 1}
& 51 {\tiny$\pm$ 2}
& 57 {\tiny$\pm$ 1}
& \textbf{46 {\tiny$\pm$ 1}}
& 142 {\tiny$\pm$ 3}
& 68 \\

ScoreWin40
& 79 {\tiny$\pm$ 2}
& 75 {\tiny$\pm$ 2}
& \textbf{36 {\tiny$\pm$ 2}}
& 25 {\tiny$\pm$ 1}
& 50 {\tiny$\pm$ 1}
& 59 {\tiny$\pm$ 2}
& 52 {\tiny$\pm$ 1}
& 141 {\tiny$\pm$ 2}
& 65 \\

Random50$\rightarrow$20$\times$10
& 76 {\tiny$\pm$ 1}
& 104 {\tiny$\pm$ 2}
& 46 {\tiny$\pm$ 1}
& 30 {\tiny$\pm$ 1}
& 54 {\tiny$\pm$ 1}
& 59 {\tiny$\pm$ 1}
& 48 {\tiny$\pm$ 1}
& 145 {\tiny$\pm$ 1}
& 70 \\

Single$\rightarrow$20
& 83 {\tiny$\pm$ 0}
& 74 {\tiny$\pm$ 0}
& 54 {\tiny$\pm$ 0}
& 39 {\tiny$\pm$ 0}
& 55 {\tiny$\pm$ 0}
& 57 {\tiny$\pm$ 0}
& 54 {\tiny$\pm$ 0}
& 168 {\tiny$\pm$ 0}
& 73 \\
\hline
\multicolumn{10}{@{}l}{\textbf{Prometheus}} \\
\hline
Trunc25$\rightarrow$20
& \textbf{52 {\tiny$\pm$ 1}}
& \textbf{55 {\tiny$\pm$ 4}}
& 90 {\tiny$\pm$ 1}
& \textbf{20 {\tiny$\pm$ 1}}
& 56 {\tiny$\pm$ 1}
& \textbf{53 {\tiny$\pm$ 2}}
& 72 {\tiny$\pm$ 2}
& 159 {\tiny$\pm$ 2}
& \textbf{70} \\

BlockTop2
& 53 {\tiny$\pm$ 0}
& 94 {\tiny$\pm$ 0}
& 86 {\tiny$\pm$ 0}
& \textbf{20 {\tiny$\pm$ 0}}
& 54 {\tiny$\pm$ 0}
& 92 {\tiny$\pm$ 0}
& \textbf{53 {\tiny$\pm$ 0}}
& 156 {\tiny$\pm$ 0}
& 76 \\

BlockTop1
& 54 {\tiny$\pm$ 0}
& 137 {\tiny$\pm$ 0}
& 91 {\tiny$\pm$ 0}
& 30 {\tiny$\pm$ 0}
& 70 {\tiny$\pm$ 0}
& 93 {\tiny$\pm$ 0}
& 56 {\tiny$\pm$ 0}
& 159 {\tiny$\pm$ 0}
& 86 \\

Sample25$\rightarrow$20
& 78 {\tiny$\pm$ 1}
& 182 {\tiny$\pm$ 3}
& 126 {\tiny$\pm$ 1}
& 99 {\tiny$\pm$ 1}
& 111 {\tiny$\pm$ 1}
& 99 {\tiny$\pm$ 1}
& 102 {\tiny$\pm$ 1}
& 174 {\tiny$\pm$ 2}
& 122 \\

NoTrunc boot
& 66 {\tiny$\pm$ 2}
& 118 {\tiny$\pm$ 10}
& 88 {\tiny$\pm$ 2}
& 41 {\tiny$\pm$ 2}
& 57 {\tiny$\pm$ 2}
& 79 {\tiny$\pm$ 3}
& 59 {\tiny$\pm$ 2}
& 159 {\tiny$\pm$ 3}
& 83 \\

ScoreWin40
& 60 {\tiny$\pm$ 1}
& 84 {\tiny$\pm$ 6}
& \textbf{85 {\tiny$\pm$ 1}}
& 35 {\tiny$\pm$ 1}
& \textbf{51 {\tiny$\pm$ 2}}
& 69 {\tiny$\pm$ 2}
& 55 {\tiny$\pm$ 1}
& 152 {\tiny$\pm$ 2}
& 74 \\

Random50$\rightarrow$20$\times$10
& 69 {\tiny$\pm$ 1}
& 132 {\tiny$\pm$ 3}
& 90 {\tiny$\pm$ 1}
& 43 {\tiny$\pm$ 1}
& 62 {\tiny$\pm$ 1}
& 80 {\tiny$\pm$ 1}
& 60 {\tiny$\pm$ 1}
& 162 {\tiny$\pm$ 1}
& 87 \\

Single$\rightarrow$20
& 83 {\tiny$\pm$ 0}
& 111 {\tiny$\pm$ 0}
& 86 {\tiny$\pm$ 0}
& 49 {\tiny$\pm$ 0}
& 98 {\tiny$\pm$ 0}
& 87 {\tiny$\pm$ 0}
& 113 {\tiny$\pm$ 0}
& \textbf{132 {\tiny$\pm$ 0}}
& 95 \\
\hline
\end{tabular}
\end{table*}
Table~\ref{tab:main_baselines} provides the main baseline comparison in a form that is easier to read quantitatively. Trunc25$\rightarrow$20 has the lowest macro-average under both judges: $0.056$ for GPT-5 and $0.070$ for Prometheus. It also achieves the lowest task-level mean on five of eight tasks under GPT-5 (Coding, Extraction, Math, Roleplay, and Writing) and on four of eight tasks under Prometheus (Coding, Extraction, Math, and Roleplay).

{  
The block-wise variants further examine whether global truncation over-selects perturbations from a small number of original prompts. BlockTop2 and BlockTop1 retain at most two or one graphs from each original-prompt block, respectively. They remain competitive in several tasks, but their macro-average distances are generally higher than Trunc25$\rightarrow$20, suggesting a trade-off between prompt-level diversity and retaining the strongest low-cycle comparison graphs.
}

The comparison between Trunc25$\rightarrow$20 and Sample25$\rightarrow$20 shows the value of semantic perturbation with cycle-aware filtering. Sample25$\rightarrow$20 is worse on every task for both judges, and its macro-average is more than twice as large as that of Trunc25$\rightarrow$20 under GPT-5 ($0.120$ vs.\ $0.056$) and clearly larger under Prometheus as well ($0.121$ vs.\ $0.070$). This pattern suggests that simply repeating stochastic sampling is not enough; the quality of the retained graphs matters.

The comparison with NoTrunc boot isolates the contribution of cycle-aware filtering. Both methods use the perturbed-prompt pool, but only Trunc25$\rightarrow$20 removes graphs with high short-cycle counts before ranking. Trunc25$\rightarrow$20 is better than NoTrunc boot on five of eight tasks for each judge, with especially large gains on Extraction, Roleplay, and Writing for GPT-5, and on Coding, Extraction, Math, and Roleplay for Prometheus. This supports the view that filtering improves the reliability of the retained comparison evidence, rather than merely changing the sample size.

The advantage is not uniform across all tasks, so the result should be stated with the right level of caution. Humanities is better served by ScoreWin40 under both judges, Reasoning is slightly better for ScoreWin40 under GPT-5 and Prometheus, STEM favors NoTrunc boot for GPT-5 and ScoreWin40 for Prometheus, and Writing under Prometheus is lowest for Single$\rightarrow$20. In other words, Trunc25$\rightarrow$20 is the most consistent method overall, but not the best method on every single category.

Finally, ScoreWin40 remains a competitive baseline. Its macro-average is the second-best under both judges, and it wins several individual tasks. This suggests that strong aggregation can already recover useful ranking information, but the best overall results still come from combining semantic perturbation with cycle-aware filtering before the final ranking step.

{   We report additional experiments in Appendix~\ref{app:results}. In particular, we evaluate the method on HumanEval and MATH, and also consider a judge-prompt perturbation setting where comparison graphs are generated by varying the judge instruction rather than the evaluated user prompt. These experiments provide complementary evidence beyond the main MT-Bench prompt-perturbation setting.}

{  
\paragraph{Different sources of improvement.}
To separate the effect of cycle-aware filtering from the effect of using a larger evaluation budget, we compare truncated and untruncated aggregation under matched source-graph budgets. For a budget of \(B\) source graphs per task, both methods use the same number of pairwise judge calls, namely \(B\binom{n}{2}\). The only difference is whether the retained graphs are selected by the cycle score \(S_\mu(G)\) or used without truncation.

\begin{figure}[htbp]
    \centering
    \captionsetup{labelfont={color=black}, textfont={color=black}}
    \includegraphics[width=0.9\linewidth]{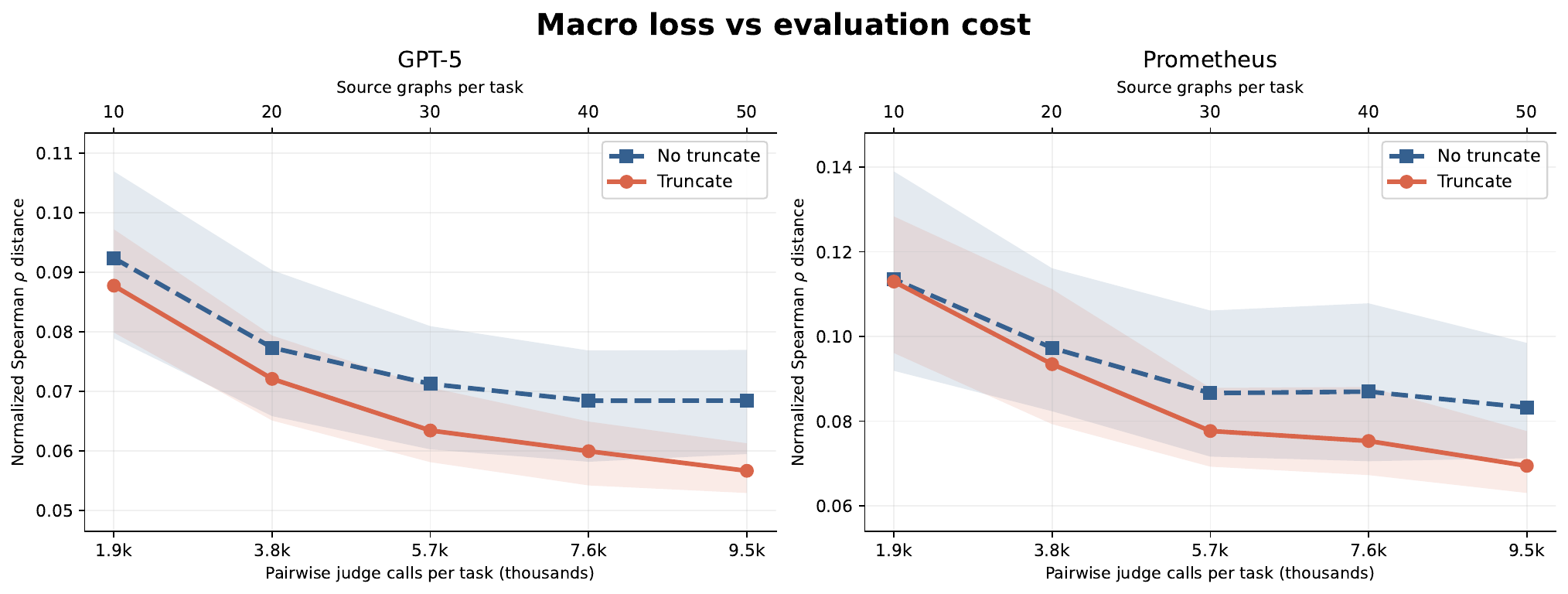}
    \caption{Macro-average normalized Spearman distance versus evaluation cost. Truncated and untruncated aggregation are compared under the same source-graph budget and the same number of pairwise judge calls.}
    \label{fig:cost-curve}
    \captionsetup{labelfont=default, textfont=default}
\end{figure}

Figure~\ref{fig:cost-curve} shows that cycle-aware truncation consistently achieves lower macro-average ranking loss than no truncation under the same evaluation cost for both GPT-5 and Prometheus. This indicates that the improvement is not simply due to using more perturbed prompts or more judge calls. Instead, selecting graphs with lower cycle inconsistency contributes additional gain by improving the quality of the retained comparison evidence.
}

{  \subsection{Diagnostics and Sensitivity Analyses}
\paragraph{Model diagnostics.}
For each judge--task--selection setting, we fit the half-tie Bradley--Terry model on the selected comparison graphs and evaluate it on held-out comparison graphs. Let
\(
\widetilde W_{ij}=W_{ij}+\frac12T_{ij}
\)
denote the half-tie win count. After fitting the Bradley--Terry scores on the selected data, the predicted held-out win probability is
\(
\widehat p_{ij}
=
\frac{\exp(\widehat\theta_i)}
{\exp(\widehat\theta_i)+\exp(\widehat\theta_j)} .
\)
The Bradley--Terry held-out loss is the average negative log-likelihood
\[
\mathcal L_{\mathrm{BT}}^{\mathrm{heldout}}
=
-\frac{1}{N_{\mathrm{heldout}}}
\sum_{i<j}
\left[
\widetilde W^{\mathrm{heldout}}_{ij}\log \widehat p_{ij}
+
\widetilde W^{\mathrm{heldout}}_{ji}\log(1-\widehat p_{ij})
\right].
\]
As a nonparametric pairwise baseline, we also compute an empirical held-out loss using the pairwise win rate estimated from the selected data,
\[
\widehat q_{ij}
=
\frac{\widetilde W^{\mathrm{selected}}_{ij}}
{\widetilde W^{\mathrm{selected}}_{ij}
+
\widetilde W^{\mathrm{selected}}_{ji}},
\]
with a small smoothing constant used when needed to avoid zero probabilities. Its held-out loss is
\[
\mathcal L_{\mathrm{emp}}^{\mathrm{heldout}}
=
-\frac{1}{N_{\mathrm{heldout}}}
\sum_{i<j}
\left[
\widetilde W^{\mathrm{heldout}}_{ij}\log \widehat q_{ij}
+
\widetilde W^{\mathrm{heldout}}_{ji}\log(1-\widehat q_{ij})
\right].
\]
We report
\[
\Delta_{\mathrm{cv}}
=
\mathcal L_{\mathrm{BT}}^{\mathrm{heldout}}
-
\mathcal L_{\mathrm{emp}}^{\mathrm{heldout}} .
\]
Thus, a positive value of $\Delta_{\mathrm{cv}}$ means that the Bradley--Terry score model predicts held-out pairwise comparisons worse than the empirical pairwise baseline, which we treat as evidence of possible model misspecification. 

For each task, we combine this held-out diagnostic with three in-sample goodness-of-fit diagnostics: normalized deviance, maximum standardized residual, and a cyclic residual~\citep{jiang2011statistical}. Each diagnostic produces a warning if it crosses its pre-specified threshold. We label a task as \emph{likely misspecified} if at least two diagnostics raise warnings, as \emph{possible misspecification} if exactly one diagnostic raises a warning, and as \emph{no strong warning} if none of the diagnostics raises a warning. Table~\ref{tab:bt-diagnostic} reports the number of tasks in each category, together with the average $\Delta_{\mathrm{cv}}$.

\begin{table}[htbp]
\centering
\captionsetup{labelfont={color=black}, textfont={color=black}}
  
\arrayrulecolor{black}
\caption{Bradley--Terry model diagnostic summary. ``Likely'', ``Possible'', and ``No strong'' count the number of tasks in each diagnostic category. A positive $\Delta_{\mathrm{cv}}$ indicates worse held-out prediction than an empirical pairwise baseline.}
\label{tab:bt-diagnostic}
\small
\begin{tabular}{llcccc}
\toprule
Judge & Selection & Likely & Possible & No strong & $\Delta_{\mathrm{cv}}$ \\
\midrule
GPT-5 & block top-$K$ & 0 & 2 & 6 & -0.007 \\
GPT-5 & global top-$K$ & 0 & 3 & 5 & -0.007 \\
GPT-5 & no truncation & 2 & 2 & 4 & -0.001 \\
Prometheus & block top-$K$ & 0 & 2 & 6 & -0.005 \\
Prometheus & global top-$K$ & 0 & 4 & 4 & -0.004 \\
Prometheus & no truncation & 0 & 3 & 5 & -0.003 \\
\bottomrule
\end{tabular}
\arrayrulecolor{black}
\captionsetup{labelfont=default, textfont=default}
\end{table}

Table~\ref{tab:bt-diagnostic} shows that truncation reduces Bradley--Terry misspecification warnings. For GPT-5, likely misspecified tasks drop from 2 under no truncation to 0 after truncation. For both judges, block top-$K$ yields the most tasks with no strong warning. The negative $\Delta_{\mathrm{cv}}$ values further indicate that the Bradley--Terry projection is not worse than the empirical pairwise baseline on held-out comparisons. Thus, truncation makes the retained data more compatible with score-based aggregation. When misspecification genuinely persists after truncation, our method should not be interpreted as recovering a true globally transitive preference order. The resulting  ranking is instead a score-based projection of the retained comparisons. We therefore recommend reporting the diagnostic flags, and using richer task-dependent or graph-based aggregation models when the residual cyclic component is substantively important.
}

{  
We emphasize that these diagnostics address internal consistency and score-model compatibility, not systematic judge bias. A judge with stable position, verbosity, or style bias may still produce an internally consistent comparison graph. We discuss this limitation in Appendix~\ref{app:prompt}; in particular, our Prometheus evaluation uses an \(A/B\) order-swapping procedure to reduce position bias.
}

\paragraph{Effect of the keep-$K$ budget under truncation.}
We next study how the final ranking changes with the keep-$K$ budget. Figure~\ref{fig:keepk_budget} plots the normalized Spearman $\rho$ distance as a function of \(K\), where the truncation score is \(C_3^{\mathrm{bad}}+C_4^{\mathrm{bad}}\). The curves exhibit a bias-variance trade-off. When \(K\) is too small, the final ranking is inferred from too little evidence, so the distance to the reference ranking is large and unstable. As \(K\) increases, the ranking loss falls sharply for both judges; the distance drops sharply before flattening from the smallest budgets to the best region. After that initial drop, the curves flatten out and the marginal gain becomes small.

\begin{figure}[htbp]
    \centering
    \includegraphics[width=0.6\linewidth]{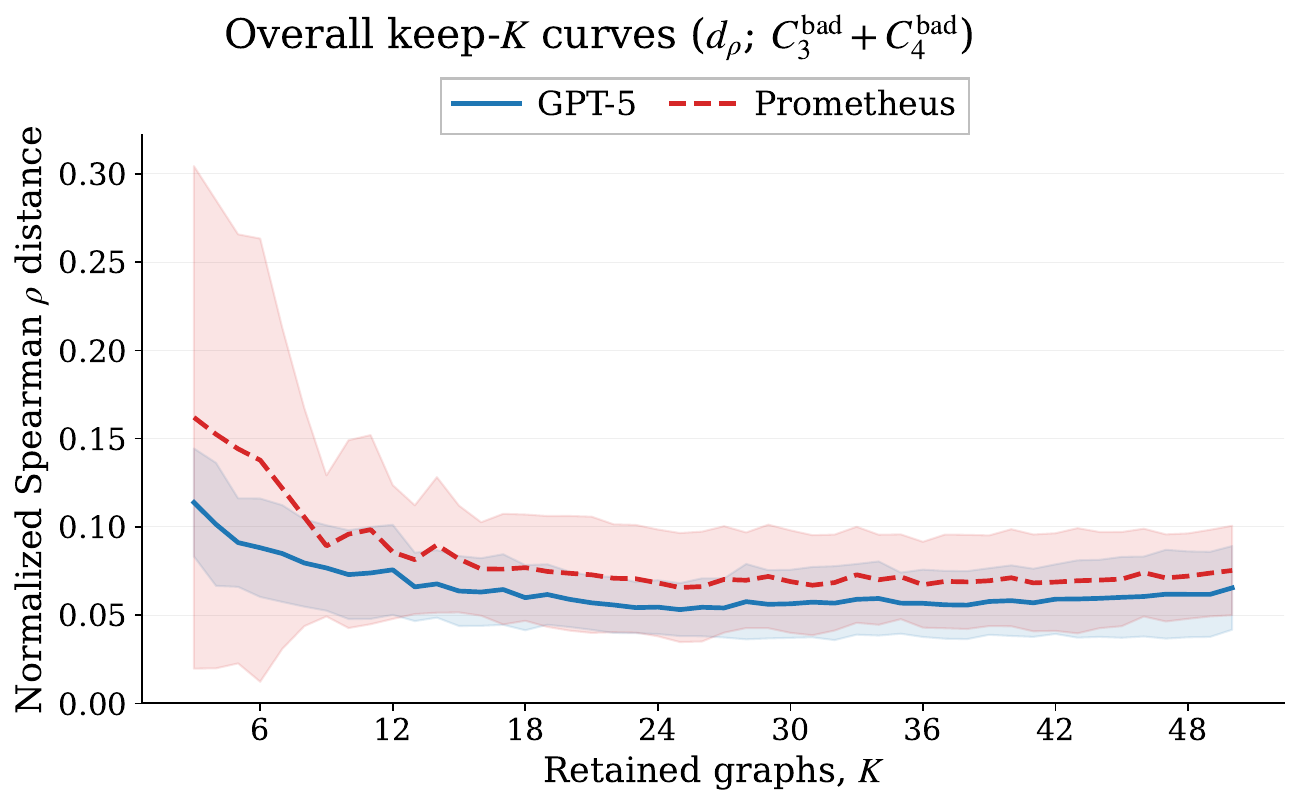}
    \caption{Effect of the keep-$K$ budget under cycle-aware truncation. The ranking loss decreases rapidly when $K$ grows from very small values, reaches its minimum in the middle of the range, and then plateaus or slightly rebounds as noisier graphs are added back.}
    \label{fig:keepk_budget}
\end{figure}

The best \(K\) lies in the middle of the admissible range, around \(K\approx 25\) for both judges. This behavior is intuitive. If \(K\) is too small, we discard too much information and the aggregated comparison matrix is too sparse to support a stable ranking. If \(K\) is too large, we begin to re-introduce graphs with higher bad-cycle counts, which are precisely the graphs most likely to inject inconsistent local evidence. In that regime, the improvement saturates and can even slightly reverse. This pattern provides a practical justification for our default choice \(K=25\): it sits near the bottom of the curve for both judges, while avoiding the unnecessary noise introduced by larger retained budgets.

A second observation is that the same qualitative shape appears for both judges, despite their different absolute error levels. GPT-5 remains uniformly stronger in absolute distance, but both judges favor an intermediate truncation budget rather than either extreme. This suggests that the keep-$K$ effect is a property of the comparison-graph construction itself, not an artifact of one particular judge.

{  \paragraph{Data-driven choice of \(K\).}
The keep-\(K\) budget can also be selected by a simple data-driven rule. We use grouped cross-validation over the original prompts, so that all perturbations of the same prompt are placed in the same fold. Let
\(\{1,\ldots,t\}=I_1\cup\cdots\cup I_B\). For fold \(b\), the training and validation graph sets are
\[
\mathcal G_{\mathrm{tr}}^{(b)}
=
\{G_{i,j}: i\notin I_b,\ 0\le j\le m\},
\qquad
\mathcal G_{\mathrm{va}}^{(b)}
=
\{G_{i,j}: i\in I_b,\ 0\le j\le m\}.
\]
For a candidate keep proportion \(\alpha\in(0,1]\), we retain the
\(
K^{(b)}(\alpha)
=
\left\lceil \alpha |\mathcal G_{\mathrm{tr}}^{(b)}| \right\rceil
\)
training graphs with the smallest truncation scores \(S_\mu(G)\), fit the Bradley--Terry model on them, and evaluate the held-out pairwise log loss on \(\mathcal G_{\mathrm{va}}^{(b)}\). Denote this validation loss by \(L_{\mathrm{va}}^{(b)}(\alpha)\). We then compute
\[
\overline L_{\mathrm{cv}}(\alpha)
=
\frac1B\sum_{b=1}^B L_{\mathrm{va}}^{(b)}(\alpha),
\qquad
s_{\mathrm{cv}}(\alpha)
=
\left\{
\frac1{B-1}
\sum_{b=1}^B
\bigl(L_{\mathrm{va}}^{(b)}(\alpha)-\overline L_{\mathrm{cv}}(\alpha)\bigr)^2
\right\}^{1/2},
\]
and choose
\[
\widehat\alpha
=
\arg\min_{\alpha}
\left\{
\overline L_{\mathrm{cv}}(\alpha)+s_{\mathrm{cv}}(\alpha)
\right\}.
\]
The selected keep budget is then \(K=\lceil \widehat\alpha\, t(m+1)\rceil\) for the corresponding category. Table~\ref{tab:data-driven-k} shows that the data-driven choices are generally in the intermediate range rather than at the extremes. For GPT-5, the selected \(K\)'s mostly lie between 20 and 30, close to the default \(K=25\). For Prometheus, the selected budgets are often smaller, suggesting that a weaker or noisier judge benefits from more aggressive truncation. These results support the qualitative keep-\(K\) pattern above and indicate that the default choice is consistent with a simple cross-validation based selection rule.
\begin{table}[htbp]
\centering
\captionsetup{labelfont={color=black}, textfont={color=black}}
  
\arrayrulecolor{black}
\caption{Data-driven keep-budget selection by grouped cross-validation ($B=5$).}
\label{tab:data-driven-k}
\small
\begin{tabular}{llc@{\hspace{2.5cm}}llc}
\toprule
Judge & Category & \(K\) & Judge & Category & \(K\) \\
\midrule
GPT-5 & Coding & 28 & Prometheus & Coding & 18 \\
GPT-5 & Extraction & 25 & Prometheus & Extraction & 40 \\
GPT-5 & Humanities & 10 & Prometheus & Humanities & 13 \\
GPT-5 & Math & 20 & Prometheus & Math & 13 \\
GPT-5 & Reasoning & 30 & Prometheus & Reasoning & 18 \\
GPT-5 & Roleplay & 30 & Prometheus & Roleplay & 13 \\
GPT-5 & STEM & 20 & Prometheus & STEM & 28 \\
GPT-5 & Writing & 20 & Prometheus & Writing & 15 \\
\bottomrule
\end{tabular}
\arrayrulecolor{black}
\captionsetup{labelfont=default, textfont=default}
\end{table}

The grouped cross-validation rule is not the only possible choice of the keep budget. Other data-driven criteria include  validation against a small set of human judgments when such labels are available. We use grouped cross-validation because it requires no additional human annotation and keeps all perturbations from the same original prompt in the same fold.
}

\paragraph{Sensitivity to the relative weighting of bad 3-cycles and bad 4-cycles.}
To understand how much the truncation rule depends on the precise cycle weighting, we fix \(K=25\) and vary \(\mu\) in
\[
S_\mu(G)=C_3^{\mathrm{bad}}(G)+\mu C_4^{\mathrm{bad}}(G).
\]
Figure~\ref{fig:mu_sensitivity} plots the resulting normalized Spearman $\rho$ distance against \(\mu\) on a log scale. The main observation is that the curves are remarkably flat. Across two orders of magnitude of \(\mu\), the final ranking changes only slightly for either judge, and once \(\mu\) enters the moderate range the differences become almost negligible.

\begin{figure}[htbp]
    \centering
    \includegraphics[width=0.6\linewidth]{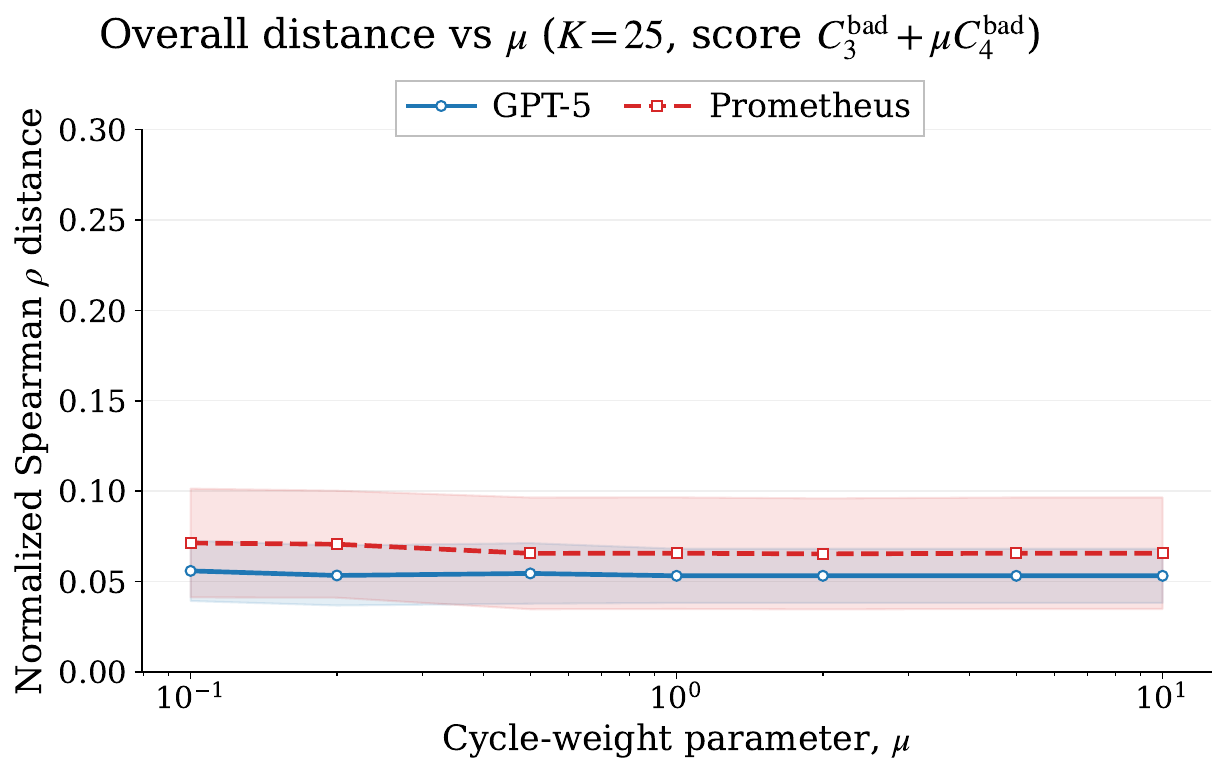}
    \caption{Sensitivity to the relative weighting of bad 3-cycles and bad 4-cycles in the truncation score \(S_\mu(G)=C_3^{\mathrm{bad}}(G)+\mu C_4^{\mathrm{bad}}(G)\), with \(K=25\) fixed. The final ranking quality changes only mildly across two orders of magnitude of \(\mu\).}
    \label{fig:mu_sensitivity}
\end{figure}

This result suggests that the effectiveness of cycle-based truncation does not hinge on a delicate tuning of the relative weights on bad 3-cycles and bad 4-cycles. What matters most is to remove graphs with substantial short-cycle inconsistency at all; the exact trade-off between triangles and quadrilaterals is secondary. In particular, the near-flat GPT-5 curve and the similarly stable Prometheus curve imply that bad 3-cycles already capture most of the useful signal needed for filtering. Bad 4-cycles can still provide a mild refinement, but they do not materially change the final ranking in this setting.

This observation has an important computational implication. On larger comparison graphs, enumerating 4-cycles is substantially more expensive than counting triangles. Since the ranking distance is almost insensitive to \(\mu\), one can often truncate using only \(C_3^{\mathrm{bad}}\) or a very coarse approximation to the 4-cycle contribution, and still obtain nearly the same ranking quality. We nevertheless use \(\mu=1\) in the main experiments because it is simple, stable, and performs well across both judges.

\section{Theoretical Analyses}

In this section, we provide a theoretical analysis of a random directed graph model to characterize the interplay among the number of prompt perturbations, the graph size, the success rate of the LLM judge, and the resulting ranking accuracy. Intuitively, increasing the number of prompt perturbations, reducing the graph size, and improving the success rate of the judge model all contribute to a more accurate output ranking. We provide a quantitative analysis for a simple model.

\subsection{Model Setup}\label{subsec:model-setup}

We begin by introducing the probabilistic model. Let $G^*$ denote the latent ground-truth comparison graph, with vertex set $V(G^*)=\sth{1,2,\cdots,n}$ and edge set $E(G^*)=\sth{(i,j):1\le i<j\le n}$, where $(i,j)$ denotes a directed edge from $i$ to $j$. Equivalently, $G^*$ is the comparison graph induced by the ground-truth ranking $1\succ 2\succ \cdots \succ n$. However, due to noise introduced by the prompt perturbation step and possible errors made by the judge model, an observed comparison graph may differ from $G^*$.
Specifically, we introduce a parameter $0<p=p(n)\le \tfrac{1}{2}$ to quantify the noise level. For each pair $1\le i<j\le n$, the edge $(i,j)$ in $G^*$ is retained with probability $\tfrac{1}{2}+p$ and is reversed to $(j,i)$ with probability $\tfrac{1}{2}-p$. Let $G_1,G_2,\cdots,G_t$ be the observed comparison graphs. Then, for each $1\le k\le t$, we have $V(G_k)=\sth{1,2,\cdots,n}$ and
\[
\prob{(i,j)\in E(G_k)}=\frac12+p,\quad \prob{(j,i)\in E(G_k)}=\frac12-p,\quad \forall\,1\le i<j\le n.
\]
Moreover, for any $1\le k<k'\le t$, the graphs $G_k$ and $G_{k'}$ are independent conditional on $G^*$.


Note that larger values of $p$ correspond to more reliable judgments, while $p=\tfrac{1}{2}$ means that the judge model is consistent with the underlying ranking. 
Our goal is to understand how many comparison graphs are required to recover the latent ranking from observed directed graphs of size $n$ under a given hallucination level characterized by $p$. Specifically, we consider $t$ samples $G_1,G_2,\cdots,G_t$. Let $\tilde{G}$ be the integration graph with $V(\tilde{G}) = \{1,2,\cdots, n\}$ and
\begin{align}\label{eq:def-of-tiG}
    (i,j)\in E(\tilde{G})\text{ if }|\{k:(i,j)\in E(G_k)\}|>\frac{t}{2},\quad  (j,i)\in E(\tilde{G})\text{ if }|\{k:(i,j)\in E(G_k)\}|\le \frac{t}{2} 
\end{align}
for any $1\le i<j\le n$.
The graph $\ti{G}$ provides a natural way to aggregate the information from the $t$ observed directed graphs through pairwise majority voting.
We will show in Section~\ref{subsec:theory-main} that if $t$ exceeds a critical threshold, then one can recover the latent ranking from $\ti{G}$, whereas below another threshold on the other side, the graph $\tilde{G}$ contains a directed cycle, which implies a contradiction with the correct rank.

\subsection{Theoretical Guarantees}\label{subsec:theory-main}

In this subsection, we present theoretical guarantees for the random directed graph model introduced in Section~\ref{subsec:model-setup}. Specifically, the following theorem characterizes the sample size required for recovering the latent rank via the majority-vote integration graph $\tilde G$. In addition to the positive result showing that the correct rank can be recovered using the majority vote from $\tilde G$, we also establish a negative result under which the graph $\ti{G}$ contains a directed cycle with high probability.

\begin{theorem}\label{thm:main}
    For any constant $\epsilon>0$, if $t\ge \tfrac{(4+\epsilon)\log n}{\log (1/(1-4p^2))}$, then \begin{align*}
        \prob{\ti{G} = G^*} = 1-o(1)\text{ as }n\to\infty.
    \end{align*}
     
    On the other hand, if $t\le \tfrac{(4-\epsilon)\log n}{\log (1/(1-4p^2))}\wedge n^{\epsilon/2}$, then the graph $\ti{G}$ contains at least one directed triangle  with probability $1-o(1)$ as $n\to \infty$. Moreover, under this condition, we have $\expect{N_3}=\omega(1)$, where $N_3$ denotes the number of directed triangles in $\tilde G$.
\end{theorem}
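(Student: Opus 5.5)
Both directions reduce to a sharp estimate for the probability that a single pair is misoriented by majority vote. Fix $i<j$ and let $X_{ij}=|\{k:(i,j)\in E(G_k)\}|\sim\Bin(t,\tfrac12+p)$. The edge is flipped in $\ti G$ exactly when $X_{ij}\le t/2$, and different pairs are independent. Let $q=\prob{X_{ij}\le t/2}$. The key claim is that, on the exponential scale, $q=(1-4p^2)^{t/2}$ up to polynomial factors in $t$:
\[
\frac{c}{\sqrt t}\,(1-4p^2)^{t/2}\;\le\; q\;\le\;(1-4p^2)^{t/2}.
\]
The upper bound is the Chernoff bound at the midpoint, $\inf_s \mathbb E e^{s(t/2-X)}$. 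Its optimizer gives exactly $\bigl(2\sqrt{(\tfrac12+p)(\tfrac12-p)}\bigr)^t=(1-4p^2)^{t/2}$. For the lower bound, keep only the term $X=\lfloor t/2\rfloor$ and apply Stirling to $\binom{t}{\lfloor t/2\rfloor}\ge 2^t/\sqrt{2t}$. When $t$ is odd this loses a harmless factor $\sqrt{(1/2-p)/(1/2+p)}$. I will treat $p$ bounded away from $1/2$ separately, since $p\to1/2$ makes $\log(1/(1-4p^2))\to\infty$ and the thresholds degenerate. In that regime one can check directly that $t\ge1$ already gives $\ti G=G^*$ with high probability.

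\textbf{Upper bound.} Since $G^*$ is transitive, $\ti G=G^*$ iff no pair is flipped. A union bound over the $\binom n2$ pairs gives
\[
\prob{\ti G\neq G^*}\le \tbinom n2 (1-4p^2)^{t/2}\le n^2\, n^{-(4+\epsilon)/2}=n^{-\epsilon/2}=o(1),
\]
using $t\log(1/(1-4p^2))\ge(4+\epsilon)\log n$.

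\textbf{Lower bound.} Work with $N_3$. Because $G^*$ is the transitive tournament, a triple $a<b<c$ forms a directed triangle in $\ti G$ exactly when, with $\{ab,bc,ac\}$ written for the edges $(a,b),(b,c),(a,c)$ of $G^*$, either $ab,bc$ are kept and $ac$ is flipped, or $ab,bc$ are flipped and $ac$ is kept. The probability of this is $q(1-q)^2+q^2(1-q)=q(1-q)\ge q/2$, since $q\le1/2$. Hence
\[
\expect{N_3}\ge\tbinom n3 \frac q2 \gtrsim n^3\, t^{-1/2}\, n^{-(4-\epsilon)/2}=n^{1+\epsilon/2}t^{-1/2}.
\]
Under $t\le n^{\epsilon/2}$ this is at least $n^{1+\epsilon/4}\to\infty$, which is where the extra cap $t\le n^{\epsilon/2}$ is used. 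So $\expect{N_3}=\omega(1)$.

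To get existence with high probability I will run a second-moment argument. This is cleaner than handling $N_3$ directly: it suffices to show that the number $F$ of flipped pairs is large, and that many flipped pairs create triangles. Define
\[
Z=\sum_{a<b<c}\mathbf 1\{ac\text{ flipped},\ ab,bc\text{ kept}\},
\]
so $\expect Z=\binom n3 q(1-q)^2=\omega(1)$. For $\mathrm{Var}(Z)$, indicators for triples sharing no pair are independent, so only triples sharing exactly one pair contribute. There are $O(n^4)$ such ordered pairs of triples. The shared pair is either the flipped one, contributing $O(q)$ each, or a kept one, contributing $O(q^2)$ each. This gives $\mathrm{Var}(Z)\le \expect Z+O(n^4q)$. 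Then
\[
\frac{\mathrm{Var}(Z)}{(\expect Z)^2}\lesssim \frac{1}{\expect Z}+\frac{n^4q}{n^6q^2}=o(1)+\frac{1}{n^2q}.
\]
Since $n^2q\ge n^{\epsilon/2}t^{-1/2}\ge n^{\epsilon/4}\to\infty$, Chebyshev gives $Z>0$ with high probability, and $Z>0$ implies $N_3\ge1$.

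\textbf{Main obstacle.} The delicate step is the matching lower bound on $q$ with only a polynomial-in-$t$ loss, uniform in $p=p(n)$. This is what forces the $\wedge\, n^{\epsilon/2}$ condition, and it is where the case $p$ close to $1/2$ must be handled with care.
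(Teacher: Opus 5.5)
\textbf{Gap.} The step that fails is the odd-$t$ case of your lower bound on $q$ when $p\to\frac12$. You correctly spot the extra factor $\sqrt{(\frac12-p)/(\frac12+p)}$. However, this factor is not harmless, and your plan to handle $p$ near $\frac12$ separately cannot work, because the second half of the statement is false there. Take $\epsilon=1$, $1-4p^2=n^{-3}$ and $t=1$. Then $\frac{(4-\epsilon)\log n}{\log(1/(1-4p^2))}=1$, so the hypothesis holds, and $\tilde G=G_1$. Since $\frac12-p=\frac{1-4p^2}{2(1+2p)}\le\frac12 n^{-3}$, we get $\prob{\tilde G\neq G^*}\le\binom n2(\frac12-p)\le\frac{1}{4n}$. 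So $\tilde G$ is the transitive tournament with high probability, and $\expect{N_3}=\binom n3(\frac12-p)(\frac12+p)\le\frac1{12}$. Your remark that ``$t\ge1$ already gives $\tilde G=G^*$'' in this regime does not rescue the claim; it is exactly what makes it fail.

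The paper's proof has the same hole. It quotes $\prob{\Bin(t,\frac12+p)\le\frac t2}\ge\frac{1}{\sqrt{2t}}(1-4p^2)^{t/2}$ from Ash, which is only valid when $t/2\in\mathbb{Z}$. For $t=1$ the left side is $\frac12-p$ and the right side is $\sqrt{(\frac12-p)(1+2p)}$. What your argument (and the paper's) actually proves is the lower bound under an extra assumption: $t$ even, or $p\le\frac12-c$ for a constant $c>0$, or more generally $(\frac12-p)n^{\epsilon/2}\to\infty$. In that case the odd-$t$ estimate $q\ge\frac{1}{\sqrt{2(t+1)}}\sqrt{\frac12-p}\,(1-4p^2)^{t/2}$ together with $t\le n^{\epsilon/2}$ gives $n^2q\ge\frac12 n^{\epsilon/4}\sqrt{\frac12-p}\to\infty$, and the rest of your proof goes through unchanged.

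\textbf{Comparison.} Apart from this, your proof is sound. The upper bound is the paper's (midpoint Chernoff plus a union bound), and it holds for every $t$ and $p$, so no case split is needed there.

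For the triangle you take a different route. The paper first shows that some long pair $(i,j)$ with $i\le n/3$, $j\ge 2n/3$ is flipped with high probability, which needs only $\frac{n^2}{9}\tilde p\to\infty$ and independence across pairs. It then finds a middle vertex $k$ with $(i,k),(k,j)$ both kept; this fails with probability at most $(1-c)^{n/3}$, since the middle edges are independent of the cross edges.

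Your second-moment computation for $Z$ is correct: only triples sharing exactly one pair contribute, and when the shared pair is flipped in one triple and kept in the other the covariance is nonpositive. It uses the same input $n^2q\to\infty$ but yields more, namely $Z/\expect{Z}\to1$ in probability and hence $N_3\gtrsim n^3q$ with high probability. The paper's argument is shorter and avoids the covariance bookkeeping.

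One small slip, shared by the paper's ``$\ge\frac12\cdot\frac12$'': $q\le\frac12$ is false in general. For $t=2$ and $p\to0$, $q=1-(\frac12+p)^2\to\frac34$. But $1-q\ge\frac14$ always holds by comparison with $\Bin(t,\frac12)$, and that suffices.
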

The proof of Theorem~\ref{thm:main} is deferred to Appendix~\ref{apd:proof-thm-main}. In view of Theorem~\ref{thm:main}, we derive the sample size required for recovering the latent rank, and also provide a corresponding lower bound under which recovery becomes challenging. We note that the positive and negative results are tight up to constants when $p=n^{-o(1)}$, and tight in rate when $p=n^{-\Theta(1)}$, 
where smaller values of $p$ correspond to more severe hallucination.

Indeed, our results can be extended to the inhomogeneous setting, where $(i,j)$ in $G^*$ is retained with probability $\tfrac{1}{2}+p_{ij}$ and reversed to $(j,i)$ with probability $\tfrac{1}{2}-p_{ij}$ for any $1\le i<j\le n$. This scenario corresponds to the case where the noise level varies across pairs of models. See Remark~\ref{rmk:inhomo} in Appendix~\ref{apd:proof-thm-main} for details on the theoretical guarantees.
{
Moreover, the Theorem~\ref{thm:main} is not tied to the total-order assumption. The notation \(1\succ2\succ\cdots\succ n\) is used only as a convenient way to describe one possible latent graph. The proof is edgewise: for each unordered pair \(\{i,j\}\), the observed edge agrees with the latent edge in \(G^*\) with probability \(1/2+p\) and is reversed with probability \(1/2-p\). Majority voting therefore recovers each latent edge with high probability, and a union bound over all \(\binom n2\) pairs yields \(\widetilde G=G^*\) in the stated regime. Thus the same exact-recovery argument applies to an arbitrary latent comparison graph \(G^*\), including one that is not induced by a globally transitive ranking.
}


We next consider the scenario where the structurally consistent observations are accessible, retaining only those with no directed triangle. Specifically, let
\[
\maA \triangleq \{G:\, G \text{ contains no directed triangle}\}.
\]
We study the conditional law $G\mid \maA$. Let $\bar G_1,\bar G_2,\cdots,\bar G_t$ denote samples from this conditioned distribution. We first characterize the distribution of $\bar G_i$.
Let $\maS_n$ denote the set of permutations from $\sth{1,2,\cdots,n}$ to $\sth{1,2,\cdots,n}$.
Indeed, since $\bar G_i$ contains no directed triangle, there exists a permutation $\pi_i$ such that $(\pi_i(r),\pi_i(s))\in E(\bar G_i)$ for all $1\le r<s\le n$ (see Lemma~\ref{lem:tournament-structure} in Appendix~\ref{apd:proof-thm-main-truncate}). Therefore, $\bar G_i$ is completely determined by $\pi_i$, and it remains to characterize the distribution of $\pi_i$. Moreover, by the definition of conditional probability, the probability of observing $\pi_i$ under the conditioned law is exactly the probability of the corresponding graph under the original model, divided by the total probability of all triangle-free graphs. More precisely, let $$\mathrm{inv}(\pi) = |\sth{(r,s):1\le r<s\le n,\pi(r)>\pi(s)}|$$ denote the number of inversions of $\pi_i$. Then
\[
\prob{\pi_i\mid \maA}
=
\frac{(\frac12+p)^{\binom{n}{2}-\mathrm{inv}(\pi_i)}(\frac12-p)^{\mathrm{inv}(\pi_i)}}{\sum_{\sigma\in \maS_n}(\frac12+p)^{\binom{n}{2}-\mathrm{inv}(\sigma)}(\frac12-p)^{\mathrm{inv}(\sigma)}}
=
\frac{q^{\mathrm{inv}(\pi_i)}}{\sum_{\sigma\in \maS_n}q^{\mathrm{inv}(\sigma)}},
\quad
q=\frac{\frac12-p}{\frac12+p}.
\]
This is precisely the Mallows distribution~\citep{mallows1957non}, and we denote it as $\mathrm{Mallows}_n(q)$. It follows from~\cite[Corollary 1.3.13]{stanley2011enumerative} that $\sum_{\sigma\in\maS_n} q^{\mathrm{inv}(\sigma)} = \prod_{r=1}^n\tfrac{1-q^r}{1-q}$.
Indeed, for any triangle-free graph $G$ under the conditioned Mallows law, the edge orientation probability between $i$ and $j$ depends only on $j-i$ for any $1\le i<j\le n$; see Proposition~\ref{prop:edge-d} in Appendix~\ref{apd:proof-thm-main-truncate}. In particular, when $j-i=1$, the probability that $(i,j)\in E(G)$ remains $\tfrac{1}{2}+p$, while this probability increases with $j-i$. Intuitively, this makes the majority vote $\ti{G}$ based on $\bar G_1,\bar G_2,\cdots, \bar G_t$ more informative for recovering the ground-truth graph $G^*$. We now state the corresponding upper bound on the number of graphs required for exact recovery of $G^*$.

\begin{theorem}\label{thm:main-truncate}
    Under conditional law $G\mid \maA$, for any constant $\epsilon>0$, if $t\ge \tfrac{(2+\epsilon)\log n}{\log(1/(1-4p^2))}$, then\begin{align*}
        \prob{\ti{G} = G^*} = 1-o(1)\text{ as }n\to\infty.
    \end{align*}
\end{theorem}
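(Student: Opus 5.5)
The plan is to follow the edgewise argument used for the first part of Theorem~\ref{thm:main}. The difference is that under the conditioned law we gain from the fact that pairs at distance $d=j-i\ge 2$ are oriented correctly with strictly higher probability than $\frac12+p$. This lets the union bound over the $\binom n2$ pairs be split into two groups: the $n-1$ adjacent pairs, which cost only a factor $n$, and the remaining pairs, which cost $n^2$ but are controlled by a squared error rate.

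\emph{Step 1 (edgewise Chernoff bound).} By Lemma~\ref{lem:tournament-structure}, each $\bar G_k$ is determined by $\pi_k\sim\mathrm{Mallows}_n(q)$. By Proposition~\ref{prop:edge-d}, $\P[(i,j)\in E(\bar G_k)]=\frac12+p_d$ depends only on $d=j-i$, with $p_1=p$ and $p_d$ nondecreasing in $d$. The events $\{(i,j)\in E(\bar G_k)\}$ are independent across $k$. Hence the count $|\{k:(i,j)\in E(\bar G_k)\}|$ is $\Bin(t,\frac12+p_d)$. The standard Chernoff bound for falling to at most $t/2$ gives
\[
\P\bigl[(j,i)\in E(\ti G)\bigr]\le \bigl(1-4p_d^2\bigr)^{t/2}.
\]
A union bound then yields
\[
\P[\ti G\neq G^*]\le \sum_{d=1}^{n-1}(n-d)\bigl(1-4p_d^2\bigr)^{t/2}.
\]

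\emph{Step 2 (adjacent pairs).} Write $L=\log(1/(1-4p^2))$. For $d=1$ the contribution is at most $n(1-4p^2)^{t/2}=n\,e^{-tL/2}\le n\cdot n^{-(1+\epsilon/2)}=n^{-\epsilon/2}$.

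\emph{Step 3 (non-adjacent pairs; the main obstacle).} Here I need the key inequality $1-4p_d^2\le (1-4p^2)^2$ for every $d\ge 2$. Granting it, the $d\ge2$ contribution is at most
\[
n^2(1-4p^2)^{t}\le n^2\cdot n^{-(2+\epsilon)}=n^{-\epsilon}.
\]
Together with Step 2 this gives $\P[\ti G= G^*]=1-o(1)$.

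By the monotonicity in Proposition~\ref{prop:edge-d}, it suffices to prove the inequality for $d=2$. For that case I would use the fact that the relative order of the three consecutive values $i,i+1,i+2$ under $\mathrm{Mallows}_n(q)$ is itself $\mathrm{Mallows}_3(q)$. This follows from the consistency of Mallows under restriction to consecutive positions, together with the symmetry $\inv(\pi)=\inv(\pi^{-1})$. Counting the permutations of $\{1,2,3\}$ in which $3$ precedes $1$ gives the reversal probability
\[
r=\frac{q^2+q^3}{(1+q)(1+q+q^2)}=\frac{q^2}{1+q+q^2}.
\]
Set $a=\frac12+p$ and $b=\frac12-p$, so $q=b/a$ and $a+b=1$. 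Then $r=b^2/(1-ab)$, and $1-r=a/(1-ab)$ because $1-ab-b^2=a$. Hence
\[
1-4p_2^2=4r(1-r)=\frac{4ab^2}{(1-ab)^2}.
\]
Since $(1-4p^2)^2=16a^2b^2$, the required bound $\frac{4ab^2}{(1-ab)^2}\le16a^2b^2$ is equivalent to $4a(1-ab)^2\ge1$. This holds because $a\ge\frac12$ and $ab\le\frac14$, so $4a(1-ab)^2\ge 2\cdot\frac9{16}>1$.

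If the restriction property of Mallows is not available in exactly this form, my fallback is to compute the $d=2$ probability directly from the formula in the proof of Proposition~\ref{prop:edge-d}. Alternatively, I would use an injection argument: cyclically rotating the values $i,i+1,i+2$ changes $\inv$ in a controlled way, which yields the same bound $r\le q^2/(1+q+q^2)$.
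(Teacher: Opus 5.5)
Your architecture is the paper's: a per-pair Chernoff bound with exponent $I_d=\tfrac12\log\frac{1}{1-4p_d^2}$, and a union bound split into two groups. The $n-1$ adjacent pairs cost $n e^{-tI_1}\le n^{-\epsilon/2}$, and the remaining pairs cost at most $n^2e^{-tI_2}$. You then reduce to $d=2$ by monotonicity and need the key inequality $I_2\ge 2I_1$, i.e.\ $1-4p_2^2\le(1-4p^2)^2$.

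The gap is in the one computation that carries the content. In $\mathrm{Mallows}_3(q)$, the event that $3$ precedes $1$ consists of three permutations: $231$ and $312$ (two inversions each) and $321$ (three inversions). Its weight is therefore $2q^2+q^3$, not $q^2+q^3$, and the correct reversal probability is
\[
r=\frac{q^2(2+q)}{(1+q)(1+q+q^2)},
\]
which agrees with $1-\alpha_2$ from Proposition~\ref{prop:edge-d}. Your value $q^2/(1+q+q^2)$ fails the sanity check $q=1$, giving $1/3$ instead of $1/2$.

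Your value is strictly smaller than the true $r$, and $4r(1-r)$ is increasing on $[0,\tfrac12]$. So the inequality you verified concerns a smaller quantity and does not imply the one you need. The margin $9/8>1$ is a symptom of this: at $p=0$ both sides of the target inequality equal $1$, yet your expression gives $1-4p_2^2=8/9$. Your first fallback, reading $\alpha_2$ off Proposition~\ref{prop:edge-d}, would have caught the error. The injection fallback cannot work, because the bound it aims at, $r\le q^2/(1+q+q^2)$, is false.

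The repair is short. Set $a=\tfrac12+p$ and $b=\tfrac12-p$, so $a+b=1$. The correct value gives $r=\frac{b^2(1+a)}{1-ab}$ and $1-r=\frac{a^2(1+b)}{1-ab}$, hence
\[
1-4p_2^2=4r(1-r)=\frac{4a^2b^2(2+ab)}{(1-ab)^2}.
\]
Since $(1-4p^2)^2=16a^2b^2$, the requirement reduces to $2+ab\le 4(1-ab)^2$. This holds because $4(1-ab)^2-(2+ab)=(1-4ab)(2-ab)=4p^2(2-ab)\ge 0$, using $4ab=1-4p^2$. It is the same inequality the paper obtains from the factorization $\frac{4q^2(1-q)^2(2q^2+3q+2)}{(1+q)^4(1+q+q^2)^2}\ge 0$. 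With it in place, your Steps 2 and 3 give $\P[\ti G\neq G^*]\le n^{-\epsilon/2}+n^{-\epsilon}$, as claimed.
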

The proof of Theorem~\ref{thm:main-truncate} is deferred to Appendix~\ref{apd:proof-thm-main-truncate}. Comparing Theorems~\ref{thm:main} and~\ref{thm:main-truncate}, we see that, after truncating to graphs with no directed triangle, the number of graphs required to recover $G^*$ from $\ti{G}$ decreases from $\tfrac{(4+\epsilon)\log n}{\log (1/(1-4p^2))}$ to $\tfrac{(2+\epsilon)\log n}{\log (1/(1-4p^2))}$. This shows that cycle-based truncation is beneficial even under a simple stylized model: by filtering out comparison graphs with local inconsistencies, one can reduce the sample complexity of exact recovery. 

Indeed, in order to obtain $\tfrac{(2+\epsilon)\log n}{\log(1/(1-4p^2))}$ triangle-free graphs, one may need substantially more than $\tfrac{(4+\epsilon)\log n}{\log(1/(1-4p^2))}$ original graphs before conditioning on the no-triangle event. From this perspective, direct truncation may appear statistically inefficient, since a potentially large fraction of the original samples is discarded. However, in the regime of LLM evaluation, generating additional comparison graphs via prompt perturbation is relatively cheap, whereas obtaining high-quality domain prompts is often the real bottleneck. Therefore, despite its low sample efficiency under this stylized model, our truncation-based procedure can still be well suited to practical LLM evaluation.
Moreover, one may consider soft truncation rather than imposing the hard constraint of excluding all graphs with directed triangles, since a small number of triangles may not substantially affect the global comparison structure. For example, one may truncate to graphs whose numbers of directed triangles are at most $N$. The two extreme cases recover our previous results: $N=\infty$ corresponds to Theorem~\ref{thm:main}, while $N=0$ corresponds to Theorem~\ref{thm:main-truncate}. For intermediate values $N\in(0,\infty)$, one may expect a trade-off between structural consistency and sample efficiency, potentially interpolating between the sample complexities $\tfrac{(2+\epsilon)\log n}{\log(1/(1-4p^2))}$ and $\tfrac{(4+\epsilon)\log n}{\log(1/(1-4p^2))}$. A precise characterization of this trade-off is an interesting open problem, which we leave for future work.

{   It is also important to distinguish the roles of the two theoretical settings. Theorem~\ref{thm:main} is an edgewise recovery result and can be interpreted for a general latent comparison graph. By contrast, Theorem~\ref{thm:main-truncate} conditions on the retained graph having no directed triangle. Under this condition, the support is restricted to transitive tournaments, equivalently the $n!$ possible total orders. Therefore, if the latent preference structure is genuinely non-transitive, the truncated model should not be interpreted as exactly recovering that cyclic structure. Instead, an additional irreducible approximation term may remain, and the resulting ranking should be viewed as the closest transitive approximation to the latent comparison relation under the chosen loss or likelihood.
}


We finally discuss the maximum likelihood estimator (MLE) under the two models. Under the untruncated model in Theorem~\ref{thm:main}, the parameter is the latent graph $G^*$. Conditional on $G^*$, the observed graphs $G_1,\dots,G_t$ are independent, and for each pair $1\le i<j\le n$, the observed orientation agrees with that in $G^*$ with probability $\tfrac12+p$ and is reversed with probability $\tfrac12-p$. Therefore, the likelihood of $G^*$ is
\[
L(G^*)=\prod_{1\le i<j\le n}\Big(\tfrac12+p\Big)^{N_{ij}(G^*)}\Big(\tfrac12-p\Big)^{\,t-N_{ij}(G^*)},
\]
where $N_{ij}(G^*)$ denotes the number of observed graphs whose orientation on the pair $\{i,j\}$ agrees with that in $G^*$. Since the likelihood factorizes over pairs, the MLE is obtained by maximizing each pairwise contribution separately. Equivalently, if
\begin{align}\label{eq:def-of-Nij}
    N_{ij}\triangleq \big|\{1\le k\le t:(i,j)\in E(G_k)\}\big|,
\end{align}
then the MLE chooses $(i,j)\in E(G^*)$ when $N_{ij}>t/2$, and chooses $(j,i)\in E(G^*)$ when $N_{ij}<t/2$, with arbitrary tie-breaking when $N_{ij}=t/2$. Therefore, the integration graph $\ti{G}$ defined by majority vote is exactly an MLE in this model.

Under the truncated model in Theorem~\ref{thm:main-truncate}, the distribution is characterized by the latent permutation $\pi\sim \mathrm{Mallows}_n(q)$. Recall that $\pi_i$ denote the corresponding permutation induced by $\bar G_i$ and $\prob{\pi_i\mid \pi^*,q} = \tfrac{q^{\inv(\pi_i,\pi^*)}}{\sum_{\sigma \in \maS_n} q^{\inv(\sigma)}}$, where $\inv(\pi_i,\pi^*)$ denotes the inversion number between $\pi_i$ and $\pi^*$. Consequently, the likelihood is given by \begin{align*}
    L(\pi^*\mid \pi_1,\cdots, \pi_t) = \prod_{i=1}^t\frac{q^{\inv(\pi_i,\pi^*)}}{\sum_{\sigma \in \maS_n} q^{\inv(\sigma)}}\propto q^{\sum_{i=1}^t\inv(\pi_i,\pi^*)}.
\end{align*}
Thus the MLE is given by $$\hat{\pi}_{\mathrm{MLE}} = \mathop{\text{argmin}}\limits_{\pi\in \maS_n}\sum_{i=1}^t\inv(\pi_i,\pi).$$ 
Specifically, we have the following theorem.
\begin{theorem}\label{thm:MLE}
    Under conditional law $G\mid \maA$, for any constant $\epsilon>0$, if $t\le \tfrac{(2-\epsilon)\log n}{\log(1/(1-4p^2))}\wedge n^{\epsilon/2}$, then \begin{align*}
        \prob{\hat\pi_{\mathrm{MLE}}\neq \pi^*} = 1-o(1).
    \end{align*}
\end{theorem}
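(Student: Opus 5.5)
The plan is to show that, with high probability, the true order $\pi^*$ (equivalently $G^*$) can be improved, or at least tied, by a single adjacent transposition. Then $\hat\pi_{\mathrm{MLE}}=\mathop{\text{argmin}}_{\pi}\sum_{i=1}^t\inv(\pi_i,\pi)$ cannot be $\pi^*$.

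\textbf{Reduction to adjacent pairs.} Fix $1\le r\le n-1$ and let $\pi'$ be $\pi^*$ with the adjacent items $r\succ r+1$ swapped. For every $\pi_i$, the quantities $\inv(\pi_i,\pi')$ and $\inv(\pi_i,\pi^*)$ differ only through the relative order of $r$ and $r+1$. Hence
\[
\sum_{i=1}^t\inv(\pi_i,\pi')-\sum_{i=1}^t\inv(\pi_i,\pi^*)=t-2R_r ,
\]
where $R_r$ is the number of samples $\bar G_i$ containing the reversed edge $(r+1,r)$. So $\pi^*$ is not the minimizer as soon as $R_r>t/2$ for some $r$. If one only has $R_r\ge t/2$, then $\pi^*$ is not the unique minimizer; I would aim for the strict version to avoid tie-breaking issues. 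By Proposition~\ref{prop:edge-d} with $j-i=1$, each $\bar G_i$ reverses the pair $(r,r+1)$ with probability exactly $\tfrac12-p$. The samples are independent, so $R_r\sim\Bin(t,\tfrac12-p)$.

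\textbf{Independence across $r$.} Different $r$ are not obviously independent, and $q$ may tend to $1$, so I would avoid a generic second-moment argument. Instead I would use the standard sequential insertion representation of $\mathrm{Mallows}_n(q)$. Items $1,2,\dots,n$ are inserted in increasing order of rank. Item $k$ is placed so that $V_k\in\{0,\dots,k-1\}$ of the earlier items lie above it, with $\P(V_k=v)\propto q^{v}$ and $V_1,\dots,V_n$ independent; the total inversion count is $\sum_k V_k$, which gives the Mallows law. Later insertions never change the relative order of already placed items. The position of $r$ among $\{1,\dots,r\}$ is determined by $V_r$, and the position of $r+1$ among $\{1,\dots,r+1\}$ by $V_{r+1}$. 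Therefore the event that $r+1$ is ranked above $r$ is a function of $(V_r,V_{r+1})$ alone. For $r$ ranging over the odd indices $\{1,3,5,\dots\}$ these pairs are disjoint, so the reversal events are independent within each sample, and hence the $R_r$ are independent across such $r$.

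\textbf{Binomial lower bound (main obstacle).} It remains to show $P_\star:=\P[\Bin(t,\tfrac12-p)>t/2]\ge n^{-1+\delta}$ for some $\delta>0$. Given this,
\[
\P[\hat\pi_{\mathrm{MLE}}=\pi^*]\le(1-P_\star)^{\lfloor n/2\rfloor}\le\exp(-\lfloor n/2\rfloor P_\star)=o(1).
\]
To lower bound $P_\star$, I would keep a single term with $s=\lfloor t/2\rfloor+1$:
\[
\binom{t}{s}\Big(\tfrac12-p\Big)^{s}\Big(\tfrac12+p\Big)^{t-s}\;\gtrsim\;\frac{2^t}{\sqrt t}\,\Big(\tfrac14-p^2\Big)^{t/2}\cdot q^{O(1)}=\frac{(1-4p^2)^{t/2}}{\sqrt t}\,q^{O(1)} .
\]
The hypothesis $t\le\frac{(2-\epsilon)\log n}{\log(1/(1-4p^2))}$ gives $(1-4p^2)^{t/2}\ge n^{-1+\epsilon/2}$, and $t\le n^{\epsilon/2}$ gives $\sqrt t\le n^{\epsilon/4}$. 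Together these yield $P_\star\ge n^{-1+\epsilon/4}q^{O(1)}$.

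The delicate part is the bounded factor $q^{O(1)}=\big(\tfrac{1/2-p}{1/2+p}\big)^{O(1)}$, which comes from the parity of $t$ and the strict inequality. This factor is harmless when $p$ stays bounded away from $\tfrac12$. When $p\to\tfrac12$, the threshold on $t$ shrinks, and I would treat this regime separately by hand, e.g.\ $t=1$, where $P_\star=\tfrac12-p$ and the same product bound applies directly. If the strict version fails in some corner case, I would fall back to $R_r\ge t/2$ and use the non-uniqueness of the minimizer with a tie-breaking convention for $\hat\pi_{\mathrm{MLE}}$.
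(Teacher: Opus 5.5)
Your route is the paper's. You compare $\pi^*$ with the adjacent transpositions on the disjoint pairs $(1,2),(3,4),\ldots$, and write the inversion difference as $t-2N_{i,i+1}$ with $N_{i,i+1}\sim\Bin(t,\tfrac12+p)$ by Proposition~\ref{prop:edge-d}. You then use independence across the pairs and a product bound. Your independence argument via the repeated-insertion representation is a valid, more explicit substitute for the paper's appeal to the block factorization. Fix the orientation, though: with $\P(V_k=v)\propto q^{v}$ and $\inv=\sum_k V_k$, $V_k$ must count the earlier items placed \emph{below} $k$. Then $r+1$ lies above $r$ iff $V_{r+1}>V_r$.

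The genuine gap is the factor $q^{O(1)}$ you flag, and your repair for $p\to\tfrac12$ does not work. For odd $t$ the events $R_r>t/2$ and $R_r\ge t/2$ coincide, so falling back to ties buys nothing. The leading term of $P_\star$ is $\binom{t}{(t+1)/2}(\tfrac14-p^2)^{t/2}\sqrt q$. For $t=1$ you get $P_\star=\tfrac12-p\le q$, and the hypothesis does not force $\lfloor n/2\rfloor(\tfrac12-p)\to\infty$.

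No argument can close this case, because the statement is false there. Take $\epsilon=\tfrac14$, $1-4p^2=n^{-3/2}$ and $t=1$. Then $\frac{(2-\epsilon)\log n}{\log(1/(1-4p^2))}\wedge n^{1/8}\ge 1$, so the hypothesis holds. But $\hat\pi_{\mathrm{MLE}}=\pi_1$, and with $Z_n(q)=\sum_{\sigma\in\maS_n}q^{\inv(\sigma)}$,
\[
\prob{\pi_1=\pi^*}=\frac{1}{Z_n(q)}\ge(1-q)^n\ge 1-nq\ge 1-n^{-1/2},
\]
since $q\le 1-2p\le n^{-3/2}$.

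The paper's proof misses this by applying $\prob{\Bin(t,\tfrac12+p)\le \tfrac t2}\ge \frac{1}{\sqrt{2t}}(1-4p^2)^{t/2}$ for every $t$. That bound comes from the single term $\binom{t}{t/2}(\tfrac14-p^2)^{t/2}$ and holds only for even $t$; at $t=1$ the ratio of the two sides is $\sqrt{q/2}$.

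What can be proved is the following. For even $t$, use the exact-tie event $N_{i,i+1}=t/2$, with ties in the MLE resolved against $\pi^*$, as the paper implicitly does. Its probability is at least $\frac{(1-4p^2)^{t/2}}{\sqrt{2t}}$ with no $q$ factor, so the argument goes through for all $p$. Your strict version would carry a factor $q$ even here. For odd $t$ you need an extra assumption such as $q\,n^{\epsilon/2}\to\infty$, which holds for instance when $p\le\tfrac12-c$ for a constant $c>0$. Under it your single-term bound gives $\lfloor n/2\rfloor P_\star\gtrsim n^{\epsilon/4}\sqrt q\to\infty$.
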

The proof of Theorem~\ref{thm:MLE} is deferred to Appendix~\ref{apd:proof-thm-mle}.
In view of Theorems~\ref{thm:main-truncate} and~\ref{thm:MLE}, we see that, when $p=n^{-o(1)}$, the estimator $\ti{G}$ achieves the same exact recovery threshold as the MLE under the truncated model, up to the sharp constant. In this sense, $\ti{G}$ is statistically competitive with $\hat\pi_{\mathrm{MLE}}$. On the other hand, $\ti{G}$ can be computed by pairwise majority vote in time $O(tn^2)$, whereas a naive exhaustive search for $\hat\pi_{\mathrm{MLE}}$ over all $n!$ permutations is computationally prohibitive. This suggests that $\ti{G}$ provides a simple and computationally efficient alternative while retaining the same recovery threshold in this regime.




\section{Discussion and Future Directions}

In this paper, we propose a new framework for LLM evaluation based on prompt perturbation and cycle truncation. The main idea is to improve the robustness of pairwise evaluation by incorporating perturbations at the prompt level and by controlling local inconsistencies through graph-based cycle truncation. On the theoretical side, we establish guarantees under a random directed graph model, which help explain how the number of perturbations, the graph size, and the reliability of the judge model jointly affect the accuracy of the final ranking. On the empirical side, experiments on real LLM evaluation tasks demonstrate that our method performs favorably and produces more stable and reliable ranking results. {We note that lower cycle counts do not universally imply higher evaluation accuracy. Cycle filtering improves graph-level structural consistency, which is an internal reliability criterion rather than a validity guarantee. When the score-based ranking model is misspecified, or when the judge has strong systematic bias, the retained graph may be internally consistent but still misaligned with the desired target. Moreover, if the latent preferences contain stable task-dependent heterogeneity, an irreducible inconsistent component may remain; in this case, the final ranking should be viewed as a score-based projection of the retained comparisons rather than as a true global preference order.}

There are several directions for future research.
\begin{itemize}
    \item \emph{Richer integration methods.} It would be interesting to go beyond the current majority-vote integration rule and develop more refined aggregation procedures that better exploit the structure of pairwise comparison graphs.
    \item \emph{Broader empirical validation.} It would also be valuable to test the proposed method on a wider range of LLM evaluation tasks, judge models, and benchmark settings to better understand its practical behavior.
    \item \emph{More realistic theoretical models.} Our current analysis is based on a simple random directed graph model. An important direction is to establish theoretical guarantees under more realistic models that capture heterogeneous prompts, judge bias, and dependence across comparisons.
\end{itemize}


\appendix
\section{Experimental Details}
\label{app:exp-details}
\subsection{Additional Experimental Setup}
\label{app:exper_setup}

We list the target LLMs used in our experiments in Table~\ref{tab:target-models}, grouped by  model family.

\begin{table}[htbp]
\centering
\caption{Target LLMs used in the experiments.}
\label{tab:target-models}
\small
\renewcommand{\arraystretch}{1.15}
\begin{tabular}{p{0.22\textwidth} p{0.70\textwidth}}
\toprule
\textbf{Model family} & \textbf{Models} \\
\midrule
Gemma
&
Gemma 1.1 2B IT; Gemma 2 2B IT; Gemma 2 27B IT
\\
GPT
&
GPT-4.1 nano (2025-04-14 snapshot)
\\
Granite
&
Granite-3.0-2B-Instruct; Granite-3.0-8B-Instruct; Granite-3.1-2B-Instruct
\\
Ministral
&
Mistral-Small-24B-Instruct-2501
\\
Mistral
&
Mistral-7B-Instruct-v0.1; Mistral-7B-Instruct-v0.2
\\
OLMo
&
OLMo-7B-Instruct; OLMo-2-0325-32B-Instruct
\\
Phi
&
Phi-3 Mini 4K Instruct; Phi-3 Small 8K Instruct; Phi-3 Medium 4K Instruct; Phi-4
\\
Qwen
&
Qwen1.5-7B-Chat; Qwen1.5-14B-Chat
\\
Others
&
Zephyr-7B-beta; Starling-LM-7B-beta
\\
\bottomrule
\end{tabular}
\end{table}

We describe the fixed public Arena snapshot used in the experiments and provide the mapping from MT-Bench categories to public Arena categories in Table~\ref{tab:arena-mapping}. For categories without an exact counterpart, we use the closest available public category as a proxy.

\begin{table}[htbp]
\centering
\small
\caption{Mapping from MT-Bench categories to external reference leaderboards. ``Exact'' means a direct semantic match; ``Proxy'' indicates the closest available public LMArena category.}
\label{tab:arena-mapping}
\begin{tabular}{lll}
\toprule
MT-Bench category & LM Arena category & Type \\
\midrule
Coding & Coding & Exact \\
Extraction & Instruction Following & Proxy \\
Humanities & Writing, Literature \& Language & Proxy \\
Math & Math & Exact \\
Reasoning & Hard Prompts (English) & Proxy \\
Roleplay & Multi-Turn & Proxy \\
STEM & Life, Physical \& Social Science & Proxy \\
Writing & Creative Writing & Exact \\
\bottomrule
\end{tabular}
\end{table}


{  
In the released analysis bundle, the external reference ranking is frozen before resampling and ranker comparisons. All main-table, robustness, and sensitivity runs therefore use the same task-wise reference order rather than re-querying a changing online leaderboard. This is also why the open-code analysis defaults to an embedded reference ranking.

Prompt perturbation is implemented in two stages. The perturbation script first asks the configured API model to over-generate candidate paraphrases in JSON format, with generation temperature 0.7 and a maximum of 12{,}000 output tokens. Exact duplicates, empty strings, and candidates that normalize to the original prompt are removed. A second deterministic equivalence-checking call then returns a JSON array of \texttt{YES}/\texttt{NO} labels, with temperature 0 and a maximum of 600 output tokens. Only candidates marked \texttt{YES} are retained. In the reported runs, the configured generator/checker is the GPT-5.2 system mentioned in Section~\ref{subsec:experiment-setup}. The released pipeline therefore separates candidate generation from semantic filtering. The reported MT-Bench analysis uses the fixed five-graph-per-question pool stated in the main text, so each category contains 50 comparison graphs.
}

\subsection{Implementation Details}
\label{app:Imple_detail}

When an item is stored as a multi-turn conversation, we evaluate only its first-turn instruction so that all pairwise judgments are based on the same prompt content. Target-model inference is implemented through two backends. For locally deployed models, we use a Hugging Face/Accelerate pipeline with automatic device placement, \texttt{bfloat16} precision by default, a limit of 2048 newly generated tokens, and deterministic decoding. For API-based models, we likewise use deterministic decoding, with temperature 0, top-$p=1$, and a maximum output length of 2048 tokens. In addition, the local pipeline applies a light post-processing step to remove common reasoning tags and stop markers from model outputs before saving them.

Each per-prompt comparison graph is stored as a complete directed graph in which a strict preference $M_i\succ M_j$ is encoded by a single directed edge $i\to j$, while a tie is encoded by the pair of reciprocal edges $i\leftrightarrow j$. For a graph $G$, the implementation exhaustively enumerates all unordered triples and quadruples of vertices and counts every directed orientation that forms a 3-cycle or 4-cycle. Let $A(G)$ be the adjacency matrix and let $B(G)=A(G)\circ A(G)^\top$ denote the bidirectional-edge matrix. The pure-tie cycle counts $C_{3}^{\mathrm{tie}}(G)$ and $C_{4}^{\mathrm{tie}}(G)$ are obtained by applying the same enumeration routine to $B(G)$. The bad-cycle counts are then
\[
C_{\ell}^{\mathrm{bad}}(G)=C_{\ell}(G)-C_{\ell}^{\mathrm{tie}}(G),\quad \ell\in\{3,4\},
\]
and the default truncation score is $S_1(G)=C_{3}^{\mathrm{bad}}(G)+C_{4}^{\mathrm{bad}}(G)$. {  For deterministic tie-breaking, graphs are sorted by the pair \((S_1(G),\texttt{source\_order})\). Global truncation keeps the first $K$ graphs in that order. Block-wise truncation first groups graphs by original question id, sorts graphs inside each block by the same score, and then keeps the lowest one or two graphs per block. This is the implementation used for \emph{BlockTop1} and \emph{BlockTop2}.}

We report two rankers that differ only in how ties are handled. The first is a half-tie Bradley--Terry ranker. After aggregating the retained graphs into $\mathbf W$ and $\mathbf T$, we form $\widetilde{\mathbf W}=\mathbf W+\tfrac12\mathbf T$ and minimize the Bradley--Terry negative log-likelihood in an $(n-1)$-dimensional reduced parameterization, fixing the last latent score to zero for identifiability and mean-centering the recovered score vector afterwards. Multiple initializations are tried, and we retain the best converged solution over Newton’s method~\citep{turner2012bradley}, fixed-point iteration~\citep{yan2015asymptotic}, minorization--maximization (MM) updates~\citep{hunter2004mm}, and quasi-Newton routines (BFGS and L-BFGS-B). For derivative-based runs, the default stopping rule uses \texttt{maxiter}=2000 and \texttt{gtol}=$10^{-8}$. 

The second is an explicit-tie Davidson ranker. It maximizes the Davidson log-likelihood over $n-1$ free log-skill parameters and one free $\log \nu$ parameter, again fixing the last log-skill to zero. Multiple initializations are again tried, and we retain the highest-likelihood converged solution over constrained solvers implemented either in log-parameter space, where $\nu\ge 10^{-12}$ becomes the box constraint $\log\nu \ge \log 10^{-12}$ and is handled by L-BFGS-B or a damped Newton scheme with feasibility safeguarding, or in the original positive parameterization, where fixed-point and MM updates preserve $\lambda_i>0$ and $\nu>0$ by construction. The reported Davidson scores are finally rescaled so that the geometric mean of the $\lambda_i$’s equals one. {In the ranker-robustness analysis, the same selected graph list is passed unchanged to all five aggregators.}

The resampling analysis uses 100 outer resamples with bootstrap seed 20260324. For \textsc{Random50}, each outer resample averages 10 independently drawn 20-graph subsets from the 50-graph category pool. The released local-generation configuration is written for an eight-GPU node and targets 8\,$\times$\,RTX 6000 Ada 48GB for Hugging Face/Accelerate inference. The local Prometheus judge launches one worker per visible free GPU and polls GPU memory/utilization before dispatching category-level jobs. {The same seed is also used as the default seed in the keep-$K$, $\mu$-sensitivity, ranker-robustness, and diagnostic scripts unless a script-specific override is supplied.}

\subsection{Baseline and Ablation Details}
\label{app:baseline}
We provide the details for the main baselines and ablations in our experiments in this section. Because Table~\ref{tab:main_baselines} reports resampled summaries rather than single point estimates, each method is defined by both a source pool and a resampling rule:
\begin{itemize}
\item
The single-prompt baseline keeps only the original prompt graph for each question. We first choose the best global keep-$K$ on this reduced pool using the same judge-specific ranker as in the main pipeline. Since each category then contains at most ten graphs, the nominal 20-graph subsample usually coincides with the whole selected pool, so the resulting resampling distribution is nearly degenerate. 
\item
The no-truncation baseline skips cycle filtering and applies ordinary bootstrap to the full graph pool in each category. If a category contains $n$ graphs, each resample draws $n$ graphs with replacement and reruns the same likelihood-based ranker.
\item
The score-of-win baseline draws 40 graphs without replacement, constructs a majority graph from the strict wins within that subset, and ranks models by Copeland-style scores. Exact majorities are encoded as bidirectional edges. Residual score ties are resolved by the same deterministic tie-breaking rule as in the released code.
\item
The random baseline removes cycle-based selection but keeps the same final subset size as the main method. In each outer resample, we average the ranking loss over 10 independently drawn 20-graph subsets from the 50-graph category pool. {This keeps the evaluation budget comparable to the main method while removing the low-cycle selection step.}
\item
The sampling-only baseline replaces semantic perturbation by stochastic replication under the original prompt. We generate five stochastic responses per question (temperature 1.0, top-$p=1.0$, maximum length 2048), construct the resulting comparison graphs, keep the 25 graphs with the smallest cycle score, and then draw 20 of them without replacement in each resample.
\item
{  The block-wise variants operate on prompt families rather than on the full category pool. A block is the set of graphs generated from one original question, including the original graph and its accepted perturbation graphs. \emph{BlockTop2} keeps the two lowest-cycle graphs from each block. Since MT-Bench has 10 questions per category, this yields 20 graphs and can therefore be evaluated directly without an additional 20-out-of-25 resampling step.
\emph{BlockTop1} keeps only the lowest-cycle graph from each block. This yields 10 graphs per category. Its reported values are therefore fixed point estimates for the selected set, which is why the corresponding confidence intervals in Table~\ref{tab:main_baselines} collapse to zero.}
\end{itemize}

\subsection{Details for Additional Analyses}
\label{app:extra-analyses}

{  
All supplementary analyses reuse the same comparison graphs, the same bad-cycle score \(S_{\mu}(G)=C_{3}^{\mathrm{bad}}(G)+\mu C_{4}^{\mathrm{bad}}(G)\), and the same frozen reference ranking as the main pipeline unless stated otherwise. What changes from one analysis to another is only the unit of evaluation and the graph-selection rule. In particular, Figure~\ref{fig:cycle-distance} is a graph-level diagnostic rather than a resampled leaderboard result: for each raw graph, the code fits the same judge-specific report ranker to that graph alone, computes its normalized Spearman distance to the fixed reference ranking, and records the resulting pair \((S_1(G), d_\rho)\). The plotted trend is then drawn from these single-graph records after applying the \(\log(1+\cdot)\) transform to the cycle score and binning on that log scale, so it should be read as a descriptive summary of how cycle burden and ranking error move together, not as another \(25\!\to\!20\) estimate.

The cost-matched curve is built from the same perturbation outputs by restricting the available graph ids before any selection is done. Keeping graph ids \(1,\ldots,k\) yields \(10k\) source graphs per category, so the five reported budgets are \(10,20,30,40,50\); the \(10\)-graph budget is exactly the \(\texttt{graph\_id}=1\) slice. The truncation arm then applies the same selection rule at a matched ratio: it first keeps about one half of the inspected pool by the cycle score and then samples about \(80\%\) of that retained pool without replacement, which gives the sequence \(5\!\to\!4\), \(10\!\to\!8\), \(15\!\to\!12\), \(20\!\to\!16\), and \(25\!\to\!20\). The no-truncation arm uses the same inspected pool but bootstraps the whole pool with replacement. Thus, at each budget, the two arms differ only in whether low-cycle graphs are selected before aggregation; the number of inspected graphs, and therefore the number of pairwise judge calls, is the same.

The robustness and diagnostic runs are organized in the same way. Whenever the goal is to compare ranking methods rather than graph-selection rules, the code first fixes one selected graph list for each judge--category--baseline--repeat cell and then feeds that identical list to Bradley--Terry, Davidson, Copeland, Rank Centrality, and HodgeRank. The Bradley--Terry diagnostic script uses the same selected graph families and evaluates them from four angles: saturated-model deviance, standardized pairwise residuals, Hodge residual energy, and grouped held-out prediction. In that script, global top-\(K\) means the category-level keep-25 rule, block top-\(K\) means keeping the two lowest-cycle graphs from each original-prompt family, and no truncation means using the whole category pool. The grouped cross-validation split is done by original question id, so all perturbations from one prompt stay in the same fold. In the released runs the held-out comparison uses five folds and compares the Bradley--Terry held-out log loss with an empirical pairwise baseline estimated from the training folds with smoothing constant \(0.5\). The warning thresholds are exactly those used in the code: deviance \(p<0.05\) together with deviance per degree of freedom at least \(1.5\), maximum standardized residual at least \(3\) or more than \(10\%\) of pairs with \(|Z|>2\), Hodge residual-energy ratio at least \(0.20\), and a held-out log-loss gap of at least \(0.02\). The final labels \emph{likely}, \emph{possible}, and \emph{no strong} correspond to two or more, exactly one, and zero warnings. The keep-\(K\) and \(\mu\) sweeps are direct reruns of the same truncation pipeline under the report ranker chosen for the main experiments; only \(K\) or \(\mu\) is changed, with the default grids \(K=3,\ldots,50\) and \(\mu\in\{0.1,0.2,0.5,1,2,5,10\}\).

The data-driven choice of \(K\) follows the same grouped split by original prompt family, but the search is carried out over keep proportions \(\alpha\) rather than over integer budgets directly. For each candidate \(\alpha\), the code keeps the lowest-cycle \(\lceil \alpha |\mathcal G_{\mathrm{tr}}^{(b)}| \rceil\) training graphs in each fold, fits the ranker, and scores held-out pairwise log loss on the validation prompt families. After averaging over folds, it applies a one-standard-error rule in the conservative form used by the script: it finds the minimum mean validation loss, adds one standard error computed at that minimum, and then chooses the smallest \(\alpha\) whose mean loss stays below the resulting threshold. The released run adds one further screen that is easy to miss from the main text: for each candidate \(\alpha\), the full selected set is also checked by bootstrap rank stability, using \(30\) bootstrap repeats and threshold \(\tau=0.10\) on the mean pairwise rank distance; candidates that fail this screen are discarded before the final smallest-\(\alpha\) choice is made. The reported \(K\) is then the category-wise integer budget obtained from that selected \(\alpha\).

The additional HumanEval and MATH experiments reuse the same analysis code, but there are two different perturbation units. For task-prompt perturbation, the benchmark prompt is rewritten by the same generate--check--select pipeline used for MT-Bench. For judge-prompt perturbation, the target-model responses are kept fixed and only the Prometheus judging instruction is perturbed. The released Prometheus script stores ten candidate judge-prompt rewrites, treats five of them as semantically equivalent, and by default runs only those five equivalent variants; the remaining variants are kept only for diagnostics and are not included in the reported semantic-equivalent results. Variant \(0\) is the original deployed wording, while the semantic-equivalent perturbation pool is formed by the accepted rewritten variants. Each response-prompt id and judge-prompt variant id is encoded into a single \texttt{graph\_id}, so the downstream selection and ranking code can treat task-prompt perturbation and judge-prompt perturbation through the same graph interface. Overall, these supplementary analyses do not introduce a new reference ranking or a new selection score: they keep the main pipeline fixed and change only the evaluation unit, the budget rule, or the aggregation family.
}

\section{Proof of Theorems}

\subsection{Proof of Theorem~\ref{thm:main}}\label{apd:proof-thm-main}

Recall the definition of $\ti{G}$ in~\eqref{eq:def-of-tiG}. Note that $|\sth{k:(i,j)\in E(G_k)}|\sim \Bin(t,\tfrac{1}{2}+p)$, where $\Bin$ denote the binomial distribution.
For any $1\le i<j\le n$, we have that \begin{align*}
    \prob{(i,j)\in E(\ti{G})} = \prob{\Bin\pth{t,\frac{1}{2}+p}>\frac{t}{2} } = 1-\prob{\Bin\pth{t,\frac{1}{2}+p}\le \frac{t}{2} }.
\end{align*}
On the one hand, it follows from~\cite{chernoff1952measure} that \begin{align*}
    \prob{\Bin\pth{t,\frac{1}{2}+p}\le\frac{t}{2} }\le \exp\pth{-t D_{\mathrm{KL}}\pth{\frac{1}{2}\Vert \frac{1}{2}+p}} = \exp\pth{-\frac{t}{2}\log \pth{\frac{1}{1-4p^2}}},
\end{align*}
where $D_{\mathrm{KL}}(x\|y)\triangleq x\log \tfrac{x}{y}+(1-x)\log \tfrac{1-x}{1-y}$ denotes the Kullback--Leibler (KL) divergence. 
Let $\maE_{ij}$ denote the event that $(i,j)\in E(\ti{G})$.
Consequently, if $t\ge \tfrac{(4+\epsilon)\log n}{\log(1/(1-4p^2))}$, by a union bound, \begin{align}
    \nonumber \prob{\bigcup_{1\le i<j\le n}\maE_{ij}^c}&\le \binom{n}{2}\prob{\maE_{ij}^c}= \binom{n}{2} \prob{(i,j)\notin E(\ti{G})}\\\label{eq:upbd-1}&\le \exp\pth{2\log n-\frac{t}{2}\log \pth{\frac{1}{1-4p^2}}}\le \exp\pth{-\frac{\epsilon}{2}\log n}.
\end{align}
Then, we have $(i,j)\in E(\ti{G})$ for all $1\le i<j\le n$ with probability at least $1-n^{-\epsilon/2}$. In this case, the correct rank can be recovered by a simple rank-by-wins algorithm, which orders the vertices according to the numbers of outgoing edges in $\ti{G}$.

On the other hand, it follows from~\cite[Lemma 4.7.1]{ash1992information} that \begin{align*}
    \prob{\Bin\pth{t,\frac{1}{2}+p}\le\frac{t}{2} } \ge \frac{1}{\sqrt{2t}}\exp\pth{-t D_{\mathrm{KL}}\pth{\frac{1}{2}\Vert \frac{1}{2}+p}} =\frac{1}{\sqrt{2t}} \exp\pth{-\frac{t}{2}\log \pth{\frac{1}{1-4p^2}}}. 
\end{align*}
If $t\le \tfrac{(4-\epsilon)\log n}{\log(1/(1-4p^2))}\wedge n^{\epsilon/2}$, for any $1\le i<j\le n$, we have
\begin{align}
\nonumber     {\frac{n^2}{9}\prob{\maE_{ij}^c}}&\ge {\frac{n^2}{9} \frac{1}{\sqrt{2t}} \exp\pth{-\frac{t}{2}\log \pth{\frac{1}{1-4p^2}}}} = \frac{1}{9\sqrt{2}}\exp\pth{2\log n-\frac{\log t}{2}-\frac{t}{2}\log \pth{\frac{1}{1-4p^2}}}\\\label{eq:proof-eq1}&\ge \frac{1}{9\sqrt{2}}\exp\pth{2\log n-\frac{\epsilon\log n}{4} -(2-\frac{\epsilon}{2})\log n} = \frac{n^{\epsilon/4}}{9\sqrt{2}}.
\end{align}
We then obtain \begin{align}\label{eq:event-lower}
    \prob{\bigcup_{\substack{1\le i\le n/3\\2n/3\le j\le n}}\maE_{ij}^c} &= 1-\prob{\bigcap_{\substack{1\le i\le n/3\\2n/3\le j\le n}}\maE_{ij}} \overset{\mathrm{(a)}}{=}1-(1-\prob{\maE_{ij}^c})^{n^2/9}\overset{\mathrm{(b)}}{=}1-o(1),
\end{align}
where $\mathrm{(a)}$ follows from $\maE_{ij}$ are independent for all $1\le i<j\le n$; $\mathrm{(b)}$ is because $\tfrac{n^2}{9}\prob{\maE_{ij}^c} = \omega(1)$.
For any given $1\le i\le \tfrac{n}{3}$ and $\tfrac{2n}{3}\le j\le n$, since $\maE_{ik}$ and $\maE_{kj}$ are independent, we have \begin{align*}
    \prob{\maE_{ik}\cap \maE_{kj}} = \prob{\maE_{ik}}\prob{ \maE_{kj}}\ge \frac{1}{2}\cdot \frac{1}{2} = \frac{1}{4}.
\end{align*}
Consequently, \begin{align*}
    \prob{\exists\ i<k<j: \maE_{ik}\cap \maE_{kj}}\ge 1- \Big(1-\frac{1}{4}\Big)^{n/3} = 1-o(1).
\end{align*}
Combining this with~\eqref{eq:event-lower}, we conclude that with probability there exists a directed cycle ($i\to k\to j\to i$) if $t\le \tfrac{(4-\epsilon)\log n}{\log(1/(1-4p^2))}\wedge n^{\epsilon/2}$.

Indeed, the expected number of directed triangles in $\tilde G$ already diverges under the condition $t\le \tfrac{(4-\epsilon)\log n}{\log(1/(1-4p^2))}\wedge n^{\epsilon/2}$. Let $\ti{p} = \prob{\Bin(t,\tfrac{1}{2}+p)\le \tfrac{t}{2}}$. For any $1\le i<j<k\le n$, there are two possible directed triangles:\begin{align*}
    i\to k\to j\to i\text{ and }i\to j\to k\to i.
\end{align*}
These occurs with probabilities of $\ti{p}^2 (1-\ti{p})$ and $\ti{p}(1-\ti{p})^2$, respectively. Let $N_3$ denote the number of directed triangles in $\ti{G}$.
Consequently, $$\expect{N_3} =\binom{n}{3}\ti{p}^2(1-\ti{p})+\binom{n}{3}\ti{p}(1-\ti{p})^2= \binom{n}{3}\ti{p}(1-\ti{p}).$$
It follows from~\eqref{eq:proof-eq1} that $\ti{p}\ge n^{-2-\epsilon/4}/\sqrt{2}$ when $t\le \tfrac{(4-\epsilon)\log n}{\log(1/(1-4p^2))}\wedge n^{\epsilon/2}$ and thus $\expect{N_3} = \omega(1)$.

\begin{remark}
A direct first-moment calculation for the number of directed triangles suggests a larger threshold. Indeed, since
\(
\expect{N_3}=\binom{n}{3}\tilde p(1-\tilde p),
\)
and \(\tilde p\to 0\) in the regime of interest, we have \(\expect{N_3}\asymp n^3\tilde p\). Using the exponential estimate
\(
\tilde p \approx \exp\!\left(-\frac{t}{2}\log\tfrac{1}{1-4p^2}\right),
\)
one is naturally led to the heuristic threshold
\(
t\approx \frac{6\log n}{\log(1/(1-4p^2))}.
\)
Therefore, the condition
\(
t\le \tfrac{(4-\epsilon)\log n}{\log(1/(1-4p^2))}
\)
is strictly stronger than what is needed for the first moment of triangles to diverge. The reason is that in the ordered tournament structure, a single reversed edge can typically generate many directed triangles through intermediate vertices, so the actual existence threshold for directed triangles may occur earlier than the first-moment threshold.
\end{remark}

\begin{remark}[Extension to inhomogeneous settings]\label{rmk:inhomo}
    Under the inhomogeneous settings, the edge $(i,j)$ in $G^*$ is retained with probability $\tfrac{1}{2}+p_{ij}$ and reversed to $(j,i)$ with probability $\tfrac{1}{2}-p_{ij}$ for any $1\le i<j\le n$. Following a similar argument as~\eqref{eq:upbd-1} and~\eqref{eq:proof-eq1}, the recovery of the rank by $\ti{G}$ is possible with probability $1-o(1)$ if $\sum_{1\le i<j\le n} \exp(-\tfrac{t}{2}\log (\tfrac{1}{1-4p_{ij}^2})) = o(1)$, while there exists a directed triangle with probability $1-o(1)$ if $\sum_{1\le i\le n/3, 2n/3<j\le n} \exp(-\tfrac{1}{2}\log t-\tfrac{t}{2}\log (\tfrac{1}{1-4p_{ij}^2})) = \omega(1)$.
\end{remark}

\subsection{Proof of Theorem~\ref{thm:main-truncate}}\label{apd:proof-thm-main-truncate}

We first show the following lemma, which shows that no directed triangle implies that $G$ is transitive.

\begin{lemma}\label{lem:tournament-structure}
For a directed graph $G$ in which every pair of distinct vertices is connected by exactly one directed edge, the following are equivalent:
\begin{enumerate}
    \item\label{enu:event1} $G$ contains no directed triangle;
    \item \label{enu:event2}$G$ contains no directed cycle;
    \item\label{enu:event3} $G$ is transitive, i.e., there exists a permutation $\pi\in S_n$ such that
    \[(\pi(r),\pi(s))\in E(G)\qquad\text{for all }1\le r<s\le n.
    \]
\end{enumerate}
\end{lemma}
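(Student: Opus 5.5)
We will prove the cycle of implications \ref{enu:event3}$\Rightarrow$\ref{enu:event2}$\Rightarrow$\ref{enu:event1}$\Rightarrow$\ref{enu:event3}, with the last step carrying essentially all the content.

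\emph{Step 1: \ref{enu:event3}$\Rightarrow$\ref{enu:event2}.} Suppose such a $\pi$ exists and set $\rho=\pi^{-1}$. Then every edge $(u,v)\in E(G)$ satisfies $\rho(u)<\rho(v)$. This holds because $u=\pi(r)$ and $v=\pi(s)$ for some $r\neq s$, and exactly one of $(u,v)$, $(v,u)$ is an edge. Hence $\rho$ strictly increases along every directed path. A directed cycle $v_1\to\cdots\to v_\ell\to v_1$ would therefore give $\rho(v_1)<\rho(v_1)$, which is impossible.

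\emph{Step 2: \ref{enu:event2}$\Rightarrow$\ref{enu:event1}.} This is trivial, since a directed triangle is a directed cycle.

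\emph{Step 3: \ref{enu:event1}$\Rightarrow$\ref{enu:event3}.} We first show that a triangle-free tournament has transitive edges: if $(a,b),(b,c)\in E(G)$ with $a,b,c$ distinct, then $(a,c)\in E(G)$. Otherwise the tournament property gives $(c,a)\in E(G)$, and $a\to b\to c\to a$ is a directed triangle. It follows that the relation ``$u\to v$'' is irreflexive, transitive, and total on distinct pairs, i.e.\ a strict total order on $\{1,\dots,n\}$.

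To build $\pi$, define the out-degree $d(v)=|\{w:(v,w)\in E(G)\}|$. By transitivity, if $u\to v$ then every out-neighbour of $v$ is an out-neighbour of $u$, and $v$ itself is an out-neighbour of $u$ but not of $v$. So $d(u)\ge d(v)+1$. Consequently the out-degrees are pairwise distinct; since they lie in $\{0,\dots,n-1\}$, they are exactly $\{0,\dots,n-1\}$. Let $\pi(r)$ be the vertex with out-degree $n-r$. For $r<s$, the vertices $\pi(r)$ and $\pi(s)$ are joined by exactly one edge. If it were $(\pi(s),\pi(r))$, we would get $d(\pi(s))>d(\pi(r))$, a contradiction. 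Hence $(\pi(r),\pi(s))\in E(G)$.

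An alternative for Step 3 is induction on $n$: remove a vertex of maximum out-degree, show it beats everyone, and recurse. The degree-sorting argument above avoids this.

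The only real obstacle is Step 3, specifically the deduction that triangle-freeness forces transitivity of the edge relation. This is exactly where the hypothesis that every pair is joined by exactly one edge is used. For graphs with reciprocal tie edges the equivalence fails, which is consistent with the paper applying the lemma only to the no-tie tournament model of Section~\ref{subsec:model-setup}.
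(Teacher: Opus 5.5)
Your proof is correct, but it is organized differently from the paper's. The paper proves $(3)\Rightarrow(1)\Rightarrow(2)\Rightarrow(3)$. It treats $(3)\Rightarrow(1)$ as immediate. It gets $(1)\Rightarrow(2)$ from a shortest directed cycle $v_1\to\cdots\to v_m\to v_1$ with $m\ge 4$: the chord between $v_1$ and $v_3$ either shortens the cycle or closes the triangle $v_1\to v_2\to v_3\to v_1$. It then gets $(2)\Rightarrow(3)$ by recursively removing a vertex of maximum out-degree, showing that this vertex dominates everyone, since otherwise $u\to\pi(1)\to w\to u$ would be a cycle.

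You instead go $(3)\Rightarrow(2)\Rightarrow(1)\Rightarrow(3)$. You use $\pi^{-1}$ as a potential that strictly increases along edges, then pass from triangle-freeness directly to transitivity of the edge relation, and finally sort vertices by out-degree. Your route needs no minimal-cycle argument. It also makes explicit something implicit in the paper: the cycle produced in the paper's $(2)\Rightarrow(3)$ step is itself a triangle. So, combined with a direct $(3)\Rightarrow(2)$ like yours, the chord-shortening step could be dropped. In addition, you get a non-recursive description of $\pi$ and the characterization of transitive tournaments by the score sequence $\{0,1,\dots,n-1\}$. In exchange, the paper's decomposition isolates the standard fact that any directed cycle in a tournament forces a directed triangle, which is useful on its own.

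One small detail in your degree comparison: an out-neighbour $w$ of $v$ is automatically different from $u$, because $u\to v$ excludes $v\to u$. This is what lets your transitivity claim for distinct triples apply.
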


\begin{proof}
The implication $\ref{enu:event3} \implies \ref{enu:event1}$ is immediate.

We prove $\ref{enu:event1}\implies \ref{enu:event2}$. Suppose $G$ contains a directed cycle. Choose one of minimum
length
\[
v_1\to v_2\to\cdots\to v_m\to v_1.
\]
Since $G$ has no directed triangle, necessarily $m\ge 4$. Now consider the edge between
$v_1$ and $v_3$. If $v_1\to v_3$, then
\[
v_1\to v_3\to v_4\to\cdots\to v_m\to v_1
\]
is a shorter directed cycle, contradicting minimality. If $v_3\to v_1$, then
\[
v_1\to v_2\to v_3\to v_1
\]
is a directed triangle, again a contradiction. Hence no directed cycle can exist.

Finally, we prove $\ref{enu:event2} \implies  \ref{enu:event3}$. Assume that $G$ contains no directed cycle. We construct the ordering recursively. Choose $\pi(1)$ to be a vertex with maximum out-degree. We claim that $\pi(1)$ dominates every other vertex. Otherwise, if some $u$ satisfies $u\to \pi(1)$, then maximality of the out-degree of $\pi(1)$ implies that there exists a vertex $w$ such that $\pi(1)\to w$ but $u\nrightarrow w$. Since exactly one directed edge is present between each pair of distinct vertices, we must have $w\to u$, and thus
\[
u\to \pi(1)\to w\to u
\]
is a directed cycle, a contradiction. Therefore, $\pi(1)\to v$ for all $v\neq \pi(1)$. Removing $\pi(1)$ and repeating the same argument on the remaining graph, we obtain a permutation $\pi$ such that $\pi(r)\to \pi(s)$ for all $1\le r<s\le n$. Hence $G$ is transitive.

\end{proof}

We then show the following proposition on the edge connecting probability condition on $\maA$. Recall that $q = \tfrac{1-2p}{1+2p}$.

\begin{proposition}\label{prop:edge-d}
    For any $1\le i<j\le n$, let $d\triangleq j-i$. Under the condition law $G\mid \maA$, we have \begin{align*}
        \alpha_d \triangleq \prob{(i,j)\in E(G)\mid \maA} = \frac{1-(d+1)q^d+dq^{d+1}}{(1-q^d)(1-q^{d+1})}.
    \end{align*}
    Moreover, we have $\alpha_1 = \tfrac{1}{1+q} = \tfrac{1}{2}+p$ and $\alpha_{d+1}>\alpha_d$ for all $d\ge 1$.
\end{proposition}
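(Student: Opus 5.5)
The plan is to compute $\alpha_d$ directly from the Mallows representation. By Lemma~\ref{lem:tournament-structure}, conditioning on $\maA$ means $G$ is the transitive tournament of some permutation, and we just showed that this permutation is distributed as $\mathrm{Mallows}_n(q)$. The event $(i,j)\in E(G)$ with $i<j$ says that $i$ precedes $j$ in the induced order; write $\sigma=\pi^{-1}$ for the position map, so the event is $\{\sigma(i)<\sigma(j)\}$.

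I would use the standard sequential (insertion) construction of Mallows. Its key property is that the relative order of a contiguous block of values $\{i,i+1,\dots,j\}$ is itself $\mathrm{Mallows}_{d+1}(q)$, where $d=j-i$, with $i,j$ playing the roles of the smallest and largest elements. To justify this, I would check that $\mathrm{inv}$ decomposes into the inversions within the block, the inversions among the outside elements, and the cross terms between the two. I would then verify that summing $q^{\mathrm{inv}}$ over placements with a fixed internal pattern gives a factor independent of that pattern. The reason is that every element outside the block is either below all block values or above all of them, so the number of its inversions with the block depends only on which positions the block occupies, not on the block's internal order.

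This reduces the problem to $n=d+1$, where we need $P(\text{the value }1\text{ precedes the value }d+1)$ under $\mathrm{Mallows}_{d+1}(q)$. I would compute it by building the permutation via insertion: insert values $1,\dots,d+1$ in order, or equivalently use the inversion-table factorization $\sum_\sigma q^{\mathrm{inv}}=\prod_r (1-q^r)/(1-q)$. Reading the inversion table from the largest value $d+1$, its position relative to the others is geometric, with $P(\text{exactly }k\text{ smaller elements after it})=q^k(1-q)/(1-q^{d+1})$ for $k=0,\dots,d$. Conditional on $k$, the remaining $\mathrm{Mallows}_d(q)$ on $\{1,\dots,d\}$ determines whether $1$ falls among the last $k$ positions. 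That requires the law of the position of value $1$ in $\mathrm{Mallows}_d$, which is again truncated geometric: the number of elements before $1$ is $\ell$ with probability $q^\ell(1-q)/(1-q^d)$. Combining these, $1$ precedes $d+1$ unless $1$ is among the last $k$, which gives
\[
1-\alpha_d=\sum_{k=0}^{d}\frac{q^k(1-q)}{1-q^{d+1}}\sum_{\ell\ge d-k}\frac{q^\ell(1-q)}{1-q^d},
\]
with $\ell$ ranging up to $d-1$. Summing the geometric series and simplifying should produce the stated closed form. This algebraic step, together with the block-reduction argument, is where errors are most likely, and I would cross-check it with $d=1$ and $d=2$. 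For $d=1$ the answer is $1/(1+q)=\tfrac12+p$, matching $\alpha_1$.

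For monotonicity, I would write $1-\alpha_d$ in simplified form, which I expect to look like $q^d(\ldots)/((1-q^d)(1-q^{d+1}))$, and show that $\alpha_{d+1}-\alpha_d>0$ by clearing denominators and factoring out $(1-q)^2$. What remains should be a polynomial in $q$ with nonnegative coefficients. A cleaner alternative I would try first is a coupling argument: the probability that the smallest element of a block precedes the largest should increase when the block is enlarged. Failing that, I would use the explicit difference, which is the expected main obstacle.
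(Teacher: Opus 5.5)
Your derivation of the closed form follows the paper's proof essentially step for step. You use the same block reduction: cross-block inversions depend only on the interleaving pattern, because every outside value lies entirely below or entirely above the block. You also use the same truncated-geometric laws for the position of $d+1$ in $\mathrm{Mallows}_{d+1}(q)$ and of $1$ in the residual $\mathrm{Mallows}_d(q)$, and the same summation. Your double sum does evaluate to
\[
1-\alpha_d=\frac{q^d\bigl(d(1-q)-q(1-q^d)\bigr)}{(1-q^d)(1-q^{d+1})},
\]
which gives the stated formula and $\alpha_1=1/(1+q)=\tfrac12+p$.

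The gap is the claim $\alpha_{d+1}>\alpha_d$. You leave it open, and the structure you predict for it is not what actually happens. Over the common denominator $(1-q^d)(1-q^{d+1})(1-q^{d+2})$, the numerator of $\alpha_{d+1}-\alpha_d$ is
\[
q^d(1-q)\bigl(d(1-q)(1+q^{d+1})-2q(1-q^d)\bigr)
=q^d(1-q)^2\Bigl(d\bigl(1+q^{d+1}\bigr)-2\sum_{r=1}^{d}q^r\Bigr).
\]
The factor left after removing $(1-q)^2$ has negative coefficients (for $d=2$ it is $2q^2(1-q-q^2+q^3)$), so the coefficient check fails exactly where you planned to use it.

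The missing idea is the pairing the paper uses: $1+q^{d+1}-q^r-q^{d+1-r}=(1-q^r)(1-q^{d+1-r})>0$ for $1\le r\le d$ and $0<q<1$. Summing over $r$ shows the bracket equals $\sum_{r=1}^{d}(1-q^r)(1-q^{d+1-r})>0$. Each summand is divisible by $(1-q)^2$ with a quotient having nonnegative coefficients, which is what your heuristic was reaching for.

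Your coupling alternative also works, and more cleanly, once you specify the map. Work in $\mathrm{Mallows}_{d+2}(q)$, which your block reduction permits, and set $j=i+d$. The events $\{i\text{ before }j+1\}\setminus\{i\text{ before }j\}$ and $\{i\text{ before }j\}\setminus\{i\text{ before }j+1\}$ are the relative orders $j,i,j+1$ and $j+1,i,j$. Exchanging the adjacent values $j$ and $j+1$ maps the second event bijectively onto the first. It lowers $\mathrm{inv}$ by exactly one, since every other value compares the same way with $j$ and $j+1$. Hence $\alpha_{d+1}-\alpha_d=(q^{-1}-1)\,\mathbb{P}[\,j+1\text{ before }i\text{ before }j\,]>0$, with no use of the closed form.

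Either way, strictness needs $0<q<1$, i.e.\ $p<\tfrac12$; at $p=\tfrac12$ every $\alpha_d$ equals $1$.
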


\begin{proof}
We first claim that, under the conditioned law, the standardized restriction of $\pi$ to any consecutive block again follows a Mallows distribution with the same parameter $q$. For a block $B=\{i,i+1,\dots,j\}$, let $\pi|_B$ denote the restriction of $\pi$ to the elements in $B$, and let $\operatorname{st}(\pi|_B)\in \maS_{d+1}$ denote its standardization, namely, the permutation obtained from $\pi|_B$ by relabeling the smallest element of $B$ as $1$, the second smallest as $2$, and so on. We claim that $\operatorname{st}(\pi|_B)$ has the Mallows distribution on $\maS_{d+1}$ with parameter $q$.  Denote $L=\{1,\dots,i-1\}$ and $R=\{j+1,\dots,n\}$.

For any permutation $\pi\in\maS_n$ satisfying $\st(\pi|_B)=\tau$, let
\[
\sigma_L\triangleq\st(\pi|_L)\in \maS_{i-1},
\quad
\sigma_R\triangleq\st(\pi|_R)\in \maS_{n-j}.
\]
Moreover, let $w(\pi)=(w_1,\cdots,w_n)\in \{L,B,R\}^n$ denote the interleaving pattern of the three
blocks $L,B,R$ in $\pi$, namely, $w_a=L$ if the $a$-th entry of $\pi$ belongs to $L$, and similarly
for $B$ and $R$. For any such interleaving pattern $w$, define
\[
X^{LB}(w(\pi))\triangleq\{(a,b):1\le a<b\le n,\ w_a=B,\ w_b=L\},
\]
\[
X^{LR}(w(\pi))\triangleq\{(a,b):1\le a<b\le n,\ w_a=R,\ w_b=L\},
\]
\[
X^{BR}(w(\pi))\triangleq\{(a,b):1\le a<b\le n,\ w_a=R,\ w_b=B\}.
\]
These three sets correspond to the three possible types of cross-block inversions, since every
element of $L$ is smaller than every element of $B$, and every element of $B$ is smaller than
every element of $R$.
Therefore, for every $\pi\in\maS_n$ with $\st(\pi|_B)=\tau$, we have
\[
\inv(\pi)
=
\inv(\tau)+\inv(\sigma_L)+\inv(\sigma_R)
+|X^{LB}(w(\pi))|+|X^{LR}(w(\pi))|+|X^{BR}(w(\pi))|.
\]
Consequently,
\begin{align*}
    \sum_{\pi:\,\st(\pi|_B)=\tau} q^{\inv(\pi)}
&=
q^{\inv(\tau)}
\left(\sum_{\sigma_L\in\maS_{i-1}} q^{\inv(\sigma_L)}\right)
\left(\sum_{\sigma_R\in\maS_{n-j}} q^{\inv(\sigma_R)}\right)
\\&~~~~~~~~~~\cdot \left(\sum_{w} q^{|X^{LB}(w(\pi))|+|X^{LR}(w(\pi))|+|X^{BR}(w(\pi))|}\right),
\end{align*}
where $w$ ranges over all possible interleavings of the three blocks $L,B,R$. Here the first
factor depends on $\tau$, while the other three factors are independent of $\tau$. Hence
\[
\sum_{\pi:\,\st(\pi|_B)=\tau} q^{\inv(\pi)} = C\,q^{\inv(\tau)}
\]
for some constant $C$ independent of $\tau$. After normalization over all $\tau\in\maS_{d+1}$,
it follows that $\st(\pi|_B)$ again has the Mallows distribution on $\maS_{d+1}$ with
parameter $q$.
Consequently, \begin{align*}
    \alpha_d = \prob{(i,j)\in E(G)\mid \maA} = \mathbb P_{\mathrm{Mallows}_{d+1}(q)}\qth{1\text{ appears before }d+1}.
\end{align*}

Recall that the Mallows distribution is given by $\prob{\pi\mid \maA} = \tfrac{q^{\inv(\pi)}}{\sum_{\sigma\in \maS_n} q^{\inv(\sigma)}}$. Let $Z_n(q) \triangleq \sum_{\sigma\in \maS_n} q^{\inv(\sigma)}$. Then $Z_n(q) = \prod_{r=1}^n \tfrac{1-q^r}{1-q}$.
Let $P_{d+1}$ denote the position of $d+1$ from the left in a Mallows distribution on $\maS_{d+1}$. If $P_{d+1} = p_{d+1}$, then the largest element $d+1$ creates $d+1-p_{d+1}$ inversions, and the remaining $d$ labels can be arranged arbitrarily. Therefore, the  weight with $P_{d+1} = p_{d+1}$ is $q^{d+1-p_{d+1}}Z_d(q)$. Consequently, \begin{align*}
    \prob{P_{d+1} = p_{d+1}} = \frac{q^{d+1-p_{d+1}} Z_d(q)}{Z_{d+1}(q)} = \frac{(1-q)q^{d+1-p_{d+1}}}{1-q^{d+1}},\quad 1\le p_{d+1}\le d+1.
\end{align*}
Let $P_1$ denote the position of $1$ from the left in a Mallows permutation on $\maS_d$.
If $P_1=p_1$, then the smallest label $1$ has exactly $p_1-1$ larger labels to its left, so the total
weight of such permutations is
\(
q^{p_1-1} Z_{d-1}(q)
\). Consequently, \begin{align*}
    \prob{P_1 = p_1} = \frac{q^{p_1-1}Z_{d-1}(q)}{Z_d(q)} = \frac{(1-q)q^{p_1-1}}{1-q^{d}},\quad 1\le p_1\le d.
\end{align*}
Then, we have \begin{align*}
    \prob{1\text{ appears before }d+1\mid P_{d+1} = p_{d+1}} &= \prob{P_1\le p_{d+1}-1\mid P_{d+1}=p_{d+1}}\\&=\sum_{p_1 = 1}^{p_{d+1}-1}\frac{(1-q)q^{p_1-1}}{1-q^d} = \frac{1-q^{p_{d+1}-1}}{1-q^d}.
\end{align*}
Consequently, we obtain \begin{align*}
    \alpha_d 
    &= \prob{1\text{ appears before }d+1} \\
    &= \sum_{p_{d+1} = 1}^{d+1} \prob{P_{d+1}=p_{d+1}}\prob{1\text{ appears before }d+1\mid P_{d+1} = p_{d+1}}\\
    &=\sum_{p_{d+1}=1}^{d+1}  \frac{(1-q)q^{d+1-p_{d+1}}}{1-q^{d+1}}\cdot \frac{1-q^{p_{d+1}-1}}{1-q^d} = \frac{1-(d+1)q^d+dq^{d+1}}{(1-q^d)(1-q^{d+1})}.
\end{align*}
For $d =1$, we have $\alpha_1 = \tfrac{1}{1+q} = \tfrac{1}{2}+p$. For any $d\ge 1$, we have \begin{align*}
    \alpha_{d+1}-\alpha_d
&=\frac{q^d(1-q)}{(1-q^d)(1-q^{d+1})(1-q^{d+2})}
\pth{d(1-q)(1+q^{d+1})-2q(1-q^d)}.
\end{align*}
Since $q^r+q^{d+1-r}\le 1+q^{d+1}$ for any $1\le r\le d$, summing over $1\le r\le d$ yields $2\sum_{r=1}^d q^r\le d(1+q^{d+1})$. Then $d(1-q)(1+q^{d+1})\ge 2q(1-q^d)$ and thus $\alpha_{d+1}\ge \alpha_d$ for any $d\ge 1$. Since $0<q<1$, we have $\alpha_{d+1}>\alpha_d$ for any $d\ge 1$.
\end{proof}

\begin{proof}[Proof of Theorem~\ref{thm:main-truncate}]    
Recall the integration graph $\ti{G}$ defined in~\ref{eq:def-of-tiG}. For any $1\le i<j\le n$ with $d = j-i$, note that $|\{k:(i,j)\in E(\bar G_k)\}|\sim \Bin(t, \alpha_d)$. Consequently, it follows from~\cite{chernoff1952measure} that \begin{align*}
    \prob{(i,j)\notin \ti{G}} \le \prob{\Bin(t,\alpha_d)\le t/2}\le \exp\pth{-t D_{\mathrm{KL}}(1/2\Vert \alpha_d)} = e^{-tI_d},
\end{align*}
where $I_d\triangleq D_{\mathrm{KL}}(1/2\Vert \alpha_d) = \tfrac{1}{2}\log\tfrac{1}{4\alpha_d(1-\alpha_d)}$. We note that there are $n-d$ pairs $(i,j)$ with $j-i = d$. Consequently, we conclude that \begin{align}\label{eq:proof-thm2-1}
    \prob{\ti G\ne G^*} \le \sum_{1\le i<j\le n} \prob{(i,j)\notin E(\ti{G})}\le \sum_{d=1}^{n-1} (n-d) e^{-t I_d}\le n e^{-tI_1}+n^2 e^{-tI_2},
\end{align}
where the last inequality follows from the fact that $\tfrac{1}{2}<\alpha_1<\alpha_2<\alpha_d<1$ implies $I_d>I_2$ for all $d\ge 3$.
Note that \begin{align*}
    (4\alpha_1(1-\alpha_1))^2 - 4\alpha_2 (1-\alpha_2) = \frac{4q^2(1-q)^2(2q^2+3q+2)}{(1+q)^4(1+q+q^2)^2}\ge 0.
\end{align*}
Hence, we have $I_2 = \tfrac{1}{2}\log(\tfrac{1}{4\alpha_2(1-\alpha_2)})\ge \log(\tfrac{1}{4\alpha_1(1-\alpha_1)}) = 2I_1$. Combining this with~\eqref{eq:proof-thm2-1}, we obtain \begin{align*}
    \prob{\ti G\ne G^*} \le n e^{-tI_1}+n^2 e^{-2tI_1}\le n^{-\epsilon}+n^{-\epsilon/2} = o(1),
\end{align*}
where the last inequality follows from $t\ge \tfrac{(2+\epsilon)\log n}{\log (1/(1-4p^2))} = \tfrac{(2+\epsilon)\log n}{2I_1}$.
\end{proof}

\subsection{Proof of Theorem~\ref{thm:MLE}}\label{apd:proof-thm-mle}

For any $1\le i\le n-1$, denote $\pi^{i,i+1} \in \maS_n$ as the permutation such that $$\pi^{i,i+1}(i) = i+1,\quad \pi^{i,i+1}(i+1) = i,\quad \pi^{i,i+1}(j) = j,\quad \forall j\neq i,i+1.$$

We show that with probability $1-o(1)$ there exists $i\in \sth{1,3,5,\cdots, \lfloor n/2\rfloor}$ such that $$\sum_{k=1}^t \inv(\pi_k,\pi^{i,i+1})\le \sum_{k=1}^t \inv(\pi_k,\pi^{*})$$. Recall $N_{i,j}$ defined in~\eqref{eq:def-of-Nij}. We note that \begin{align*}
    \sum_{k=1}^t \inv(\pi_k,\pi^{i,i+1})- \sum_{k=1}^t \inv(\pi_k,\pi^{*}) = N_{i+1,i}-N_{i,i+1} = t-2N_{i,i+1}.
\end{align*}
Note that $N_{12},N_{34},\dots$ are independent. This is because, for each $\pi_k$, the relative orders on the disjoint pairs $(1,2),(3,4),\dots$ depend on disjoint consecutive blocks, whose standardized restrictions are independent under the Mallows distribution by the same factorization argument as in Proposition~\ref{prop:edge-d}. Since $\pi_1,\dots,\pi_t$ are i.i.d., the corresponding counts $N_{12},N_{34},\dots$ are independent.
By Proposition~\ref{prop:edge-d}, $N_{i,i+1}\sim \Bin(t,\tfrac{1}{2}+p)$.
It follows from~\citep[Lemma 4.7.1]{ash1992information} that \begin{align*}
    \prob{\sum_{k=1}^t \inv(\pi_k,\pi^{i,i+1})\le  \sum_{k=1}^t \inv(\pi_k,\pi^{*})} &= \prob{\Bin(t,\frac{1}{2}+p)\le \frac{t}{2}}\\&\ge \frac{1}{\sqrt{2t}}\exp\pth{-\frac{t}{2}\log\pth{\frac{1}{1-4p^2}}}. 
\end{align*}
Then we have  \begin{align*}
    &~\lfloor n/2\rfloor  \prob{\sum_{k=1}^t \inv(\pi_k,\pi^{i,i+1})\le  \sum_{k=1}^t \inv(\pi_k,\pi^{*})} \\\ge&~ \frac{\lfloor n/2\rfloor }{\sqrt{2t}}\exp\pth{-\frac{t}{2}\log\pth{\frac{1}{1-4p^2}}}\\\ge&~ \frac{1}{3}\exp\pth{\log n-\frac{1}{2}\log t - \frac{t}{2}\log\pth{\frac{1}{1-4p^2}}}\\\ge&~ \frac{1}{3}\exp\pth{\log n - \frac{\epsilon}{4}\log n - \pth{1-\frac{\epsilon}{2}}\log n} = \frac{n^{\epsilon/4}}{3}.
\end{align*}
Therefore,
\begin{align*}
    \prob{\hat\pi_{\mathrm{MLE}} = \pi^*} &\le \prob{\bigcap_{i=1}^{\lfloor n/2\rfloor }\sth{\sum_{k=1}^t \inv(\pi_k,\pi^{i,i+1})>  \sum_{k=1}^t \inv(\pi_k,\pi^{*})}}\\&\le \pth{1-\prob{\Bin(t,\frac{1}{2}+p)>\frac{t}{2}}}^{\lfloor n/2\rfloor } = o(1),
\end{align*}
which implies $\prob{\hat\pi_{\mathrm{MLE}} \neq  \pi^*} = 1-o(1)$.

\section{Full Prompts and Additional Results}
\subsection{Full Prompts}
\label{app:prompt}

This appendix records the exact prompt templates used in the evaluation pipeline and clarifies how the final judge decision is produced. Placeholders such as \texttt{<instruction>}, \texttt{<answer\_A>}, and \texttt{<ORIGINAL>} denote task-specific content inserted at runtime. We keep the wording close to the deployed implementation, since even small prompt changes can materially affect judge behavior.

\paragraph{Paraphrase generation and semantic-equivalence filtering.}
To construct semantic perturbations, we ask the generator to return exactly \(k\) meaning-preserving rewrites of the original instruction in JSON format.

\begin{promptbox}{Paraphrase generation prompt}
\begin{PromptVerb}
You are a strict paraphrase generator.

Generate exactly <k> paraphrases that are semantically equivalent to the ORIGINAL text.

Rules:
- Preserve meaning exactly (same intent, constraints, numbers, entities, and requirements).
- Keep the SAME language as the ORIGINAL.
- Do NOT answer the question/instruction; only rewrite it.
- Each paraphrase should be meaning-preserving but phrased differently.
- Output MUST be a JSON array of exactly <k> strings. No extra text.

ORIGINAL:
<<ORIGINAL_START>>
<ORIGINAL>
<<ORIGINAL_END>>
\end{PromptVerb}
\end{promptbox}

Raw paraphrase generation can occasionally introduce mild semantic drift. We therefore apply a second-stage filter and retain only candidates marked \texttt{YES} by a strict equivalence checker.

\begin{promptbox}{Semantic equivalence checking prompt}
\begin{PromptVerb}
You are a strict semantic equivalence judge.

For each CANDIDATE, decide if it is semantically equivalent to the ORIGINAL.
Equivalent means: same intent + same constraints + no added/removed requirements + same meaning.

Return format:
- Output MUST be a JSON array of strings of length <num_candidates>.
- Each element must be exactly "YES" or "NO".
- No extra text.

ORIGINAL:
<<ORIGINAL_START>>
<ORIGINAL>
<<ORIGINAL_END>>

CANDIDATES:
1) <candidate_1>
2) <candidate_2>
...
<num_candidates>) <candidate_num_candidates>
\end{PromptVerb}
\end{promptbox}

\paragraph{Judge prompt.}
GPT-5 is invoked with a single pairwise-comparison prompt and no additional task-specific system instruction in the released code path. The model is required to produce a short comparison followed by a final verdict in square brackets.

\begin{promptbox}{GPT-5 pairwise judge prompt}
\begin{PromptVerb}
Instruction:
Below is a question and a candidate response from a user. Please act as an impartial
judge and evaluate the quality of the responses provided by two AI assistants to the
user question displayed below. You should choose the assistant that follows the user's
instructions and answers the user's question better. Your evaluation should consider
factors such as the helpfulness, relevance, accuracy, depth, creativity, and level of
detail of their responses. Begin your evaluation by comparing the two responses and
provide a short explanation(NO MORE THAN 512 TOKENS). Avoid any position biases and ensure that the order in
which the responses were presented does not influence your decision. Do not allow
the length of the responses to influence your evaluation. Do not favor certain names
of the assistants. Be as objective as possible. After providing your brief explanation,
you *must* output the final verdict by strictly following this format: "[A]" if
assistant A is better, "[B]" if assistant B is better, and "[C]" for a tie. Provide a
result exclusively in **square brackets** (e.g., Verdict: [A]).
Question:
<instruction>
Answer A:
<answer_A>
Answer B:
<answer_B>
Judgement:
\end{PromptVerb}
\end{promptbox}

Because local deployment is relatively inexpensive, and prior studies have shown that positional bias can influence LLM-as-a-judge decisions \citep{shi2025judging}, our Prometheus evaluator (M-Prometheus-14B) adopts a position-debiased hybrid protocol. For each answer pair, we first run the pairwise prompt twice: once in the original order \((A,B)\) and once after swapping the two answers \((B,A)\). If the two verdicts agree after mapping the reversed-order output back to the original answer identities, we keep that pairwise decision. Otherwise, we fall back to direct scoring, grade the two answers independently on a 1--5 scale, and select the higher-scoring answer; equal scores are recorded as ties. In the implementation, each answer is truncated to at most 12{,}000 characters before judging, and generation is deterministic by default (temperature 0, top-\(p=1\), maximum 512 new tokens).

\begin{promptbox}{Prometheus pairwise system prompt}
\begin{PromptVerb}
You are a fair judge assistant assigned to deliver insightful feedback that compares individual performances, highlighting how each stands relative to others within the same cohort.
\end{PromptVerb}
\end{promptbox}

\begin{promptbox}{Prometheus pairwise user prompt}
\begin{PromptVerb}
###Task Description:
An instruction (might include an Input inside it), two responses to evaluate, and a score rubric representing a evaluation criteria are given.
1. Write a detailed feedback that assess the quality of two responses strictly based on the given score rubric, not evaluating in general.
2. After writing a feedback, choose a better response between Response A and Response B. You should refer to the score rubric.
3. The output format should look as follows:
"Feedback: (write a feedback for criteria)
[RESULT] (A or B)"
4. Please do not generate any other opening, closing, and explanations.
###Instruction:
<instruction>
###Response A:
<answer_A>
###Response B:
<answer_B>
###Score Rubric:
<pairwise_rubric>
###Feedback:
\end{PromptVerb}
\end{promptbox}

Unless a task-specific rubric is already provided in the source data, we use the following default pairwise rubric.

\begin{promptbox}{Default pairwise rubric used by Prometheus}
\begin{PromptVerb}
[Instruction Following, Helpfulness, Relevance, Accuracy, Depth, Clarity, Safety]
Choose the better response by comparing how well it follows the user's instruction, how accurate and relevant it is, how helpful and sufficiently detailed it is, how clear and coherent it is, and whether it avoids unsafe or misleading content. Do not prefer a response merely because it is longer.
\end{PromptVerb}
\end{promptbox}

\begin{promptbox}{Prometheus direct-scoring system prompt}
\begin{PromptVerb}
You are a fair judge assistant tasked with providing clear, objective feedback based on specific criteria, ensuring each assessment reflects the absolute standards set for performance.
\end{PromptVerb}
\end{promptbox}

\begin{promptbox}{Prometheus direct-scoring user prompt}
\begin{PromptVerb}
###Task Description:
An instruction (might include an Input inside it), a response to evaluate, a reference answer that gets a score of 5, and a score rubric representing a evaluation criteria are given.
1. Write a detailed feedback that assess the quality of the response strictly based on the given score rubric, not evaluating in general.
2. After writing a feedback, write a score that is an integer between 1 and 5. You should refer to the score rubric.
3. The output format should look as follows:
"Feedback: (write a feedback for criteria)
[RESULT] (an integer number between 1 and 5)"
4. Please do not generate any other opening, closing, and explanations.
###The instruction to evaluate:
<instruction>
###Response to evaluate:
<answer>
###Score Rubrics:
<direct_rubric>
###Feedback:
\end{PromptVerb}
\end{promptbox}

\begin{promptbox}{Default direct-scoring rubric used by Prometheus}
\begin{PromptVerb}
[Instruction Following, Helpfulness, Relevance, Accuracy, Depth, Clarity, Safety]
Score 1: The response fails to address the user's request, is largely incorrect, irrelevant, incoherent, or unsafe.
Score 2: The response addresses the request only partially and contains major omissions, errors, weak reasoning, poor clarity, or notable safety issues.
Score 3: The response is acceptable and mostly relevant, but it misses useful details, has minor factual or reasoning issues, or is only moderately clear/helpful.
Score 4: The response follows the instruction well and is clear, relevant, and mostly accurate, with only small deficiencies in completeness, precision, or clarity.
Score 5: The response fully satisfies the request and is highly accurate, relevant, coherent, appropriately detailed, and safe.
\end{PromptVerb}
\end{promptbox}

The hybrid evaluator above can be summarized as follows.
\begin{enumerate}
    \item Run the pairwise prompt on \((A,B)\) and parse the last valid tag of the form \texttt{[RESULT] A} or \texttt{[RESULT] B}.
    \item Swap the order, run the same pairwise prompt on \((B,A)\), and map the reversed verdict back to the original identities.
    \item If the two pairwise verdicts agree, output that verdict.
    \item Otherwise, score \(A\) and \(B\) independently using the direct-scoring prompt.
    \item Output \texttt{A} if score\((A)\) is greater than score\((B)\), output \texttt{B} if score\((B)\) is greater than score\((A)\), and output a tie otherwise.
\end{enumerate}
This description is more faithful to the deployed evaluator than a one-shot pairwise prompt alone: the released Prometheus judge is a two-pass, order-checked pairwise evaluator with a direct-scoring fallback.

\subsection{Additional results}
\label{app:results}

\paragraph{Structural diagnostics of the comparison graphs.}
Figure~\ref{fig:graph_stats} reports four descriptive statistics of the comparison graphs: the mean numbers of bad 3-cycles and bad 4-cycles, the mean draw rate, and the mean size of the largest strongly connected component (SCC), all broken down by MT-Bench category and judge.

\begin{figure}[htbp]
    \centering
    \begin{minipage}[t]{0.49\textwidth}
        \centering
        \includegraphics[width=\linewidth]{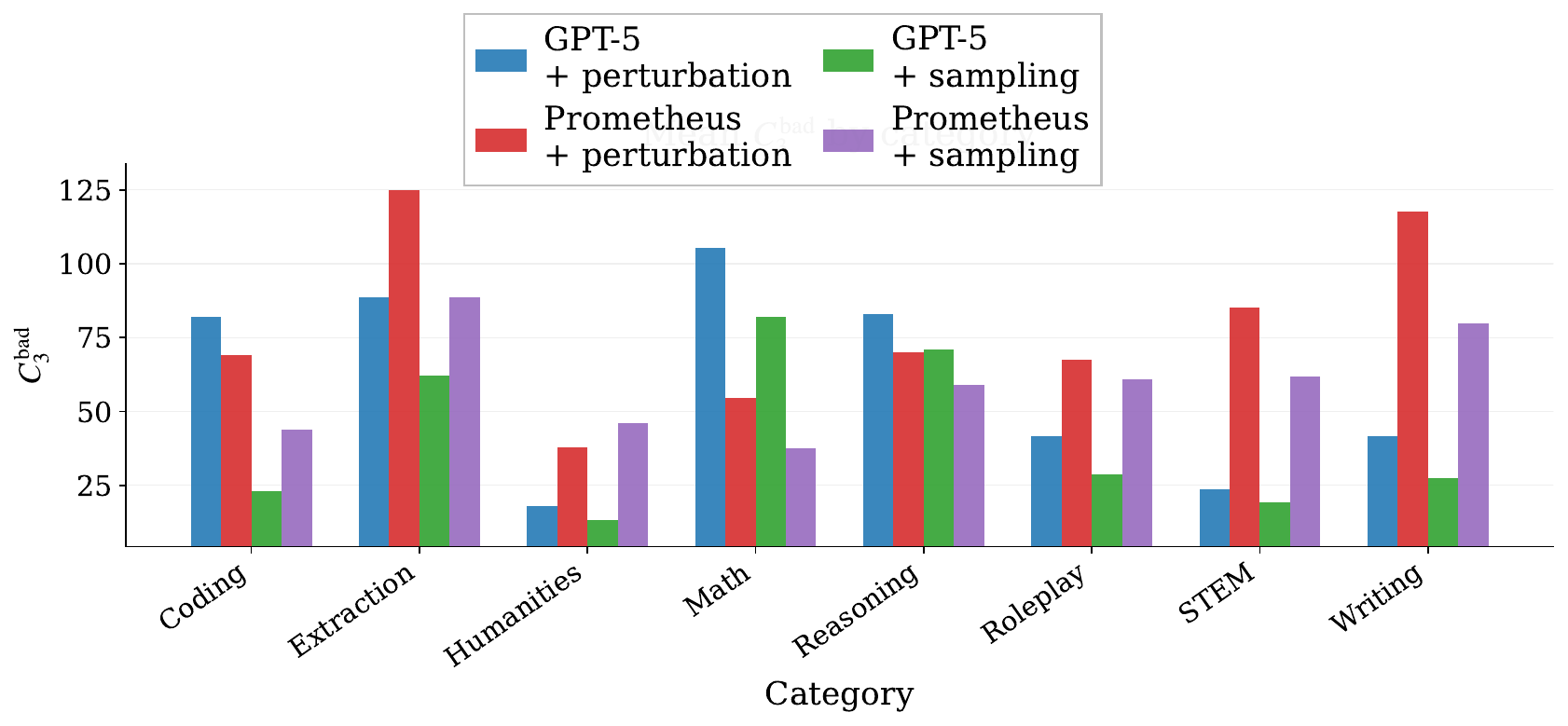}\\[-1mm]
        {\small (a) Mean bad 3-cycles.}
    \end{minipage}\hfill
    \begin{minipage}[t]{0.49\textwidth}
        \centering
        \includegraphics[width=\linewidth]{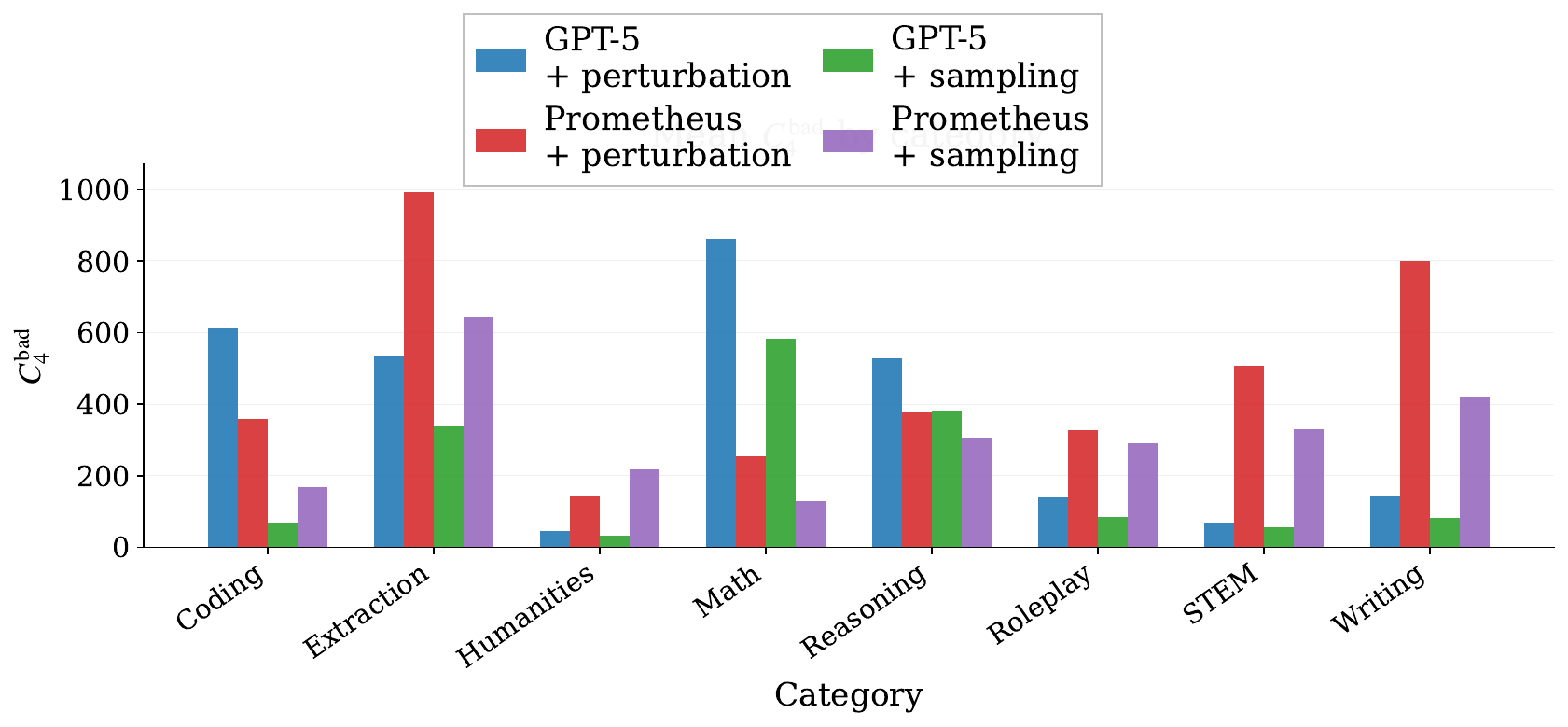}\\[-1mm]
        {\small (b) Mean bad 4-cycles.}
    \end{minipage}

    \vspace{2mm}

    \begin{minipage}[t]{0.49\textwidth}
        \centering
        \includegraphics[width=\linewidth]{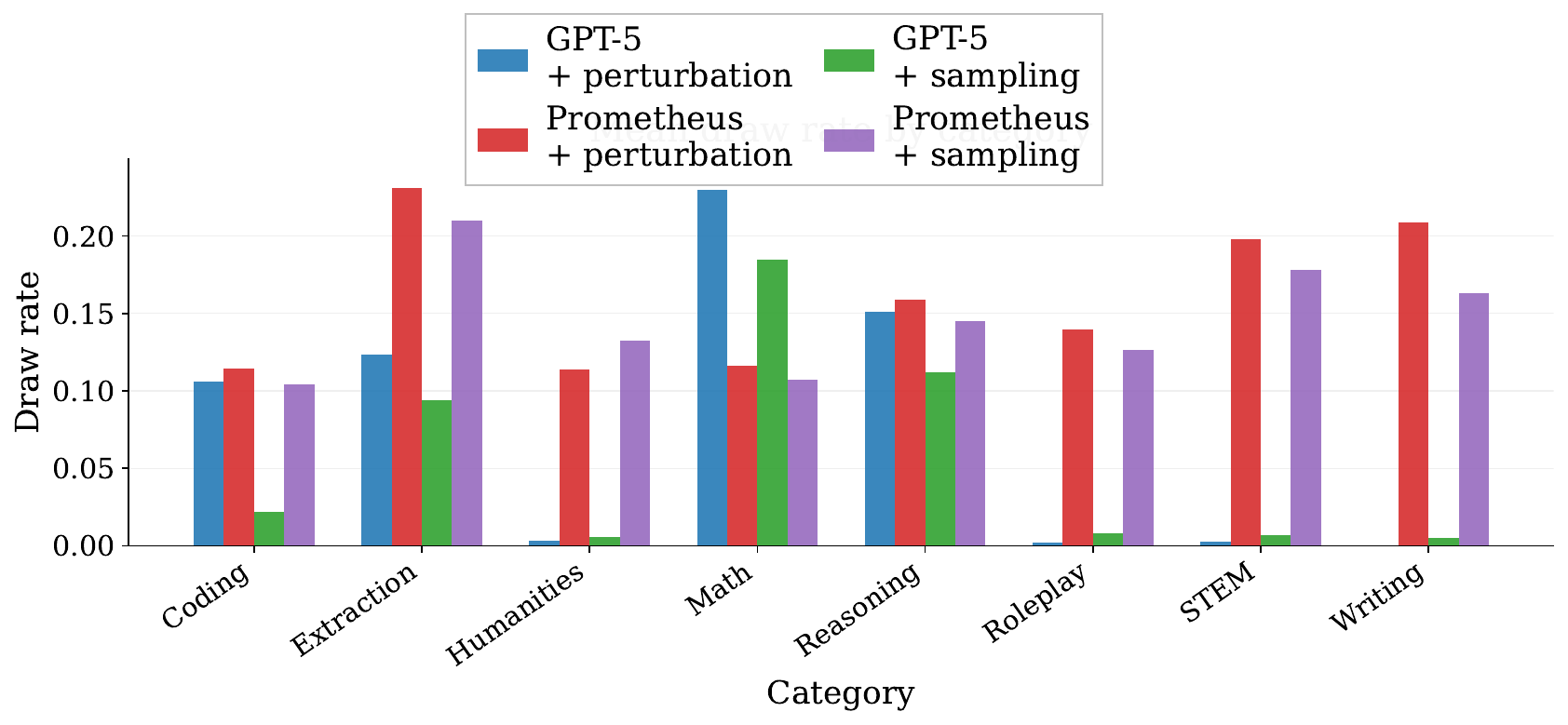}\\[-1mm]
        {\small (c) Mean draw rate.}
    \end{minipage}\hfill
    \begin{minipage}[t]{0.49\textwidth}
        \centering
        \includegraphics[width=\linewidth]{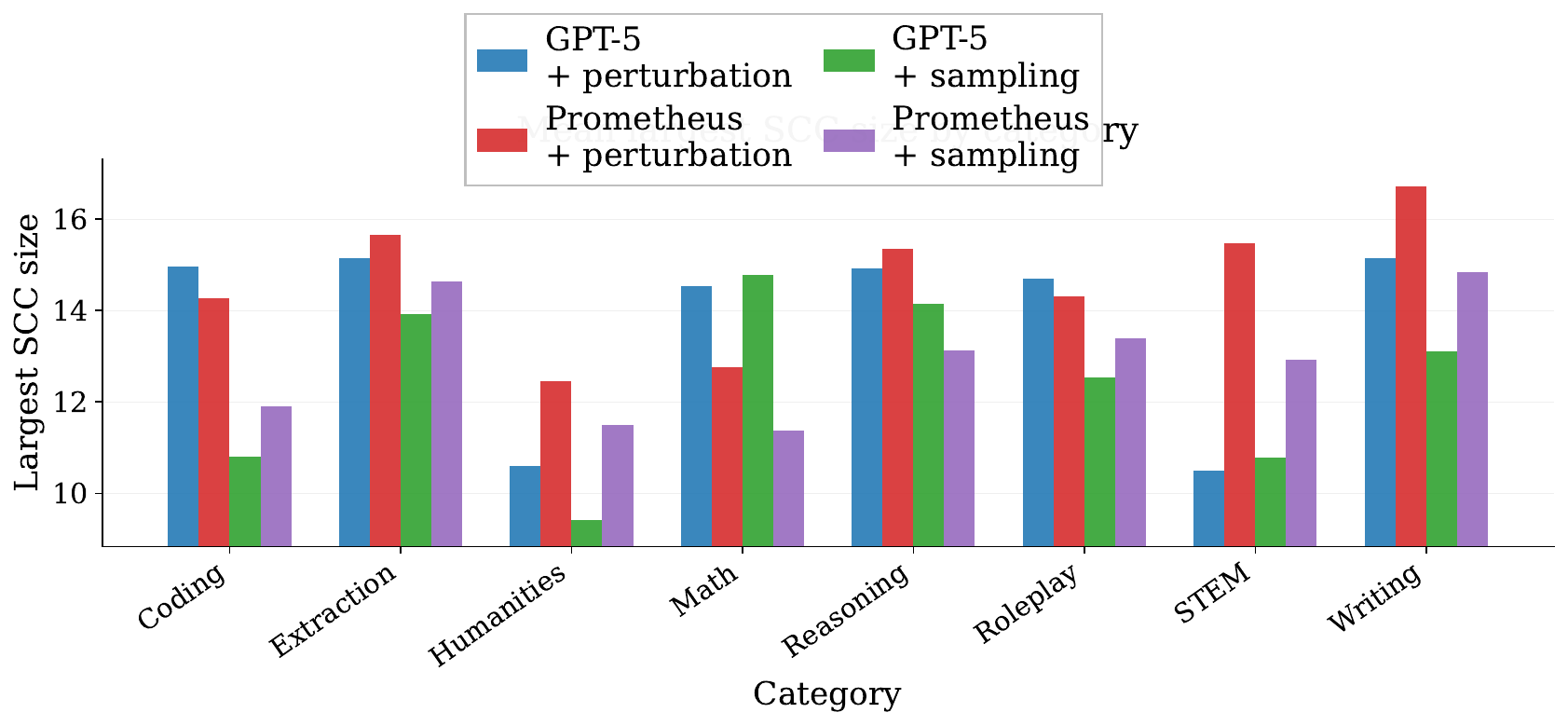}\\[-1mm]
        {\small (d) Mean largest SCC size.}
    \end{minipage}
    \caption{Descriptive statistics of the comparison graphs by category. Prompt perturbation usually increases short-cycle activity, draw rate, and the size of the largest SCC, indicating that it surfaces more of the latent ambiguity among closely matched models. Despite this increase in raw graph inconsistency, the perturbed-and-truncated pipeline yields better final rankings, so lower bad-cycle counts alone should not be interpreted as better ranking evidence.}
    \label{fig:graph_stats}
\end{figure}

Two patterns stand out. First, the difficulty profile is judge-dependent. Under GPT-5, Math and Extraction exhibit the largest short-cycle burden, whereas under Prometheus the heaviest inconsistency appears in Extraction and Writing. Humanities is the cleanest category for both judges, with visibly fewer short cycles and smaller SCCs. This suggests that graph inconsistency is not determined by the task alone; it also depends on how a given judge reacts to that task.

Second, prompt perturbation often increases bad 3-cycles, bad 4-cycles, draw rate, and, in several categories, the size of the largest SCC. This does not contradict the stronger downstream rankings reported in the main text. A more plausible interpretation is that semantic perturbation exposes close contests that are genuinely ambiguous, thereby revealing more of the uncertainty already present among similarly strong models. By contrast, the no-perturb setting can look artificially clean because stochastic response variation sometimes turns a close comparison into a decisive win or loss. In that regime, the graph contains fewer local contradictions, but it is not necessarily closer to the underlying ranking. The draw-rate panels support this interpretation: in several GPT-5 no-perturb categories, draws nearly disappear, which is more consistent with suppressed ambiguity than with cleaner preference information. At the same time, the largest SCC remains substantial across conditions, so the graphs stay globally connected enough for downstream aggregation to propagate local evidence across models.

\paragraph{Robustness across ranking distances.}
In the main text, we use the normalized Spearman $\rho$ distance as the primary ranking loss. To verify that the conclusions do not depend on this single choice, we also report the normalized Kendall tau distance, the normalized Spearman footrule distance, and the normalized Chebyshev distance. Let $r=(r_1,\dots,r_n)$ be the estimated ranking and let $r^{\star}=(r_1^{\star},\dots,r_n^{\star})$ be the reference ranking on the same set of $n$ models. We define
\[
d_{\tau}(r,r^{\star})
=
\frac{2}{n(n-1)}
\sum_{1 \le i < j \le n}
\mathbf{1}\!\left\{(r_i-r_j)(r_i^{\star}-r_j^{\star})<0\right\},
\]
\[
d_{F}(r,r^{\star})
=
\frac{\sum_{i=1}^{n}|r_i-r_i^{\star}|}{\lfloor n^2/2 \rfloor},
\quad
d_{\infty}(r,r^{\star})
=
\frac{\max_{1 \le i \le n}|r_i-r_i^{\star}|}{n-1}.
\]
Lower values indicate rankings closer to the reference order.

Figure~\ref{fig:metric_robustness} shows that the same qualitative pattern holds under all four distances and for both judges. Each curve drops quickly when \(K\) grows from very small values, reaches its best region at an intermediate budget, and then flattens or slightly rebounds as additional graphs are retained. The absolute scales differ, as expected. The Chebyshev distance is the noisiest because it depends only on the single worst-displaced model, whereas Kendall tau and Spearman footrule average disagreement more globally. Even so, the preferred operating region is essentially unchanged: the best performance is consistently achieved at an intermediate keep-$K$ budget close to the value used in the main text. Prometheus remains somewhat higher and noisier in absolute error than GPT-5, but the location and shape of the optimum are similar across judges. This reinforces the conclusion that the benefit of cycle-based truncation is robust to the choice of ranking distance.

\begin{figure}[htbp]
    \centering
    \begin{minipage}[t]{0.49\textwidth}
        \centering
        \includegraphics[width=\linewidth]{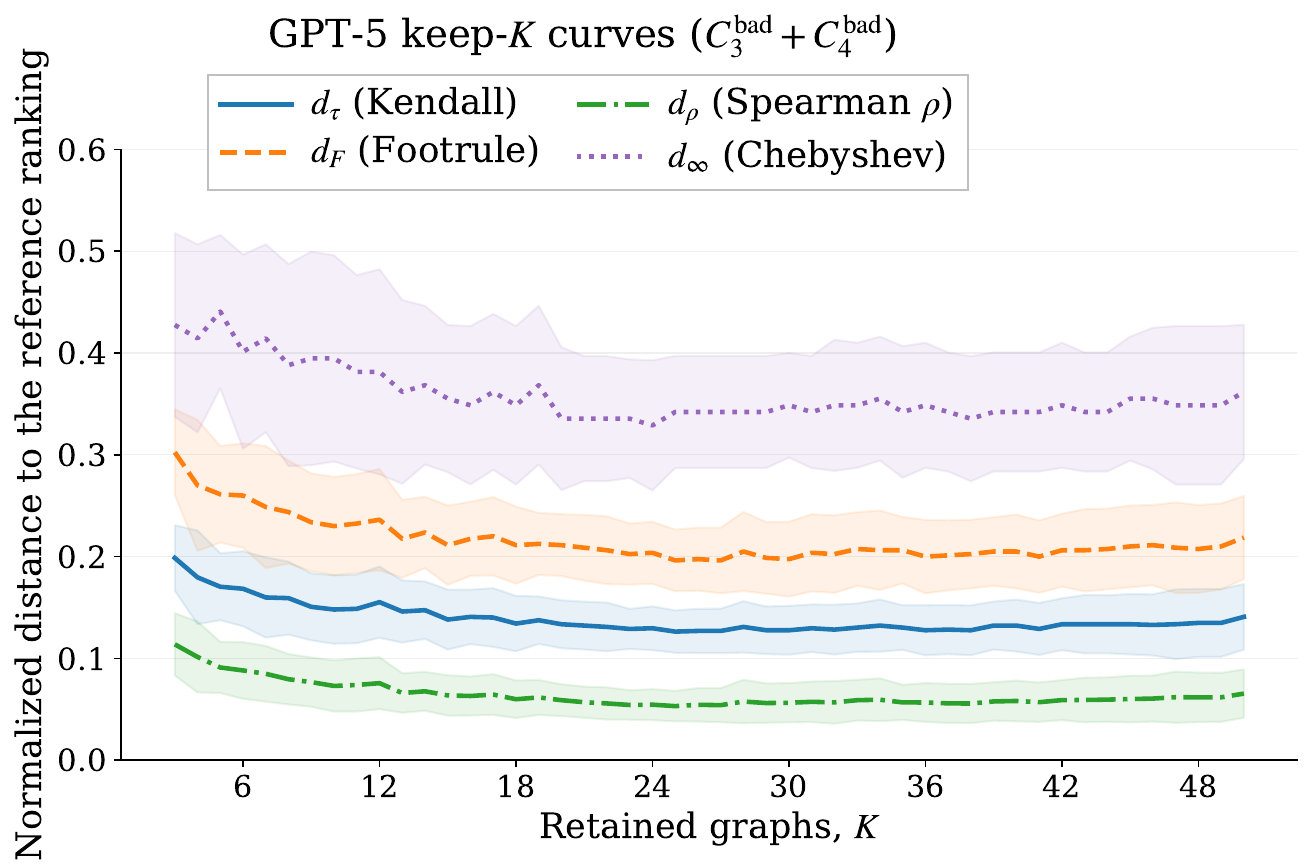}
    \end{minipage}\hfill
    \begin{minipage}[t]{0.49\textwidth}
        \centering
        \includegraphics[width=\linewidth]{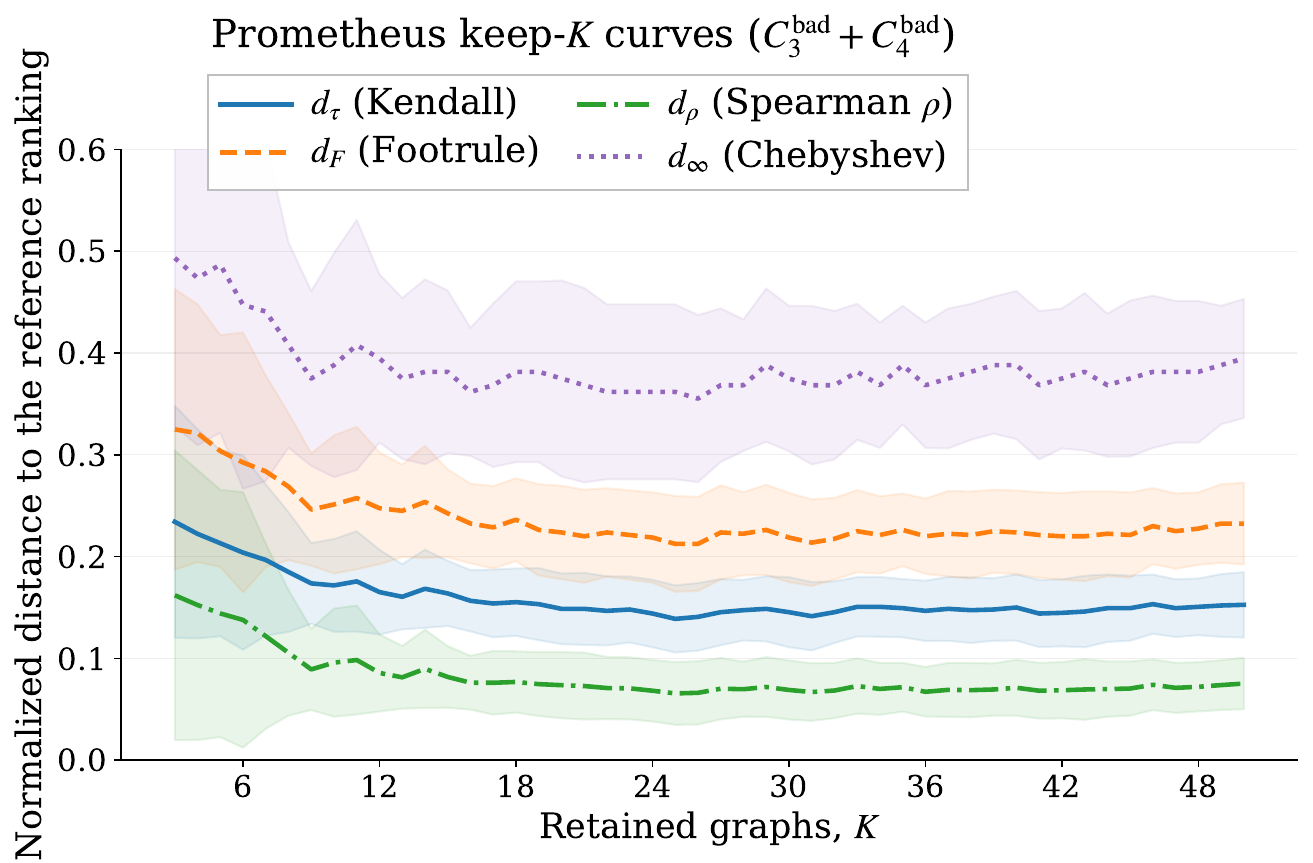}
    \end{minipage}
    \caption{Keep-$K$ curves under four normalized ranking distances. Left: GPT-5 judge. Right: Prometheus judge. The favorable mid-range truncation regime is consistent across Kendall tau, Spearman footrule, Spearman $\rho$, and Chebyshev distances.}
    \label{fig:metric_robustness}
\end{figure}

\paragraph{Semantic perturbation versus sampling-only control.}
Figure~\ref{fig:perturb_vs_sampling} compares semantic prompt perturbation with a sampling-only control under the same cycle-aware truncation score $C_3^{\mathrm{bad}}+C_4^{\mathrm{bad}}$, measured by the normalized Spearman $\rho$ distance. The pattern is the same for both judges. Under GPT-5, the perturbation curve starts around $0.11$ and drops to about $0.05$--$0.07$ once $K$ reaches a moderate size, while the sampling-only baseline stays close to $0.11$--$0.13$ throughout. Under Prometheus, the perturbation curve falls from about $0.15$ to roughly $0.06$--$0.08$, whereas the sampling-only baseline remains near $0.11$--$0.13$ with little change. The gain from perturbation is therefore not tied to a narrow budget range; it holds across the full keep-$K$ sweep.

\begin{figure}[htbp]
    \centering
    \begin{minipage}[t]{0.49\textwidth}
        \centering
        \includegraphics[width=\linewidth]{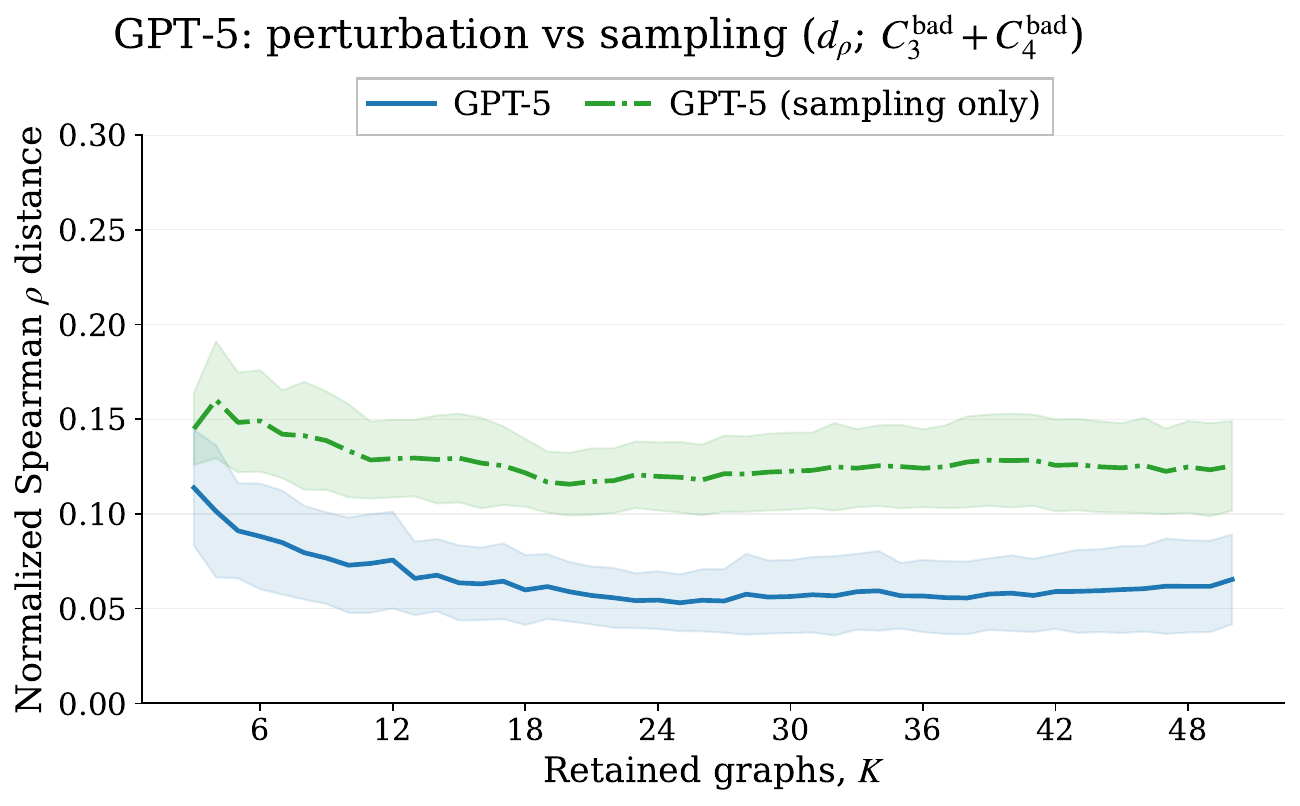}
    \end{minipage}\hfill
    \begin{minipage}[t]{0.49\textwidth}
        \centering
        \includegraphics[width=\linewidth]{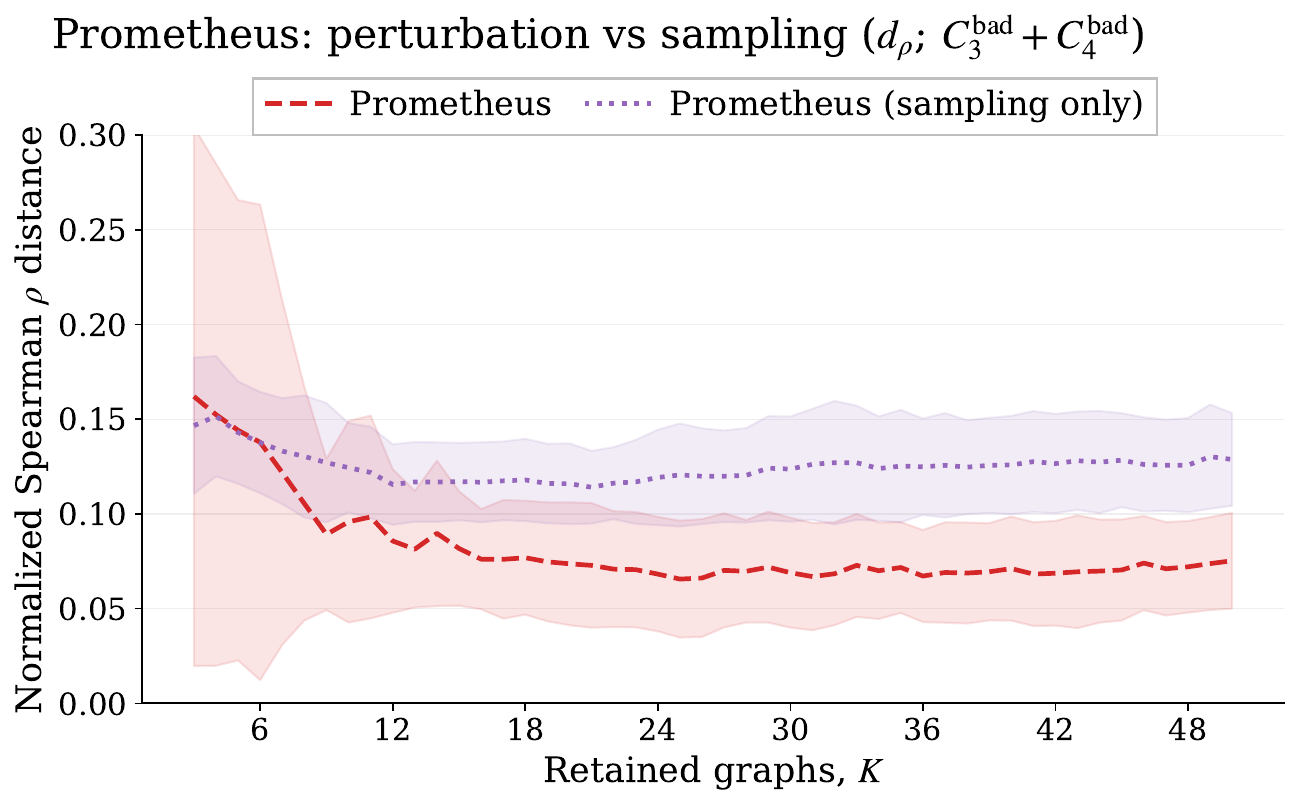}
    \end{minipage}
    \caption{Semantic prompt perturbation versus a sampling-only control under cycle-aware truncation. Lower normalized Spearman distance indicates better agreement with the reference ranking. For both GPT-5 and Prometheus, perturbation stays well below the sampling-only baseline over the full range of retained budgets $K$.}
    \label{fig:perturb_vs_sampling}
\end{figure}

This comparison makes the role of perturbation clear. The improvement does not come from simply generating more graph instances from the same prompt. If repeated sampling were enough, the sampling-only baseline should also improve as more graphs are retained. That is not what the figure shows: for both judges, the sampling-only curves are almost flat. By contrast, the perturbation curves fall quickly at small and moderate $K$ and then level off at a much lower value. The main gain therefore comes from perturbation itself, not from sample count alone.

The shape of the curves also helps explain why perturbation works well with cycle-aware truncation. Most of the improvement appears early, before $K$ becomes large, and the marginal gain becomes small after that. In other words, once the retained set already contains enough informative perturbed graphs, keeping more graphs adds little. The sampling-only control does not show the same pattern, which suggests that its additional graphs contribute little beyond decoding noise.

{   
\paragraph{Additional HumanEval and MATH experiments.}
We further evaluate the proposed cycle-aware filtering pipeline beyond the main MT-Bench setting. For task-prompt perturbation, we apply the same prompt-perturbation and truncation procedure to HumanEval and MATH. We also consider a judge-prompt perturbation setting with Prometheus, where the model responses are fixed and the comparison graphs are generated by perturbing the judge instruction instead of the task prompt. In both cases, the same graph-level selection and ranking pipeline is used. Overall, the proposed truncation method performs best in both settings. For task-prompt perturbation, Trunc50$\rightarrow$40 achieves the lowest distance on both HumanEval and MATH. For judge-prompt perturbation, Trunc25$\rightarrow$20 also gives the best results on both coding and math. These results provide additional evidence that cycle-aware filtering remains useful beyond the main MT-Bench task-prompt perturbation experiment.

\begin{table}[t]
\centering
\captionsetup{labelfont={color=black}, textfont={color=black}}
\arrayrulecolor{black}
\caption{Additional HumanEval and MATH experiments. Values are normalized Spearman distances reported in units of \(10^{-3}\).}
\label{tab:add-benchmark-judge}
\small
\begin{tabular}{lcc}
\toprule
Method & HumanEval & MATH \\
\midrule
\multicolumn{3}{l}{\textbf{Task-prompt perturbation}} \\
Trunc50$\rightarrow$40 & \textbf{82 {\tiny$\pm$ 1}} & \textbf{13 {\tiny$\pm$ 1}} \\
Block top-2 & 95 {\tiny$\pm$ 0} & 23 {\tiny$\pm$ 0} \\
Block top-1 & 100 {\tiny$\pm$ 0} & 31 {\tiny$\pm$ 0} \\
NoTrunc boot & 92 {\tiny$\pm$ 2} & 21 {\tiny$\pm$ 2} \\
ScoreWin80 & 93 {\tiny$\pm$ 1} & 20 {\tiny$\pm$ 1} \\
Random50$\rightarrow$40$\times$10 & 92 {\tiny$\pm$ 1} & 24 {\tiny$\pm$ 1} \\
SingleGraph1 & 96 {\tiny$\pm$ 0} & 18 {\tiny$\pm$ 0} \\
\midrule
\multicolumn{3}{l}{\textbf{Judge-prompt perturbation with Prometheus}} \\
Trunc25$\rightarrow$20 & \textbf{57 {\tiny$\pm$ 2}} & \textbf{31 {\tiny$\pm$ 1}} \\
Block top-2 & 74 {\tiny$\pm$ 0} & 32 {\tiny$\pm$ 0} \\
Block top-1 & 70 {\tiny$\pm$ 0} & 38 {\tiny$\pm$ 0} \\
NoTrunc boot & 66 {\tiny$\pm$ 2} & 43 {\tiny$\pm$ 2} \\
ScoreWin40 & 65 {\tiny$\pm$ 2} & 37 {\tiny$\pm$ 1} \\
Random50$\rightarrow$20$\times$10 & 66 {\tiny$\pm$ 1} & 44 {\tiny$\pm$ 1} \\
SingleGraph1/all10 & 60 {\tiny$\pm$ 0} & 44 {\tiny$\pm$ 0} \\
\bottomrule
\end{tabular}
\arrayrulecolor{black}
\captionsetup{labelfont=default, textfont=default}
\end{table}
}
\bibliographystyle{alpha} 
\bibliography{main} 

@article{achiam2023gpt,
  title={{GPT}-4 technical report},
  author={Achiam, Josh and Adler, Steven and Agarwal, Sandhini and Ahmad, Lama and Akkaya, Ilge and Aleman, Florencia Leoni and Almeida, Diogo and Altenschmidt, Janko and Altman, Sam and Anadkat, Shyamal and others},
  journal={arXiv preprint arXiv:2303.08774},
  year={2023}
}

@article{jiang2011statistical,
  title={Statistical ranking and combinatorial Hodge theory},
  author={Jiang, Xiaoye and Lim, Lek-Heng and Yao, Yuan and Ye, Yinyu},
  journal={Mathematical Programming},
  volume={127},
  number={1},
  pages={203--244},
  year={2011},
  publisher={Springer}
}

@article{touvron2023llama,
  title={Llama: Open and efficient foundation language models},
  author={Touvron, Hugo and Lavril, Thibaut and Izacard, Gautier and Martinet, Xavier and Lachaux, Marie-Anne and Lacroix, Timoth{\'e}e and Rozi{\`e}re, Baptiste and Goyal, Naman and Hambro, Eric and Azhar, Faisal and others},
  journal={arXiv preprint arXiv:2302.13971},
  year={2023}
}

@article{guo2025deepseek,
  title={Deepseek-{R}1: Incentivizing reasoning capability in {LLMs} via reinforcement learning},
  author={Guo, Daya and Yang, Dejian and Zhang, Haowei and Song, Junxiao and Wang, Peiyi and Zhu, Qihao and Xu, Runxin and Zhang, Ruoyu and Ma, Shirong and Bi, Xiao and others},
  journal={arXiv preprint arXiv:2501.12948},
  year={2025}
}

@article{zheng2023judging,
  title={Judging {LLM-as-a-judge} with {MT-Bench} and chatbot arena},
  author={Zheng, Lianmin and Chiang, Wei-Lin and Sheng, Ying and Zhuang, Siyuan and Wu, Zhanghao and Zhuang, Yonghao and Lin, Zi and Li, Zhuohan and Li, Dacheng and Xing, Eric and others},
  journal={Advances in Neural Information Processing Systems},
  volume={36},
  pages={46595--46623},
  year={2023}
}

@article{cobbe2021training,
  title={Training verifiers to solve math word problems},
  author={Cobbe, Karl and Kosaraju, Vineet and Bavarian, Mohammad and Chen, Mark and Jun, Heewoo and Kaiser, Lukasz and Plappert, Matthias and Tworek, Jerry and Hilton, Jacob and Nakano, Reiichiro and others},
  journal={arXiv preprint arXiv:2110.14168},
  year={2021}
}

@article{hendrycks2020measuring,
  title={Measuring massive multitask language understanding},
  author={Hendrycks, Dan and Burns, Collin and Basart, Steven and Zou, Andy and Mazeika, Mantas and Song, Dawn and Steinhardt, Jacob},
  journal={arXiv preprint arXiv:2009.03300},
  year={2020}
}

@article{srivastava2023beyond,
  title={Beyond the imitation game: Quantifying and extrapolating the capabilities of language models},
  author={Srivastava, Aarohi and Rastogi, Abhinav and Rao, Abhishek and Shoeb, Abu Awal Md and Abid, Abubakar and Fisch, Adam and Brown, Adam R and Santoro, Adam and Gupta, Aditya and Garriga-Alonso, Adri{\`a} and others},
  journal={Transactions on Machine Learning Research},
  year={2023}
}

@misc{li2023alpacaeval,
  title={Alpacaeval: An automatic evaluator of instruction-following models},
  author={Li, Xuechen and Zhang, Tianyi and Dubois, Yann and Taori, Rohan and Gulrajani, Ishaan and Guestrin, Carlos and Liang, Percy and Hashimoto, Tatsunori B},
  year={2023}
}

@inproceedings{chiang2024chatbot,
  title={Chatbot arena: An open platform for evaluating {LLMs} by human preference},
  author={Chiang, Wei-Lin and Zheng, Lianmin and Sheng, Ying and Angelopoulos, Anastasios Nikolas and Li, Tianle and Li, Dacheng and Zhu, Banghua and Zhang, Hao and Jordan, Michael and Gonzalez, Joseph E and others},
  booktitle={Forty-first International Conference on Machine Learning},
  year={2024}
}

@article{dubois2024length,
  title={Length-controlled alpacaeval: A simple way to debias automatic evaluators},
  author={Dubois, Yann and Galambosi, Bal{\'a}zs and Liang, Percy and Hashimoto, Tatsunori B},
  journal={arXiv preprint arXiv:2404.04475},
  year={2024}
}

@article{xu2025investigating,
  title={Investigating non-transitivity in {LLM-as-a-judge}},
  author={Xu, Yi and Ruis, Laura and Rockt{\"a}schel, Tim and Kirk, Robert},
  journal={arXiv preprint arXiv:2502.14074},
  year={2025}
}

@inproceedings{kenyon2007rank,
  title={How to rank with few errors},
  author={Kenyon-Mathieu, Claire and Schudy, Warren},
  booktitle={Proceedings of the Thirty-ninth Annual ACM Symposium on Theory of Computing},
  pages={95--103},
  year={2007}
}

@article{wang2025trustjudge,
  title={TRUSTJUDGE: Inconsistencies of {LLM-as-a-judge} and How to Alleviate Them},
  author={Wang, Yidong and Song, Yunze and Zhu, Tingyuan and Zhang, Xuanwang and Yu, Zhuohao and Chen, Hao and Song, Chiyu and Wang, Qiufeng and Wang, Cunxiang and Wu, Zhen and others},
  journal={arXiv preprint arXiv:2509.21117},
  year={2025}
}

@article{sun2023evaluating,
  title={Evaluating the zero-shot robustness of instruction-tuned language models},
  author={Sun, Jiuding and Shaib, Chantal and Wallace, Byron C},
  journal={arXiv preprint arXiv:2306.11270},
  year={2023}
}

@inproceedings{pezeshkpour2024large,
  title={Large language models sensitivity to the order of options in multiple-choice questions},
  author={Pezeshkpour, Pouya and Hruschka, Estevam},
  booktitle={Findings of the Association for Computational Linguistics: NAACL 2024},
  pages={2006--2017},
  year={2024}
}

@article{davidson1970extending,
  title={On extending the {Bradley-Terry} model to accommodate ties in paired comparison experiments},
  author={Davidson, Roger R},
  journal={Journal of the American Statistical Association},
  volume={65},
  number={329},
  pages={317--328},
  year={1970},
  publisher={Taylor \& Francis}
}

@article{bradley1952rank,
  title={Rank analysis of incomplete block designs: the method of paired comparisons},
  author={Bradley, Ralph Allan and Terry, Milton E},
  journal={Biometrika},
  volume={39},
  number={3-4},
  pages={324--345},
  year={1952},
  publisher={JSTOR}
}

@inproceedings{zhu2023promptrobust,
  title={Promptrobust: Towards evaluating the robustness of large language models on adversarial prompts},
  author={Zhu, Kaijie and Wang, Jindong and Zhou, Jiaheng and Wang, Zichen and Chen, Hao and Wang, Yidong and Yang, Linyi and Ye, Wei and Zhang, Yue and Gong, Neil and others},
  booktitle={Proceedings of the 1st ACM Workshop on Large AI Systems and Models with Privacy and Safety Analysis},
  pages={57--68},
  year={2023}
}

@inproceedings{qiang2024prompt,
  title={Prompt perturbation consistency learning for robust language models},
  author={Qiang, Yao and Nandi, Subhrangshu and Mehrabi, Ninareh and Ver Steeg, Greg and Kumar, Anoop and Rumshisky, Anna and Galstyan, Aram},
  booktitle={Findings of the Association for Computational Linguistics: EACL 2024},
  pages={1357--1370},
  year={2024}
}

@inproceedings{wang2023chatgpt,
  title={Is {ChatGPT} a good {NLG} evaluator? a preliminary study},
  author={Wang, Jiaan and Liang, Yunlong and Meng, Fandong and Sun, Zengkui and Shi, Haoxiang and Li, Zhixu and Xu, Jinan and Qu, Jianfeng and Zhou, Jie},
  booktitle={Proceedings of the 4th New Frontiers in Summarization Workshop},
  pages={1--11},
  year={2023}
}

@inproceedings{liu2023g,
  title={{G-Eval}: {NLG} evaluation using {GPT}-4 with better human alignment},
  author={Liu, Yang and Iter, Dan and Xu, Yichong and Wang, Shuohang and Xu, Ruochen and Zhu, Chenguang},
  booktitle={Proceedings of the 2023 Conference on Empirical Methods in Natural Language Processing},
  pages={2511--2522},
  year={2023}
}

@article{lee2023rlaif,
  title={{RLAIF} vs. {RLHF}: Scaling reinforcement learning from human feedback with ai feedback},
  author={Lee, Harrison and Phatale, Samrat and Mansoor, Hassan and Mesnard, Thomas and Ferret, Johan and Lu, Kellie and Bishop, Colton and Hall, Ethan and Carbune, Victor and Rastogi, Abhinav and others},
  journal={arXiv preprint arXiv:2309.00267},
  year={2023}
}

@article{zhu2023judgelm,
  title={Judgelm: Fine-tuned large language models are scalable judges},
  author={Zhu, Lianghui and Wang, Xinggang and Wang, Xinlong},
  journal={arXiv preprint arXiv:2310.17631},
  year={2023}
}

@inproceedings{kim2024prometheus,
  title={Prometheus 2: An open source language model specialized in evaluating other language models},
  author={Kim, Seungone and Suk, Juyoung and Longpre, Shayne and Lin, Bill Yuchen and Shin, Jamin and Welleck, Sean and Neubig, Graham and Lee, Moontae and Lee, Kyungjae and Seo, Minjoon},
  booktitle={Proceedings of the 2024 Conference on Empirical Methods in Natural Language Processing},
  pages={4334--4353},
  year={2024}
}

@article{upadhyay2024large,
  title={A large-scale study of relevance assessments with large language models: An initial look},
  author={Upadhyay, Shivani and Pradeep, Ronak and Thakur, Nandan and Campos, Daniel and Craswell, Nick and Soboroff, Ian and Dang, Hoa Trang and Lin, Jimmy},
  journal={arXiv preprint arXiv:2411.08275},
  year={2024}
}

@article{chen2025judgelrm,
  title={Judgelrm: Large reasoning models as a judge},
  author={Chen, Nuo and Hu, Zhiyuan and Zou, Qingyun and Wu, Jiaying and Wang, Qian and Hooi, Bryan and He, Bingsheng},
  journal={arXiv preprint arXiv:2504.00050},
  year={2025}
}

@inproceedings{daynauth2025ranking,
  title={Ranking unraveled: Recipes for {LLM} rankings in head-to-head ai combat},
  author={Daynauth, Roland and Clarke, Christopher and Flautner, Krisztian and Tang, Lingjia and Mars, Jason},
  booktitle={Proceedings of the 63rd Annual Meeting of the Association for Computational Linguistics},
  pages={26078--26091},
  year={2025}
}

@inproceedings{gao2025re,
  title={Re-evaluating automatic {LLM} system ranking for alignment with human preference},
  author={Gao, Mingqi and Liu, Yixin and Hu, Xinyu and Wan, Xiaojun and Bragg, Jonathan and Cohan, Arman},
  booktitle={Findings of the Association for Computational Linguistics: NAACL 2025},
  pages={4605--4629},
  year={2025}
}

@article{coleman1966possibility,
  title={The possibility of a social welfare function},
  author={Coleman, James S},
  journal={The American Economic Review},
  volume={56},
  number={5},
  pages={1105--1122},
  year={1966},
  publisher={JSTOR}
}

@article{negahban2017rank,
  title={Rank Centrality: Ranking from Pairwise Comparisons},
  author={Negahban, Sahand and Oh, Sewoong and Shah, Devavrat},
  journal={Operations Research},
  volume={65},
  number={1},
  pages={266--287},
  year={2017}
}

@article{elo1978rating,
  title={The rating of chessplayers, past and present},
  author={Elo, Arpad E},
  year={1978}
}

@inproceedings{chaudhary2024towards,
  title={Towards understanding the robustness of llm-based evaluations under perturbations},
  author={Chaudhary, Manav and Gupta, Harshit and Bhat, Savita and Varma, Vasudeva},
  booktitle={Proceedings of the 21st International Conference on Natural Language Processing (ICON)},
  pages={197--205},
  year={2024}
}

@article{agrawal2025enhancing,
  title={Enhancing {LLM }robustness to perturbed instructions: An empirical study},
  author={Agrawal, Aryan and Alazraki, Lisa and Honarvar, Shahin and Rei, Marek},
  journal={arXiv preprint arXiv:2504.02733},
  year={2025}
}

@article{zermelo1929berechnung,
  title={Die berechnung der turnier-ergebnisse als ein maximumproblem der wahrscheinlichkeitsrechnung},
  author={Zermelo, Ernst},
  journal={Mathematische Zeitschrift},
  volume={29},
  number={1},
  pages={436--460},
  year={1929},
  publisher={Springer}
}

@article{turner2012bradley,
  title={{Bradley-Terry} models in {R}: the {BradleyTerry2} package},
  author={Turner, Heather and Firth, David},
  journal={Journal of Statistical Software},
  volume={48},
  pages={1--21},
  year={2012}
}

@article{yan2015asymptotic,
  title={Asymptotic normality in the maximum entropy models on graphs with an increasing number of parameters},
  author={Yan, Ting and Zhao, Yunpeng and Qin, Hong},
  journal={Journal of Multivariate Analysis},
  volume={133},
  pages={61--76},
  year={2015},
  publisher={Elsevier}
}

@article{spearing2023modeling,
  title={Modeling intransitivity in pairwise comparisons with application to baseball data},
  author={Spearing, Jess and Tawn, Jonathan and Irons, David and Paulden, Tim},
  journal={Journal of Computational and Graphical Statistics},
  volume={32},
  number={4},
  pages={1383--1392},
  year={2023},
  publisher={Taylor \& Francis}
}

@article{stanley2011enumerative,
  title={Enumerative combinatorics volume 1 second edition},
  author={Stanley, Richard P},
  journal={Cambridge Studies in Advanced Mathematics},
  year={2011}
}

@article{mallows1957non,
  title={Non-null ranking models. I},
  author={Mallows, Colin L},
  journal={Biometrika},
  volume={44},
  number={1/2},
  pages={114--130},
  year={1957},
  publisher={JSTOR}
}

@article{hunter2004mm,
  title={MM algorithms for generalized {Bradley-Terry} models},
  author={Hunter, David R},
  journal={The Annals of Statistics},
  volume={32},
  number={1},
  pages={384--406},
  year={2004},
  publisher={Institute of Mathematical Statistics}
}

@misc{openai2025gpt5,
  author = {{OpenAI}},
  title = {Introducing {GPT}-5},
  year = {2025},
  note = {August 7, 2025}
}

@inproceedings{bai2024mt,
  title={{Mt-Bench-101}: A fine-grained benchmark for evaluating large language models in multi-turn dialogues},
  author={Bai, Ge and Liu, Jie and Bu, Xingyuan and He, Yancheng and Liu, Jiaheng and Zhou, Zhanhui and Lin, Zhuoran and Su, Wenbo and Ge, Tiezheng and Zheng, Bo and others},
  booktitle={Proceedings of the 62nd Annual Meeting of the Association for Computational Linguistics},
  pages={7421--7454},
  year={2024}
}

@article{laban2025llms,
  title={{LLMs} get lost in multi-turn conversation},
  author={Laban, Philippe and Hayashi, Hiroaki and Zhou, Yingbo and Neville, Jennifer},
  journal={arXiv preprint arXiv:2505.06120},
  year={2025}
}

@inproceedings{zhang2025reviseval,
  title={{RevisEval}: Improving {LLM-as-a-judge} via Response-Adapted References},
  author={Zhang, Qiyuan and Wang, Yufei and Yu, Tiezheng and Jiang, Yuxin and Wu, Chuhan and Li, Liangyou and Wang, Yasheng and Jiang, Xin and Shang, Lifeng and Tang, Ruiming and Lyu, Fuyuan and Ma, Chen},
  booktitle={The Thirteenth International Conference on Learning Representations},
  year={2025}
}

@article{mizrahi2024state,
  title={State of what art? a call for multi-prompt llm evaluation},
  author={Mizrahi, Moran and Kaplan, Guy and Malkin, Dan and Dror, Rotem and Shahaf, Dafna and Stanovsky, Gabriel},
  journal={Transactions of the Association for Computational Linguistics},
  volume={12},
  pages={933--949},
  year={2024},
  publisher={MIT Press 255 Main Street, 9th Floor, Cambridge, Massachusetts 02142, USA~…}
}

@article{maia2024efficient,
  title={Efficient multi-prompt evaluation of {LLMs}},
  author={Maia Polo, Felipe and Xu, Ronald and Weber, Lucas and Silva, M{\'\i}rian and Bhardwaj, Onkar and Choshen, Leshem and de Oliveira, Allysson and Sun, Yuekai and Yurochkin, Mikhail},
  journal={Advances in Neural Information Processing Systems},
  volume={37},
  pages={22483--22512},
  year={2024}
}

@misc{openai2025gpt52,
  author = {{OpenAI}},
  title = {Introducing {GPT}-5.2},
  year = {2025},
  note = {December 11, 2025}
}

@article{chernoff1952measure,
  title={A measure of asymptotic efficiency for tests of a hypothesis based on the sum of observations},
  author={Chernoff, Herman},
  journal={The Annals of Mathematical Statistics},
  pages={493--507},
  year={1952},
  publisher={JSTOR}
}

@book{ash1992information,
  author    = {Ash, Robert B.},
  title     = {Information Theory},
  publisher = {Dover Publications},
  address   = {New York, NY},
  year      = {1992}
}

@techreport{page1999pagerank,
  title={The PageRank citation ranking: Bringing order to the web.},
  author={Page, Lawrence and Brin, Sergey and Motwani, Rajeev and Winograd, Terry},
  year={1999},
  institution={Stanford infolab}
}

@article{spearman1904proof,
  title={The Proof and Measurement of Association Between Two Things},
  author={Spearman, Charles},
  journal={The American Journal of Psychology},
  volume={15},
  number={1},
  pages={72--101},
  year={1904},
  publisher={University of Illinois Press}
}

@inproceedings{shi2025judging,
  title={Judging the judges: A systematic study of position bias in llm-as-a-judge},
  author={Shi, Lin and Ma, Chiyu and Liang, Wenhua and Diao, Xingjian and Ma, Weicheng and Vosoughi, Soroush},
  booktitle={Proceedings of the 14th International Joint Conference on Natural Language Processing and the 4th Conference of the Asia-Pacific Chapter of the Association for Computational Linguistics},
  pages={292--314},
  year={2025}
}
\end{document}